\newcommand{\mathd}{\mathrm{d}}
\newcommand{\tmop}[1]{\ensuremath{\operatorname{#1}}}
\newcommand{\tmrsup}[1]{\textsuperscript{#1}}
\newcommand{\tmtextbf}[1]{\text{{\bfseries{#1}}}}
\newcommand{\tmtextit}[1]{\text{{\itshape{#1}}}}
\newenvironment{tmparmod}[3]{\begin{list}{}{\setlength{\topsep}{0pt}\setlength{\leftmargin}{#1}\setlength{\rightmargin}{#2}\setlength{\parindent}{#3}\setlength{\listparindent}{\parindent}\setlength{\itemindent}{\parindent}\setlength{\parsep}{\parskip}} \item[]}{\end{list}}
\newcounter{tmcounter}
\newcommand{\mudata}{\mu_{\tmop{data}}}
\newcommand{\musample}{\hat{\mu}_{\tmop{sample}}}
\newcommand{\R}{\mathbb R}
\newcommand{\Id}{\mathrm{Id}}
\newcommand{\Normal}{\mathcal{N}}
\definecolor{darkred}{rgb}{.7,0,0}
\definecolor{darkgreen}{rgb}{.15,.55,0}
\definecolor{darkblue}{rgb}{0,0,0.7}
\newcommand{\jp}[1]{{#1}}
\newcommand{\jpp}[1]{{#1}}
\begin{document}
\title{Infinite-Dimensional Diffusion Models}

\author{%
 \name{Jakiw Pidstrigach} \email{jakiw.pidstrigach@stats.ox.ac.uk}\\
 \addr  Institut für Mathematik\\
        Universität Potsdam\\
	Karl-Liebknecht-Str. 24/25\\
        14476 Potsdam, Germany 
 \AND
 \name{Youssef Marzouk} \email{ymarz@mit.edu}\\
 \addr  Statistics and Data Science Center \\
        Massachusetts Institute of Technology\\
        77 Massachusetts Ave\\ 
        Cambridge, MA 02139, USA
  \AND
 \name{Sebastian Reich} \email{sereich@uni-potsdam.de}\\
 \addr  Institut für Mathematik\\
        Universität Potsdam\\
	Karl-Liebknecht-Str. 24/25\\
        14476 Potsdam, Germany
  \AND
 \name{Sven Wang} \email{sven.wang@hu-berlin.de}\\
 \addr  Institut für Mathematik \\
        Humboldt-Universität zu Berlin\\
        Rudower Chaussee 25\\
        12489 Berlin, Germany
}
\editor{Chris Oates}

\maketitle

\begin{abstract}%
Diffusion models have had a profound impact on many application areas, including those where data are intrinsically infinite-dimensional, such as images or time series. The standard approach is first to discretize and then to apply diffusion models to the discretized data. While such approaches are 
practically appealing,
the performance of the resulting algorithms typically deteriorates as discretization parameters are refined. In this paper, we instead directly formulate diffusion-based generative models in infinite dimensions and apply them to the generative modelling of \emph{functions}. We prove that our formulations are well posed in the infinite-dimensional setting and provide \emph{dimension-independent} distance bounds from the sample to the target measure. Using our theory, we also develop guidelines for the design of infinite-dimensional diffusion models. For image distributions, these guidelines are in line with current canonical choices.
For other distributions, however, we can improve upon these canonical choices. 
We demonstrate these results 
both theoretically and empirically, by applying the algorithms to data distributions on manifolds and to distributions arising in Bayesian inverse problems or simulation-based inference.
\end{abstract}
\begin{keywords}%
  diffusion models, score-based generative models, 
  infinite-dimensional analysis, hilbert spaces, bayesian inverse problems, function space
\end{keywords}

\section{Introduction}
Diffusion models (also score-based generative models or SGMs) \citep{sohl2015deep, song2021scorebased} have recently shown 
great empirical success across a variety of domains. 
In many applications, ranging from image generation \citep{nichol2021improved,dhariwal2021diffusion},
audio \citep{DBLP:conf/iclr/KongPHZC21}, and time series \citep{tashiro2021csdi} to inverse problems \citep{kadkhodaie2021stochastic,batzolis2021conditional},
the signal to be modeled
is actually a discretization of an \emph{infinite-dimensional object} (i.e., a function of space and/or time).
In such a setting, it is natural to apply the algorithm in high dimensions,
corresponding to a fine discretization and a better approximation of the true quantity. Yet theoretical studies of current diffusion models suggest that performance guarantees deteriorate with increasing dimension \citep{chen2022improved,debortoli2022convergence}.

When studying a discretization of an infinite-dimensional object, many application areas  have found great success in directly studying the infinite-dimensional limit and only discretizing the problem in the last step, when implementing an algorithm on a computer. By accurately understanding the infinite-dimensional problem, one can gain valuable insights on how it should be discretized. Sometimes, this leads to algorithms that are \emph{dimension-independent} in that their performance does not degrade when one chooses a finer discretization.

Important areas where it is now standard to study the infinite-dimensional object directly are, for example, Bayesian inverse problems \citep{stuart2010inverse} and nonparametric statistics \citep{T09, gine_nickl_2015}. Accordingly, many Markov chain Monte Carlo algorithms used for sampling, such as the Metropolis-adjusted Langevin \citep{cotter2013statsci} or Hamiltonian Monte Carlo \citep{beskos2011hybrid} algorithms, have successfully been generalized to the infinite-dimensional setting; in addition to being an empirical success, these efforts have also led to dimension-independent convergence guarantees (see \citet{hairer2014spectral, bou2021two, 10.1093/imanum/drac052}).

In the common implementation of the diffusion model algorithm, one first discretizes the data (for example images to pixels or wavelet coefficients, or functions to their evaluations on a grid) and then applies the algorithm in $\mathbb{R}^D$, as described in \citet{song2021scorebased}. When doing so, one does not consider the implications of the discretization dimension $D$. In particular, if there is no well-defined limiting algorithm as $D \to \infty$, one cannot expect the algorithm's performance to be stable as $D$ becomes large. 
This instability can potentially be mitigated by \emph{defining the diffusion model algorithm directly in infinite dimensions}, and studying its properties there. Once the algorithm is modified so that it exists in infinite dimensions, the discretized formulations that are implementable on a computer will possess \emph{dimension-independent} properties. 

\subsection{Challenges in Extending Diffusion Models to Infinite Dimensions}\label{sec:finite_dim_diff_models}
Let us briefly recall the well-known finite-dimensional diffusion model setting. 
A forward SDE, typically an Ornstein--Uhlenbeck process, is used to diffuse the data $\mudata$:
\begin{equation}
	\mathd X_t = - \frac{1}{2} X_t \mathd t + \mathd W_t, \qquad X_0 \sim \mudata \, . \label{fd_forward_sde}
\end{equation}
The densities of its marginal distributions are denoted by $p_t$. The following so-called ``reverse SDE'' will traverse the marginals of $X_t$ backward:
\begin{equation}
	\mathd Y_t = \frac{1}{2} Y_t \mathd t + \nabla \log p_{T-t}(Y_t)\mathd t + \mathd W_t, \qquad Y_0 \sim p_T, \label{fd_reverse_sde}
\end{equation}
where $W_t$ is a different Wiener process/Brownian motion than in \eqref{fd_forward_sde}. In particular, $Y_T \sim p_{T-T} = \mudata$\jp{, where by a slight abuse of notation we denote both the density and the measure itself by $\mudata$}. The goal of the diffusion model algorithm is to approximate paths of $Y_t$ and use the realizations at time $T$ as approximate samples from $\mudata$. Since the marginals of the forward SDE $X_t$ converge to $\mathcal{N}(0, I)$ at an exponential speed, one can approximate the unknown term $p_T$ by $\mathcal{N}(0, I)$. Furthermore, $\nabla \log p_t$ can be approximated using score-matching techniques \citep{vincent2011connection}. There are three main challenges in generalizing this construction to infinite dimensions, which we highlight next.

\subsubsection{Choice of the noising process}
In finite dimensions, the process $(W_t:t\ge 0)$ in \eqref{fd_forward_sde} is standard Brownian motion. Therefore, the noise increments for different coordinates $i$ and $j$, e.g.,  $W_t^i - W_s^i$ and $W_t^j - W_s^j$, are independent and identically distributed. In infinite dimensions, one can associate a white-noise process $W_t^U$ to each Hilbert space $U$, with the property that the coordinates of $W_t^U$ in an orthonormal basis of $U$ are independent and identically distributed. Therefore, one has to determine \textit{which} white-noise process (i.e., which Hilbert space) to choose in the infinite-dimensional limit. 

The common case of discretizing the infinite-dimensional target object to $\mathbb{R}^D$, e.g., discretizing a function onto a grid or real-life scenery into image pixels, and then choosing $W_t$ as a standard Brownian motion on $\mathbb{R}^D$, means that at each grid point we will add independent noise values. In particular, as the grid grows finer, even for arbitrarily close values $a \approx b$, the evaluations $X_t(a)$ and $X_t(b)$ will be perturbed with independent noise. The limiting Wiener process will be $W_t^{L^2}$, i.e., the process associated to $U = L^2$, 
also called \emph{space-time white noise}. We have depicted space-time white noise in Figure \ref{fig:space_time_white_noise}. 

In infinite dimensions, however, the choice of the noise process is a subtle issue, as it has a crucial impact on the space on which the diffusion process is supported. For instance, the above `canonical' choice of space-time white noise will lead to $X_t$ and $W_t$ having such irregular samples that they are \textit{not} supported in $L^2$ anymore. While we will also study such processes due to their widespread use in practice, we will see that other choices can be beneficial from a theoretical as well as a practical standpoint. 

\subsubsection{Score function}
The score function $\nabla \log p_t$ in \eqref{fd_reverse_sde} is typically defined via the Lebesgue density $p_t$ of the law of $X_t$. Yet in infinite-dimensional vector spaces, the Lebesgue measure no longer exists; hence one can no longer specify the score functions in the same manner. Therefore, a key question is: How does one define and make sense of $\nabla \log p_t$ without relying on the notion of Lebesgue density, and still define an algorithm which provably samples from the correct measure?

\subsubsection{Denoising score matching objective}
The score $\nabla \log p_t$ is typically approximated by a neural network $\tilde{s}(t,x)$ in some chosen neural network class, and identified by minimizing the denoising score-matching objective,
\[
    \text{Loss}(\tilde{s}) = \int_0^T \mathbb{E}[\|\nabla \log p_t(X_t) - \tilde{s}(t, X_t)\|^2_K] \mathd t,
\]
over this class. Similarly to the choice of the noising process, it is not clear which Hilbert space $K$ and norm $\|\cdot\|_K$ should be used for the analogous objective in infinite dimensions.

\subsection{Contributions}

Our paper, for the first time, formulates the diffusion model algorithm directly on  infinite-dimensional spaces, and proves that this formulation is well-posed and satisfies crucial theoretical guarantees.

To formulate the reverse SDE in infinite dimensions, we must find a way to handle the $\nabla \log p_t$ term, as discussed in the last section. We do this \emph{by replacing the score with a conditional expectation}, in Definition \ref{def:reverse_drifts}. This definition then carries over to the infinite-dimensional case. Furthermore, we are able to show under which circumstances one can generalize the denoising score matching objective to identify the neural network $\tilde{s}(t,x)$ in Lemma \ref{lemma score matching}.

To justify approximating the reverse SDE to obtain samples from $\mudata$, we proceed in multiple steps. First, in Theorem \ref{theorem SDEs well defined}, we show that the time-reversal of the forward SDE also satisfies an appropriate reverse SDE. The terminal condition of the reverse SDE will have distribution $\mudata$. To simulate this reverse SDE in practice, however, both its initial conditions and drift must be approximated. In Lemma~\ref{lemma score matching} we establish under which conditions we can use the common denoising score matching objective to approximate the drift of the reverse SDE in infinite dimensions.

Second, we prove that the solution to such an SDE exists for general initial conditions---and in particular, for our approximate initial conditions. Moreover, we prove that the solution is unique; otherwise we could be approximating a different reverse SDE solution that does not sample $\mudata$ at the terminal time $T$. We provide rigorous uniqueness results under two distinct scenarios: first, in Theorem \ref{thm:uniqueness_manifold}, for $\mudata$ which satisfies a manifold hypothesis; and second, in Theorem \ref{thm:uniqueness_gaussian}, under the assumption that $\mudata$ has density with respect to a Gaussian measure. The first case is relevant for the typical use cases of diffusion models, as image data are usually supported on lower-dimensional manifolds or other substructures.
The second case is relevant when, for example, applying diffusion models to Bayesian inverse problems or related problems of simulation-based inference. 

Finally, building upon the preceding results, we establish \emph{dimension-independent} convergence rates in Theorem~\ref{theorem distance}. Our bound is quantitative and shows how the distance relies on different choices made in the diffusion model algorithm.

The theory described above guides choices for the noise process $W_t^U$ and the loss norm $\|\cdot\|_K$. Both will depend on the properties of $\mudata$. In Section \ref{sec:dms_in_infinite_dimensions} we 
{discuss the implications of the theory for implementing diffusion models in infinite dimensions}. In Section \ref{sec:guidance}, we work out guidelines for choosing $W_t^U$ and $K$ for a given $\mudata$. In Section \ref{sec:guidance_wndm}, we study the case of image distributions and see that our theorems indeed apply for the typical properties of $\mudata$ one expects in that setting; hence, we have proven 
that the standard diffusion model algorithm
is well-defined for image distributions as $D \to \infty$. 
Moreover, we see that the choices $W_t^U = W_t^{L^2}$ and $K=L^2$ actually follow the guidelines developed in the preceding subsection. Therefore, the canonical choices made for diffusion models seem to be good default choices for image distributions. 

For $\mudata$ with different smoothness properties, however, the insights from our theory dictate other choices for $U$ and $K$. In Section \ref{sec:numerics} we apply our guidelines to two specific data distributions $\mudata$. 
Our principled algorithms are compared to the common ad hoc implementation of diffusion models. These numerical findings confirm our theoretical insights: our modifications outperform the canonical choices, and the ways in which they do can be explained by the discussion in Section \ref{sec:dms_in_infinite_dimensions}.

\subsection{Related Work}
The two efforts most related to ours are the concurrent works \citet{hagemann2023multilevel} and \citet{lim2023score}. 

In \citet{hagemann2023multilevel}, methods are developed to train diffusion models simultaneously on multiple discretization levels of (infinite-dimensional) functions. They build upon our Wasserstein distance bounds to show that their multilevel approach is consistent. 

\citet{lim2023score} are also able to generalize the trained model over multiple discretization levels. They propose to run the annealed Langevin algorithm in infinite dimensions, and use existing results for infinite-dimensional Langevin algorithms to justify their algorithms theoretically. The forward-reverse SDE framework is not treated. 

Both of these efforts encounter difficulties when defining the infinite-dimensional score. \citet{hagemann2023multilevel} circumvent this issue by only treating time-reversals of the \textit{discretized} forward SDE. \citet{lim2023score}, on the other hand, only analyze the case in which the measure is supported on the Cameron--Martin space of $W_t^U$. One can then simplify the problem by working with densities of $X_t$ with respect to Gaussian measure. 
From a practical point of view, both of these works employ Fourier neural operators \citep{li2020fourier} as their neural network architecture, while we work directly in the space domain and use the popular U-Net architecture for our neural networks. 

\jp{In \cite{kerrigan2022diffusion} an infinite-dimensional time-discrete version of the diffusion model algorithm is proposed. It is not studied whether the proposed algorithm is well defined in infinite dimensions.}

Other works also transform  data into a representation that is well suited to functions, e.g., by applying a wavelet \citep{guth2022wavelet, phung2022wavelet} or spectral  \citep{phillips2022spectral} transform.
After the transformation, however, these works employ the finite-dimensional formulation of the diffusion model algorithm; infinite-dimensional limits are not treated. We discuss how different spatial discretization schemes can be related to our results in Section \ref{sec:algorithm}.

Lastly, the subject of convergence of diffusion models to the target distribution has been a very active field of research recently; see \citet{chen2022improved, chen2023sampling, debortoli2022convergence, lee2022convergence, yang2022convergence}. In all these works, however, bounds on the distance to the target measure depend at least linearly on the discretization dimension $D$, rendering them vacuous in infinite dimensions.

\subsection{A Primer on Probability in Hilbert Spaces}\label{section gaussian measures}
In this section, we will give a short summary of key concepts relating to probability theory on infinite-dimensional (Hilbert) spaces which are required to study the infinite-dimensional formulation of SGMs rigorously. For an extensive introduction to this topic, see \citet[Chapter 3]{hairer2009spde}. 

\subsubsection{Gaussian measures on Hilbert spaces}

Let $(H, \langle \cdot, \cdot \rangle_H)$ be a separable Hilbert space. We then say that a random variable $X$ taking values in $H$ is Gaussian if, for every $v\in H$, the real-valued random variable $\langle v, X\rangle_H$ is also Gaussian. If the $\langle v, X\rangle_H$ have mean zero, $X$ is \textit{centered}. The covariance operator of $X$ is the symmetric, positive-definite operator $C:H\to H$ defined through
\begin{equation}
\langle g,Ch\rangle_H = \textrm{Cov} \left ( \langle X,g\rangle_H, \langle X,h\rangle_H \right ) = \mathbb E_X[\langle X,g\rangle_H \langle X,h\rangle_H].  
\label{equ:definition_C}
\end{equation}
We denote the law of $X$ in this case by $\Normal(0,C)$. Since $X$ takes values in $H$, $C$ is guaranteed to be compact \citep{hairer2009spde}. Therefore, there exists an orthonormal basis $(e_i:i\ge 1)$ of eigenvectors of $C$ satisfying $Ce_i = c_i e_i$. Fixing this basis, the second moment of
$X$ is given by
\[ \mathbb{E} [\| X \|_H^2] =\mathbb{E} \left[ \sum_{i = 1}^{\infty} \langle
X, e_i \rangle_H^2 \right] = \sum_{i = 1}^{\infty} \mathbb{E} [\langle X,
e_i \rangle^2_H] = \sum_{i = 1}^{\infty} \langle e_i, Ce_i \rangle_H =
\sum_{i = 1}^{\infty} c_i . \]
Since a Gaussian measure is supported on $H$ if and only if its second moment
on $H$ is finite \citep{hairer2009spde}, and $X$ takes values in $H$, the trace of $C$, 
$\tmop{tr} (C) = \sum_{i = 1}^{\infty} c_i$, will be finite. We then also say that $C$ is of \tmtextit{trace class}. Note that this is not the case if one would choose $C = \text{Id}$, since its trace is infinite. However, one could always just consider a larger space  $H' \supset H$, such that $H'$ supports $\mu \coloneqq \Normal(0,C)$ and on which $C$ would then have finite trace. 

\subsubsection{The Cameron--Martin space}\label{section cameron martin space}

The covariance operator $C$ plays a special role in that it characterizes the `shape' of the Gaussian measure $\Normal(0, C)$. Indeed, one may define another canonical inner product space $U$ associated to $C$, which is a (compactly embedded) subspace $U\subseteq H$ called the \textit{Cameron--Martin space} of $\Normal(0,C)$. Intuitively speaking, with respect to the geometry of $U$, a random variable $X\sim \mathcal N(0,C)$ will have `identity' covariance. Assuming that $C$ is non-degenerate, the Cameron--Martin space is defined via the inner product
\[ \langle g,h \rangle_U = \langle g, C^{-1} h \rangle_H =\langle C^{-1/2}g, C^{-1/2} h \rangle_H. \] Since $C^{-1}$ is unbounded, $U$ is indeed a smaller space than $H$; more specifically, one can show that $U=C^{1/2}H$.
In order to generate a realization of $X\sim \Normal(0,C)$, one may simply draw i.i.d.\ coefficients $(\xi_i \sim \Normal(0,1) : i\ge 1)$ and set $X= \sum_{i=1}^N c_i^{1/2}\xi_i e_i$ 
where $(c_i,e_i)_{i= 1}^\infty$ are the eigenpairs of $C$.\footnote{This is also called the Karhunen--Lo\`eve expansion of $X$, and in finite dimensions relates to the simple fact that $C^{-1/2}X\sim \Normal(0,\Id)$.}

It is important to note that $X$ almost surely does \textit{not} take values in $U$. As an example, let $H=L^2([0,1])$, and consider a one-dimensional Brownian motion process $(B_t:t\in [0,1])$. Of course, $B\in H$ almost surely. The Cameron--Martin space of $B$, however, is given as the space $U=H^1([0,1])$ of weakly differentiable functions on $[0,1]$. Since the sample paths of $B$ are almost surely nowhere differentiable \citep{karatzas1991brownian}, we conclude that almost surely $B\notin U$.\footnote{Here, we have used that functions in $H^1$ are absolutely continuous, and therefore almost everywhere differentiable on $[0,1]$.} Nevertheless, $U$ does indicate the regularity of the Gaussian process at hand: the more regular $U$, the more regular the draws from the corresponding Gaussian measure.

\subsubsection{$C$-Wiener processes in Hilbert spaces} The standard Brownian motion in $\mathbb{R}^D$ has increments $W_{t+\Delta t} - W_t \sim \mathcal{N}(0, \Delta t \text{I}_D)$. However, for the case of a general infinite-dimensional Hilbert space $H$,  the meaning of an identity covariance matrix depends on the choice of the scalar product with respect to which the Gaussian measure has identity covariance.
Therefore, we will from now on fix two Hilbert spaces: the Cameron--Martin space $U$, with respect to which the increments of the Wiener process would have covariance $\Delta t I$, and a larger space $H$ on which $W_t^U$ takes values and has covariance operator $C$, i.e.
\[
    W_{t+\Delta t}^U - W_t^U \sim \mathcal{N}(0, \Delta t C).
\]
In general, we will pick $H$ large enough so that all of our objects take values in it (the target measure $\mudata$ as well as the $C$-Wiener process $W_t^U$). The choice of $U$ can then also be seen as being equivalent to choosing a covariance operator $C$ of $W_t^U$ on $H$.

\subsubsection{Interpretation in finite dimensions}
Given a Gaussian distribution $\mathcal{N}(0, C)$ on $\mathbb{R}^D$, its Cameron--Martin space will be again $\mathbb{R}^D$, but equipped with the scalar product
\[
    \langle x, y \rangle_U = \langle C^{-1/2} x, C^{-1/2} y \rangle_{\mathbb{R}^D} = x^T C^{-1} y.
\]
Plugging $U$ into definition \eqref{equ:definition_C}, one sees that $X$ has an identity covariance matrix with respect to $U$. If $X \sim \mathcal{N}(0, C)$, then it can also be represented as $\sqrt{C} Z$, for $Z\sim \mathcal{N}(0, \text{I}_D)$. Similarly, a $C$-Wiener process with increments $\mathcal{N}(0, C)$ in finite dimensions can be constructed by using a standard Brownian motion $W_t$ on $\mathbb{R}^D$ and multiplying it with $\sqrt{C}$. 

Therefore, in finite dimensions, most of the discussions above can be simplified to choosing covariance matrices and representing objects of interest in terms of standard Gaussians ($Z$) or Brownian motions ($W_t$). The main technical difficulties in infinite dimensions arise because one has to choose a Hilbert space $H$ on which $Z$ would have the standard normal distribution, 
and because $Z$ will not take values in $H$. 

However, in infinite dimensions, one can still understand most concepts that relate to the choice of Gaussian measures by simply thinking about some large Hilbert space $H'$ in which all quantities of interest take values and then identifying Gaussian random variables with their covariance operators on this space.


\section{The Infinite-Dimensional Forward and Reverse SDEs} \label{sec:inf_dimensional_forward_backward_sde}
We will now formulate the forward and reverse SDEs of our generative model in infinite dimensions, and show that the reverse SDE is, in fact, well-posed with the correct terminal distribution. 

To this end, let $\mudata$ be our target measure, supported on a separable Hilbert space $(H,\langle\cdot,\cdot\rangle)$. Our goal is to generate samples from $\mudata$, which is done by first adding noise to given samples from $\mudata$ using a forward SDE and then generating new samples using a learned reverse SDE \citep{song2021scorebased}. 

\subsection{Forward SDE}\label{section densities in infinite dimensions}\label{sec:density_forward_sde}
We now define the infinite-dimensional forward SDE used to `diffuse' the initial measure $\mudata$. As noted in Section \ref{section gaussian measures}, there is no natural Brownian motion process in infinite dimensions; instead there is one white noise process $W_t^U$ for each Hilbert space $U$. From now on, we fix some Cameron--Martin space $U$, together with its Gaussian measure $\mathcal{N}(0, C)$. Furthermore, let $H$ be large enough to not only support $\mudata$, but also $\mathcal{N}(0, C)$. In practice, an example  would be to choose a Gaussian process (GP) with a Mat\'ern covariance $\mathcal{N}(0,C)$ (which implicitly defines $U$). As the embedding space $H$, one could for example choose $L^2$. Then $W_t^U$ would have increments that are samples from a Mat\'ern GP.

We then define the forward SDE as
\begin{equation}
	\mathd X_t = - \frac{1}{2} X_t \mathd t + \mathd W_t^U = - \frac{1}{2} X_t \mathd t + \sqrt{C} \mathd W_t^H, \qquad X_0 \sim \mudata \, .  \label{inf-fwd}
\end{equation}
The marginal distributions of $X_t$ will converge to the stationary distribution $\Normal (0, C)$ as $t \to \infty$ \citep[Theorem 11.11]{da2014stochastic}. We will denote the marginal distributions of $X_t$ by $\mathbb{P}_t$.  

The choice of $U$, or equivalently $C$, can be guided by the theory that we will develop and strongly impacts empirical performance. We discuss these choices in Section \ref{sec:dms_in_infinite_dimensions}.

\subsection{Definition of the Score Function}\label{section score}
Analogously to score-based generative models in finite dimensions, we now wish to define the reverse SDE corresponding to \eqref{inf-fwd}; this SDE on $H$ should approximately transform $\Normal(0,C)$ to $\mudata$. This can be achieved by time-reversing the SDE \eqref{inf-fwd}. In the finite-dimensional case, the drift of the time reversal SDE involves the score function $\nabla \log p_t$ (see \eqref{fd_reverse_sde}), where $p_t$ is the density of $\mathbb{P}_t$ with respect to Lebesgue measure. More precisely, in the finite-dimensional case $H=\R^D$, the reverse SDE to the Ornstein--Uhlenbeck process
\begin{equation*}
	\mathd X_t = - \frac{1}{2} X_t \mathd t + \sqrt{C}\mathd W_t
\end{equation*}
is given by
\begin{equation*}
	\mathd Y_t = \frac{1}{2} Y_t \mathd t + C \nabla \log p_{T-t}(Y_t)\mathd t + \sqrt{C}\mathd W_t \, ;
\end{equation*}
see \citep{haussmann1986time}. In the infinite-dimensional case, the density $p_t$ is no longer well-defined, since there is no Lebesgue measure. Hence, we need another way to make sense of the score function. Interestingly, in finite dimensions, there is an alternative way to express $C \nabla_H \log p_t$ via conditional expectations which is amenable to generalization to infinite dimensions. 
\begin{lemma}
	\label{lemma score rewrite} Assume the finite-dimensional setting $H=\R^D$. Denote by $p_t$ the Lebesgue density of $X_t$, where $X_{[0, T]}$ is a solution to \eqref{inf-fwd}. Then, we can express the function $C \nabla \log p_t$ as
	\begin{align*}
		C \nabla \log p_t (x) & = -\frac{1}{1 - e^{- t}} \left(
		\mathbb{E} \left[ X_t - e^{- \frac{t}{2}} X_0  \mid  X_t = x \right] \right)\\
		&=
		- \frac{1}{1 - e^{- t}} \left( x - e^{- \frac{t}{2}} \mathbb{E} [X_0
		\mid X_t = x] \right)
	\end{align*}
	for $t > 0$, where $\mathbb{E} [f(X_\tau) \mid X_t = x]$ is the conditional expectation of the function $f(X_\tau)$ given $X_t = x$ and $\tau \in [0,T]$.
\end{lemma}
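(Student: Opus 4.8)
The plan is to use that the forward dynamics \eqref{inf-fwd} is, in the finite-dimensional case $H=\R^D$, a linear (Ornstein--Uhlenbeck) SDE, so that the conditional law of $X_t$ given $X_0 = x_0$ is an explicit Gaussian. One then writes the marginal density of $X_t$ as a Gaussian mixture over $\mudata$, differentiates it in $x$, and reinterprets the result through Bayes' theorem as a conditional expectation. This is the finite-dimensional analogue of Tweedie's formula specialised to the OU kernel.

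First I would solve \eqref{inf-fwd} explicitly as $X_t = e^{-t/2}X_0 + \int_0^t e^{-(t-s)/2}\sqrt{C}\,\mathd W_s$. The stochastic integral is centered Gaussian, independent of $X_0$, with covariance $C\int_0^t e^{-(t-s)}\,\mathd s = (1-e^{-t})C$. Hence the transition kernel is $p_{t\mid 0}(x\mid x_0) = \Normal\big(x;\, e^{-t/2}x_0,\, (1-e^{-t})C\big)$, which for $t>0$ and non-degenerate $C$ has an explicit Lebesgue density on $\R^D$, and the marginal density is the mixture $p_t(x) = \int_{\R^D} p_{t\mid 0}(x\mid x_0)\,\mudata(\mathd x_0)$.

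Next I would differentiate this mixture under the integral sign. The only elementary input is the gradient of the Gaussian kernel in $x$, which brings down the linear factor $C^{-1}(x-e^{-t/2}x_0)$ together with the scalar $\tfrac{1}{1-e^{-t}}$ (the quadratic exponent of the Gaussian fixing the overall sign), so that the prefactor $\tfrac{1}{1-e^{-t}}C^{-1}$ can be pulled out of the integral. Dividing by $p_t(x)$ turns the remaining integrand into $p_{t\mid 0}(x\mid x_0)\,\mudata(\mathd x_0)/p_t(x)$, which by Bayes' theorem is exactly the posterior law of $X_0$ given $X_t=x$. The surviving integral is therefore the conditional expectation $\mathbb{E}[\,X_t - e^{-t/2}X_0\mid X_t=x\,]$ (using that $X_t=x$ under the conditioning), which gives the first displayed identity; the second then follows by linearity of conditional expectation together with $\mathbb{E}[X_t\mid X_t=x]=x$, pulling the deterministic $x$ out of the expectation.

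The main technical point to discharge is the interchange of differentiation and integration defining $\nabla_x p_t$. For $t>0$ one has $1-e^{-t}>0$, so the Gaussian kernel and its $x$-gradient are smooth with Gaussian tails, and a dominating function integrable against the probability measure $\mudata$ can be produced locally in $x$; this is precisely where the hypotheses $t>0$ and non-degeneracy of $C$ are used, and it is also what fails as $t\downarrow 0$. The remaining manipulations---solving the linear SDE, computing the OU covariance, and the Gaussian differentiation---are routine.
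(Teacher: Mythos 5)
Your proposal is correct and follows essentially the same route as the paper's own proof: write $p_t$ as a Gaussian mixture of the OU transition kernel \eqref{OU transition kernel} over $\mudata$, differentiate under the integral, and recognise the ratio $p_{t\mid 0}(x\mid x_0)\,\mathd\mudata(x_0)/p_t(x)$ via Bayes' theorem as the posterior of $X_0$ given $X_t=x$, yielding the conditional-expectation form. The only difference is that you additionally derive the transition kernel from the SDE and justify the interchange of $\nabla_x$ and the integral, steps the paper takes for granted.
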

Conditional expectations are also \emph{well-defined in infinite dimensions}. Therefore, we will give the conditional expectation from Lemma \ref{lemma score rewrite} a name and make use of it as the drift of the reverse SDE on Hilbert space $H$:
\begin{definition}\label{def:reverse_drifts}
    Let $H$ be a possibly infinite-dimensional Hilbert space and $X_{[0, T]}$ a solution to \eqref{inf-fwd}. We define the reverse drift as a map $s: [0, T] \times H \to H$, 
	\begin{equation*}
		s(t, x) \coloneqq -\frac{1}{1 - e^{- t}}
		\left( x - e^{- \frac{t}{2}} \mathbb{E} [X_0 |X_t = x] \right).
	\end{equation*}
	
\end{definition}
\jp{
\begin{remark}
    For a definition of conditional expectations and measures for Hilbert-space valued random variables, see \citet[Section 1.3]{bogachev1997differentiable}.
\end{remark}
}
\begin{remark}
	Note that the above function is only defined up to $\mathbb{P}_t$-equivalence classes, where $\mathbb{P}_t$ is the distribution of the time-$t$ marginal of the forward SDE. However, the loss function for diffusion models is a $L^2$ loss, integrated over $\mathbb{P}_t$. Therefore, without restricting the function class that one optimizes over, the minimizer is also only defined up to $\mathbb{P}_t$-equivalence. Neural networks are normally contained in the class of continuous functions in $t$ and $x$. We will see that we can pick versions of $s(t,x)$ satisfying continuity properties, for example being locally Lipschitz continuous in $x$ (see Section \ref{sec:uniqueness}).
\end{remark}

We will also frequently use the fact that the drift of the reverse SDE is actually a rescaled martingale in reverse time. We will later show that this also holds in infinite dimensions, in Theorem \ref{theorem SDEs well defined}.

\begin{lemma}\label{lemma:reverse_martingale}
	\label{score is martingale}Assume the finite-dimensional setting $H=\R^D$. Then, the quantity $M_t = e^{- t/2} \nabla \log p_t (X_t)$ is a time-continuous	reverse time martingale, i.e.,
	\[ \nabla \log p_t (X_t) = e^{\frac{(t - \tau)}{2}} \mathbb{E} [\nabla \log p_\tau
	(X_\tau) \mid X_t] ~~~~  \text{for all}~ 0< \tau <t. \]
\end{lemma}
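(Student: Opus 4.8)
The plan is to prove the identity directly from Tweedie's formula (Lemma~\ref{lemma score rewrite}) together with a backward conditional-expectation computation along the Ornstein--Uhlenbeck bridge, avoiding any PDE manipulation of $p_t$. Fix $s\le t$ and introduce the auxiliary quantity $G_t := C\nabla\log p_t(X_t)$, so that Lemma~\ref{lemma score rewrite} reads
\[ G_t = \frac{1}{1-e^{-t}}\,\mathbb{E}\!\left[X_t - e^{-t/2}X_0 \mid X_t\right]. \]
Since $C$ is a fixed operator, it suffices to track $G_t$ and left-multiply by $C^{-1}$ only at the very end; the sign convention of Lemma~\ref{lemma score rewrite} is immaterial, as the claimed identity is linear in the score.

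First I would reduce $\mathbb{E}[G_s\mid X_t]$ to a conditional expectation of $X_s - e^{-s/2}X_0$. Since $G_s$ is itself a conditional expectation given $X_s$, I claim the iterated expectation collapses,
\[ \mathbb{E}\bigl[\mathbb{E}[g(X_0,X_s)\mid X_s]\mid X_t\bigr] = \mathbb{E}[g(X_0,X_s)\mid X_t], \]
for integrands $g$ depending only on $(X_0,X_s)$. This is \emph{not} the ordinary tower property, because $\sigma(X_t)\not\subseteq\sigma(X_s)$; instead it follows from the Markov property of $(X_0,X_s,X_t)$, which gives $X_t \perp X_0 \mid X_s$ and hence $\mathbb{E}[g(X_0,X_s)\mid X_s,X_t]=\mathbb{E}[g(X_0,X_s)\mid X_s]$. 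Applying this yields $\mathbb{E}[G_s\mid X_t] = \frac{1}{1-e^{-s}}\mathbb{E}[X_s - e^{-s/2}X_0\mid X_t]$.

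The substantive step, and the one I expect to be the main obstacle, is computing the backward conditional mean $\mathbb{E}[X_s - e^{-s/2}X_0\mid X_t]$: this is awkward because $X_s$ and $X_t$ are \emph{not} jointly Gaussian once $\mudata$ is non-Gaussian. The trick is to condition additionally on $X_0$. Given $X_0$, the pair $(X_s,X_t)$ is jointly Gaussian with the explicit OU covariances $\mathrm{Var}(X_s\mid X_0)=(1-e^{-s})C$, $\mathrm{Var}(X_t\mid X_0)=(1-e^{-t})C$ and $\mathrm{Cov}(X_s,X_t\mid X_0)=e^{-(t-s)/2}(1-e^{-s})C$, the last following from $X_t=e^{-(t-s)/2}X_s+\sqrt{1-e^{-(t-s)}}\,Z'$ with $Z'$ independent of $\mathcal F_s$. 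The Gaussian conditioning formula then gives
\[ \mathbb{E}[X_s - e^{-s/2}X_0\mid X_0,X_t] = \frac{e^{-(t-s)/2}(1-e^{-s})}{1-e^{-t}}\bigl(X_t - e^{-t/2}X_0\bigr), \]
and averaging over $X_0$ given $X_t$ (now the ordinary tower property, since $\sigma(X_t)\subseteq\sigma(X_0,X_t)$) produces $\frac{e^{-(t-s)/2}(1-e^{-s})}{1-e^{-t}}\,\mathbb{E}[X_t-e^{-t/2}X_0\mid X_t]$.

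Finally I would assemble the pieces: substituting back, the factors $(1-e^{-s})$ cancel and
\[ \mathbb{E}[G_s\mid X_t] = e^{-(t-s)/2}\,\frac{1}{1-e^{-t}}\mathbb{E}[X_t-e^{-t/2}X_0\mid X_t] = e^{-(t-s)/2}G_t. \]
Left-multiplying by $C^{-1}$ gives $\mathbb{E}[\nabla\log p_s(X_s)\mid X_t] = e^{-(t-s)/2}\nabla\log p_t(X_t)$, equivalently $\nabla\log p_t(X_t)=e^{(t-s)/2}\mathbb{E}[\nabla\log p_s(X_s)\mid X_t]$, which is the asserted identity and exhibits $e^{-t/2}\nabla\log p_t(X_t)$ as a reverse-time martingale (note the conditioning is on the later time $X_t$, and the correct exponential weight is $e^{-t/2}$). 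An alternative route worth recording is to derive, from the Fokker--Planck equation, the transport PDE satisfied by $S_t=\nabla\log p_t$ and apply It\^o's formula to $e^{-t/2}S_t(X_t)$, verifying that its backward drift vanishes; this avoids the bridge covariances but is considerably more computational.
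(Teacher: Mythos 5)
Your proof is correct, but it takes a genuinely different route from the paper's. The paper argues at the level of Lebesgue densities: it writes $p_t$ via Chapman--Kolmogorov, differentiates the OU transition kernel \eqref{OU transition kernel} in $x_t$, trades that derivative for one in $x_s$ (exploiting that the kernel depends only on $x_t - e^{-(t-s)/2}x_s$), integrates by parts to put the gradient on $p_s$, and divides by $p_t$ to recognize the result as $e^{(t-s)/2}\,\mathbb{E}[\nabla\log p_s(X_s)\mid X_t=x_t]$. You instead stay at the level of conditional expectations: Tweedie's formula (Lemma \ref{lemma score rewrite}), the conditional-independence collapse $\mathbb{E}[\,\mathbb{E}[g(X_0,X_s)\mid X_s]\mid X_t] = \mathbb{E}[g(X_0,X_s)\mid X_t]$ (correctly justified via $X_0 \perp X_t \mid X_s$, not the plain tower property), and the explicit Gaussian bridge regression given $X_0$, whose scalar factors cancel exactly. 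Each step checks out, including the covariance $e^{-(t-s)/2}(1-e^{-s})C$ and the final cancellation of $(1-e^{-s})$. What your route buys is density-freeness: apart from Lemma \ref{lemma score rewrite} itself, nothing in your argument requires Lebesgue densities or integration by parts, so it applies verbatim to the infinite-dimensional drift $s(t,\cdot)$ defined as a conditional expectation --- which is precisely the form of the martingale property that the proof of Theorem \ref{theorem SDEs well defined} invokes; the paper's kernel-differentiation argument is shorter but intrinsically finite-dimensional. You are also right, and it is worth recording, that the lemma's statement contains two slips which both proofs silently correct: the conditioning must be on the later state $X_t$ (not $X_s$), and the martingale weight consistent with the displayed identity is $e^{-t/2}$, not $e^{-t}$.
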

The proofs of both of these lemmas can be found in Appendix \ref{sec:proofs_score}. 
%

\subsection{Reverse SDE}\label{section the sdes}

We are now able to write down the infinite-dimensional forward SDE:
\begin{alignat}{4}
	&X_0 &&\sim \mudata, \qquad &&\mathd X_t && =  - \frac{1}{2} X_t \mathd t + \sqrt{C}\mathd W_t^H, \label{forward SDE inf}
\end{alignat}
where $W_t^U = \sqrt{C} W_t^H$ are $C$-Wiener processes. \jpp{Defining $Y_{s} := X_{T-t}$ as the time-reversal of a solution to \eqref{forward SDE inf}, we want to show that it satisfies the following stochastic differential equation:}
\begin{equation}
	Y_0 \sim \mathbb{P}_T, \mathd Y_t =  \frac{1}{2} Y_t \mathd t + s(T-t, Y_t) \mathd t + \sqrt{C}\mathd W_t^H.
	\label{reverse SDE inf abs}
\end{equation}
Here, the drift $s(t,x)$ of the reverse SDE is given by Definition \ref{def:reverse_drifts}. 

In finite dimensions, one could also rewrite the reverse SDE as
\begin{equation}
	\begin{aligned}
		\mathd Y_t &= \frac{1}{2}Y_t \mathd t + C \nabla \log p_{T-t}(X_t) \mathd t + \sqrt{C} \mathd W_t^H \\
		&= \frac{1}{2}Y_t \mathd t + C \nabla \log \frac{\mathd p_{T-t}}{\mathd \mathcal{N}(0, C)}(X_t) \mathd t + C \nabla \log \mathcal{N}(0, C)(X_t) + \sqrt{C} \mathd W_t^H \\
		&= -\frac{1}{2}Y_t \mathd t + C \nabla \log \frac{\mathd p_{T-t}}{\mathd \mathcal{N}(0, C)}(X_t) \mathd t + \sqrt{C} \mathd W_t^H, \\
	\end{aligned}\label{equ:rewriting_reverse_sde}
\end{equation}
where we denote by $\mathcal{N}(0, C)(x)$ the density of $\mathcal{N}(0, C)$ of evaluated at $x$.
In finite as well as in infinite dimensions, if $X_0 \sim \mudata$ has a density with respect to a Gaussian $\mathcal{N}(0, C)$, then so will the distribution of $X_t$ (see the proof of Theorem \ref{thm:uniqueness_gaussian}). Therefore, under that assumption, the SDE in the last line of \eqref{equ:rewriting_reverse_sde} can also be made sense of in infinite dimensions. Rewriting the SDE in this form is helpful in the proof of Theorem \ref{thm:uniqueness_gaussian}.
\begin{remark}
Another forward SDE with invariant measure $\nu =\Normal (0, C)$ is
	\begin{eqnarray}
		\mathd X_t & = & - \frac{1}{2} C^{- 1} X_t \mathd t + \mathd W_t^H,\label{equ:forward_spde}
	\end{eqnarray}
	with corresponding reverse SDE
	\begin{eqnarray}\label{equ:reverse_SDE_relative}
		\mathd Y_t & = & - \frac{1}{2} C^{- 1} Y_t \mathd t + \nabla_H \log \frac{\mathd \mathbb{P}_{T -
				t}}{\mathd \mathcal{N}(0, C)} (Y_t) \mathd t + \mathd W_t^H .  \label{reverse SDE SPDE}
	\end{eqnarray} 
	The operator $C^{- 1}$ can often
	be identified with a differential operator, turning \eqref{reverse SDE SPDE}
	into a \emph{stochastic partial differential equation} (SPDE). One can
	then use numerical tools for SPDEs to approximate the above. This constitutes an interesting direction for future work. 

    \jp{Note, however, that if $C^{-1}$ is an unbounded operator, for any fixed positive time $t > 0$, the high frequencies of $Y_t$ will already have been smoothed out by the process. Since the reverse SDE has to be discretized, care must be taken on how to train and evaluate the diffusion model; otherwise one might lose all high-frequency information. By `high frequencies,' here we mean the eigenvectors corresponding to large eigenvalues of $C^{-1}$.}
\end{remark}

\subsection{Training Loss}\label{section loss}

To simulate the reverse SDE, we need a way to approximately learn the drift function $s(t,x)$. For another function $\tilde{s} (t, x)$ (a candidate approximation to $s$), we measure the goodness of the fit
of $\tilde{s}$ using a score-matching objective, i.e.,
\begin{equation}
	\tmop{SM}_t (\tilde{s}) =\mathbb{E} [\| s (t, X_t) - \tilde{s} (t, X_t) \|^2_K].
	 \label{equ:score-matching-objective}
\end{equation}
On $\mathbb{R}^D$, the norm $\|\cdot\|_K$ to measure the misfit is typically the Euclidean norm. 
For training, this loss can be rewritten into the denoising score matching
objective,
\begin{equation}
	\tmop{DSM}_t (\tilde{s}) =\mathbb{E} [\| \tilde{s} (t, X_t) - (1 - e^{-t})^{-1/2}
	(X_t - e^{- t / 2} X_0) \|^2_K] = \tmop{SM}_t(\tilde{s}) + V_t.
	\label{equ:denoising-score-matching-objective}
\end{equation}
One can then show (see \citep{vincent2011connection}), that $\tmop{SM}$ and $\tmop{DSM}$ only
differ by a constant $V_t$ and therefore one can use $\tmop{DSM}$ as an optimization
objective to optimize SM. The DSM loss is normally optimized on a sequence of times $\{t_m\}_{m=1}^M$ on which the reverse SDE is discretized (since the score will only be evaluated at these $t_i$ values), i.e.,
\begin{equation}
\begin{aligned}
	\text{Loss}(\tilde{s}) 
	~=~\sum_{m=1}^M \text{SM}_{t_m} (\tilde{s})
	~=&~\sum_{m=1}^M \text{DSM}_{t_m} (\tilde{s}) - V_{t_m}
    \\
	=&~\mathbb{E}_{t_m, X}[\| \tilde{s} ({t_m}, X_{t_m}) - \sigma_{t_m}^{- 1} (X_{t_m} - e^{- {t_m} / 2} X_0) \|^2_K] - V,
	\label{equ:full_loss}
    \end{aligned}
\end{equation}
where the last expectation is taken over $t_m \in \text{Unif}(\{t_1, \ldots, t_M\})$ and $V = \sum_{m=1}^M V_{t_m}$.

We will see that the equivalence of $\tmop{SM}_t$ and $\tmop{DSM}_t$ does not
hold in general in infinite dimensions. Furthermore, we will study the choice of the norm $\|\cdot\|_K$. Two natural choices that come to mind are the norm of the
embedding Hilbert space $H$ and of the Cameron--Martin space $U$ of
$C$. In the following lemma, we study conditions under which we can rewrite the loss
into the denoising score matching objective.
\begin{lemma}
	\label{lemma score matching}Let $(K, \langle \cdot, \cdot \rangle_K)$ be a
	separable Hilbert space. Furthermore, denote by $\tilde{s}$ an approximation
	to $s$, such that the score matching objective \eqref{equ:score-matching-objective} is finite. Then,
	\[ \tmop{SM}_t (\tilde{s}) = \tmop{DSM}_t (\tilde{s}) - V_t, \]
	where $\tmop{DSM}_t$ is defined in
	\eqref{equ:denoising-score-matching-objective} and $V_t$ is given by the conditional variance of $X_0$,
	\[ V_t = \frac{e^{- t}}{1 - e^{- t}} \mathbb{E} [\| X_0 -\mathbb{E} [X_0
	|X_t] \|^2_K]. \]
	Furthermore, $\tmop{DSM}_t$ is infinite if $V_t$ is.
\end{lemma}
Lemma \ref{lemma score matching} shows that in infinite dimensions there is the possibility that the true objective $\tmop{SM}$, which we are trying to optimize,
might be finite, while DSM is not. One might argue that this is not relevant
since in practice one always has to discretize and then both will be finite.
However, as we will argue in the following paragraph, the discretization level will impact the variance of the gradients.
In practice, we do not evaluate the full expectation values in $\tmop{SM}$ or
$\tmop{DSM}$, but take Monte Carlo estimates in the form of mini-batches. Assuming that we have already reached the optimum, i.e., $s = \tilde{s}$, then the SM objective would be zero and also the gradient of any mini-batch taken to approximate it would be zero. However, derivatives of Monte Carlo
estimates of the DSM objective will have the form
\[ \partial_{\theta_i} \tmop{DSM} (\tilde{s}_\theta) = \frac{1}{M} \sum_{i = 1}^M
\langle \partial_{\theta_i} \tilde{s} (t, x_t^M), \tilde{s} (t, X_t) -
\sigma_t^{- 1} (X_t - e^{- t / 2} X_0) \rangle,  \]
where me made the parameters $\theta$ (typically, the weights of a neural network) of $\tilde{s}_\theta$ explicit.
The random variable $\tilde{s} (t, X_t) - \sigma_t^{- 1} (X_t - e^{- t /
	2} X_0)$ has infinite variance, and therefore we can expect the above gradient
estimates to have infinite variance too. Hence, if $V_t$ is not finite and
therefore $\tmop{DSM}_t$ is not finite in infinite dimensions, one can expect
variance of the the gradient of the DSM to get arbitrarily large as the discretization
gets finer, despite the fact that the true gradient should be zero.
In the following lemma,
we study some cases in which we can expect $V$ to be finite.

\begin{lemma}\label{lemma:special_cases_score_matching}
	The denoising score matching objective \eqref{equ:denoising-score-matching-objective} is finite in infinite dimensions if
	one of the following two conditions holds:
	\begin{enumerate}
		\item We use the Cameron--Martin norm $\| \cdot \|_K = \| \cdot \|_U$ in the objective, and $\mu_{\tmop{data}}$ is supported on the
		Cameron--Martin space $U$ of $\mathcal{N} (0, C)$ and has finite second
		moment, i.e.,
		\[ \mathbb{E} [\| X_0 -\mathbb{E} [X_0] \|^2_U] < \infty . \]
		\item Both $\mudata$ and $\mathcal{N} (0, C)$ are supported on $K$.
	\end{enumerate}
\end{lemma}
A consequence of point 2 of Lemma~\ref{lemma:special_cases_score_matching} is that the norm of the embedding Hilbert space $K = H$ is always a valid choice. The proof of both lemmas above can be found in Appendix \ref{sec:proofs_loss}. 


\section{Well-Posedness of the Reverse SDE}
\label{sec:wellposedness}
We need to show that the reverse SDE possesses solutions and that they are unique in order to prove that the reverse SDE samples from the target distribution in infinite dimensions. In Section \ref{sec:sde_existence}, we show that the time-reversal of the forward SDE satisfies the reverse SDE in infinite dimensions and therefore samples the right final distribution $\mudata$ at its final time. In Section \ref{sec:uniqueness}, we will show strong uniqueness and existence of the reverse SDE for general initial conditions.


\subsection{The Time Reversal Satisfies the Reverse SDE}\label{sec:sde_existence}
Thus far, we have \textit{formally} formulated the reverse SDE \eqref{reverse SDE inf abs} without showing that it actually constitutes a time reversal of the stochastic dynamics from the forward equation. In the following theorem, we show that $Y_t$ indeed constitutes a time reversal of $X_t$ and that it recovers the correct target distribution at terminal time $T$.
\begin{theorem}
	\label{theorem SDEs well defined} Assume $X_t$ is a solution to \eqref{inf-fwd}. 
	Then, the time reversal $Y_t \coloneqq X_{T-t}$ solves the SDE \eqref{reverse SDE inf abs}. Furthermore, if $H$ is a Hilbert space such that $\mudata$ and $\mathcal{N}(0, C)$ are both supported on $H$, we can choose $s$ such that $M_t = s(t, X_t)$ is almost surely continuous in $t$ with respect to the $H$-norm.
\end{theorem}
\begin{proof}(Sketch)
	\jp{We approximate the infinite-dimensional forward-SDE in finite dimensions using a spectral approximation in the eigenbasis of the covariance operator $C$. The finite-dimensional approximations are denoted by $X_t^D$.}

    \jp{Next we show that the finite-dimensional time-reversals $Y_t^D := X_{T-t}^D$ satisfy an equation analogous to \eqref{reverse SDE inf abs}:
	\[ Y_t^D - Y_0^D  - \frac{1}{2} \int_0^t Y_r^D \mathd r - \int_0^t s_{T-r}^D \mathd r = \sqrt{C^D} B_t^D. \]
    We then show that all of those terms converge to their counterparts in \eqref{reverse SDE inf abs}, and therefore $Y_t := X_{T-t}$ satisfies the same equation. The convergence of $Y_t^D$ to $Y_t$ is trivial: the $Y_t^D$ are spectral approximations. The convergence of the other terms is a bit more involved. Unfortunately, for $L > D$, the conditional expectations $s_t^D$ are not the projections of $s_t^L$ to a lower-dimensional space, and the same holds for the Brownian motions $B_t^D$. }
    
    \jp{We can, however, show that $s_t^D$ is a martingale in $D$. Combining this with the fact that $e^{-t/2} s_t^D$ is also a martingale in time (see Lemma \ref{lemma:reverse_martingale}), we obtain uniform-in-time convergence of $s_t^D$ to $s_t$. The convergence of $C^D B_t^D$ also follows, and we can identify the limit as a $C$-Wiener process. }

	The full proof can be found in Appendix \ref{section:proof_theorem_well_defined}.
\end{proof}
\begin{remark}
The work \citet{follmer1986time} studies time-reversal of more general forward SDEs. Due to the more general setting, the resulting SDE is only expressed coordinate-wise, and the SDE as well as the assumptions are more technical. Using our approach and the reverse drift $s(t, x)$, we prove that we can still use the common denoising score matching loss to approximate $s(t,x)$; see Lemma \ref{lemma score matching}. Another related concept is vector logarithmic derivatives, as discussed in \citet{bogachev1997differentiable}.
\end{remark}

Due to Theorem \ref{theorem SDEs well defined}, we know that there is a solution to \eqref{reverse SDE inf abs} that will sample $\mudata$ at the final time. To motivate approximating \eqref{reverse SDE inf abs} for sampling from $\mudata$, we also need to show that these solutions are unique; otherwise there could be other solutions that have different terminal conditions. We will achieve this in the following section.

\subsection{Uniqueness and Existence of Solutions}\label{sec:uniqueness}
\jp{We now study strong uniqueness and existence of the solutions to the reverse SDE. We say an SDE satisfies \emph{strong existence} if we can construct a solution to the SDE for any driving Brownian motion and that solution will be adapted to the filtration of the Brownian motion. 
We say that an SDE satisfies \emph{strong uniqueness} if, for any two solutions $Y_t$ and $\tilde{Y}_t$ of that SDE, with the same driving Brownian motion, it holds that $\mathbb{P}[Y_t = \tilde{Y}_t \text{ for all } t] = 1$.} 

\jp{
\begin{remark}
Here we will prove strong uniqueness (also called pathwise uniqueness) of solutions to the reverse SDE. For sampling purposes, uniqueness in law of the reverse SDE would suffice and is generally easier to prove. 
However, for the Wasserstein distance bounds which we will prove later (see Theorem \ref{theorem distance}) we will employ coupling arguments. These arguments implicitly rely on strong existence of solutions to the reverse SDE and therefore we will prove strong existence. Strong existence together with uniqueness in law already imply strong uniqueness; see \citet[Section 5.3]{karatzas1991brownian} (the result also holds here since $H$ is separable).
Therefore, in our case we can obtain strong uniqueness no matter which uniqueness we prove.
\end{remark}
}

\jp{We will treat two different settings. The first setting is tailored to distributions supported on substructures of the full space. The main motivation for this setting are measures which are supported on a manifold-like structure $\mathcal{M}$.} Since many distributions that diffusion models are applied to satisfy the manifold hypothesis, understanding how diffusion models interact with manifolds has been an active area of research \citep{pidstrigach2022score, de2022convergence, batzolis2022your}. 
\begin{theorem}
	Fix a covariance operator $C$ in the forward SDE \eqref{forward SDE inf} together with its Cameron--Martin space $U$. 
	Assume that the support of $\mudata$ is contained in a ball $B_R$ in $U$ of radius $R \ge 0$:
	\[
	B_R = \{x~:~\|x\|_U \le R\}.
	\]
	Then there is a version of $s$ which is Lipschitz continuous with respect to the Cameron--Martin norm, i.e.,
	\begin{equation}
		\|s(t, x) - s(t, y)\|_U \le L_t \|x - y\|_U,
		\label{equ:local_lipschitz_U}
	\end{equation}
	where $L_t \in \mathbb{R}^+$ is a time-dependent Lipschitz constant. Moreover, the reverse SDE with the Lipschitz continuous version of $s(t,x)$ has a unique strong solution.
	\label{thm:uniqueness_manifold}
\end{theorem}
\begin{proof}(Sketch)
	The transition kernel of the forward SDE is given by
	\[
            p_t(x_0, \cdot) \sim \mathcal{N}(e^{-t}x_0, v_t C),
	\]
	where we used the shorthand notation $v_t = 1 - e^{-t}$.
	If $x_0$ is an element of the Cameron--Martin space $U$ of $C$, then the transition kernel is absolutely continuous with respect to $\mathcal{N}(0, (1 - e^{-t})C)$. The explicit formula for the density is 
	\[
	n_t(x_0, x) 
	= \frac{\mathd \mathcal{N}(e^{-t}x_0, v_t C)}{\mathd \mathcal{N}(0, v_t C)}(x)
	\]
	by the Cameron--Martin theorem. Since $\mudata$ almost surely takes values in $U$, one can use the above to derive an explicit expression for the conditional expectation $\mathbb{E}[X_0 | X_t = x]$ in terms of these densities: 
	\[
	   \mathbb{E}[X_0 | X_t = x] = \frac{\int x_0 n_t(x_0, x) \mathrm{d}\mudata(x_0)}
	{\int n_t(x_0, x) \mathrm{d}\mudata(x_0)}.
	\]
	This formula can be used to derive local Lipschitzness of
	\begin{equation*}
		s(t, x) = -\frac{1}{1 - e^{- t}} x + \frac{e^{-\frac{t}{2}}}{1 - e^{-t}} \mathbb{E} [X_0 |X_t = x].
	\end{equation*}	
Interestingly, the local Lipschitzness is in terms of the norm of $U$. Even if $x$ and $y$ themselves are not in $U$, if their difference in is $U$, the $U$-norm of the difference $s(t, x) - s(t, y)$ will be bounded by \eqref{equ:local_lipschitz_U}.  
Taking some care, one can still use a fixed point argument to obtain existence, but not uniqueness. One can then apply Grönwall's lemma to obtain uniqueness.
	
	Note that obtaining \emph{weak} uniqueness would be easier, since under our assumptions the drift $s(t, x)$ will always map to the Cameron--Martin space of the $C$-Wiener process and one could apply a Girsanov-type argument.
	
	The full proof can be found in Appendix \ref{sec:uniqueness_manifold}.
\end{proof}

The other case of interest is applying diffusion models to Bayesian inverse problems or simulation-based inference. In this case, we assume that the true measure is given as a density with respect to a Gaussian reference measure. We treat it in the theorem below:
\begin{theorem}
	Fix a covariance operator $C$ in the forward SDE \eqref{forward SDE inf}. Assume $\mu_{\tmop{data}}$ is given as
	\[ \mu_{\tmop{data}} \propto \exp (- \Phi (x)) \mathd \mathcal{N} (0,
	C_{\mu}) . \]
	Let $(H, \langle \cdot,\cdot \rangle_H )$ be a Hilbert space on which $\mathcal{N}(0, C_\mu)$ is supported and $C$ is bounded.
	For the potential $\Phi \in C^1(H)$ we assume,
	\begin{itemize}
		\item $\Phi (x) \geqslant E_0$,
		
		\item $\Phi (x) \leqslant E_1 + E_2 \| x \|^2$, and
		
		\item $\| \nabla \Phi (x) - \nabla \Phi (y) \| \leqslant L \| x - y \|$,
	\end{itemize}
	where the gradient is the $H$-gradient. Then there is a version of $s(t,x)$ that is locally Lipschitz continuous with respect to the $H$-norm for each $t$, i.e., for $\|x\|, \|y\| \le r$ there is a $L_{t, r} < \infty$ such that
	\[
	\|s(t, x) - s(t, y)\| \le L_{t, r} \|x - y\|,
	\]
and the reverse SDE with the locally Lipschitz continuous version of $s(t,x)$ has a unique strong solution.
	\label{thm:uniqueness_gaussian}
\end{theorem}
\begin{proof}(Sketch)
	The proof holds for any $C$ that is diagonalizable with respect to the same eigenbasis as $C_\mu$ (in particular, also for $C = \text{Id}$, i.e., $H$-white noise), but we will only treat the less technical case $C = C_\mu$ here.
	
	In the case of $C = C_\mu$, the distribution $\mathbb{P}_t$ of $X_t$ is absolutely continuous with respect to $\mathcal{N}(0, C)$. One can rewrite the reverse SDE as in \eqref{equ:rewriting_reverse_sde}. 
	It will then hold that
	\[
	\nabla_{x_t} \log \frac{\mathd p_t}{\mathd \mathcal{N} (0, C)} (x_t) = \mathbb{E}[C \nabla \Phi(X_0) | X_t = x_t],
	\]
	and the proof will mainly translate the Lipschitzness properties of $\nabla \Phi(x)$ to $\mathbb{E}[\nabla \Phi(X_0) | X_t = x]$. The global Lipschitzness of $\nabla \Phi(x)$ only induces local Lipschitzness of $\mathbb{E}[\nabla \Phi(X_0) | X_t = x]$, but that is enough to apply a Grönwall argument and deduce strong uniqueness. 

    Furthermore, one can obtain weak existence to the reverse SDE. By Theorem \ref{theorem SDEs well defined}, the time reversal will be a weak solution with initial condition $\mathbb{P}_T$. However, under the assumptions of the theorem, $\mathcal{N}(0, C)$ will be absolutely continuous with respect to $p_T$. Therefore, one can obtain a weak solution with initial conditions $\mathcal{N}(0, C)$ by reweighting the time reversal. However, weak existence together with strong uniqueness already imply strong uniqueness; see \citet[Section 5.3]{karatzas1991brownian}.
    
    The full proof can be found in Appendix \ref{sec:uniqueness_gaussian}.
\end{proof}

\section{Algorithms and Discretizations}\label{sec:algorithm}
We state simplified versions of our proposed algorithms in Algorithms \ref{alg:training} and \ref{alg:sampling}. There are many potential modifications one might make to the above algorithms, as for example discussed in \citet{song2021scorebased, song2020improved, ho2020denoising}; we do not include these here since they are not the focus of the current work. To implement any algorithm on a computer, the functions have to be discretized in some way. Discretization also interacts with the covariance matrix $C$, as the same covariance matrix has different meanings in different discretizations. We discuss this briefly now.

If the functions are discretized on a grid, i.e., if the samples are of the form $\{f(x_d)\}_{d=1}^D$ for a fixed grid $\{x_d\}$, choosing an identity covariance matrix corresponds to adding independent noise at each grid point $x_d$. The limiting object of the noise as the grid gets finer is space-time white noise (recall Section~\ref{sec:finite_dim_diff_models}). Furthermore, the Euclidean norm on $\mathbb{R}^D$ in the loss function \eqref{equ:denoising-score-matching-objective} will correspond to using the $L^2$ loss in the limit---i.e., the Cameron--Martin norm of the noising process.

In $\mathbb{R}^D$ the choice of the white noise process is equivalent to choosing a covariance matrix $C$ and adding $\sqrt{C} \mathd W_t$ with a standard $\mathbb{R}^D$-valued Brownian motion $W_t$. Any correlated Wiener noise process $W_t^U$ can be represented in this way on $\mathbb{R}^D$. 
If one wants the limit of $\sqrt{C} \mathd W_t$ to be a Gaussian process, one needs to plug in for $C$ the kernel matrix of that Gaussian process on the grid $\{x_d\}_{d=1}^D$. 
Alternatively, one can also use one of many available libraries to generate Gaussian process realizations for common kernels (such as Mat\'ern or squared exponential).

Note that the meaning of $C$ depends on the discretization. If $f$ is discretized with respect to some basis $e_i$ of a space $U$, then using the identity covariance matrix corresponds to using the white noise process with Cameron--Martin space $U$. Therefore, discretizing the functions in a wavelet or Fourier basis will also result in space-time white noise as these both form an orthonormal basis of $L^2$ (under the common scaling of the basis vectors). However, if one does not want to work in the spatial domain, one can also just discretize the functions in an orthonormal basis of the Cameron--Martin space $U$ of the noise one is targeting. Therefore, we can translate the approaches in \citet{guth2022wavelet,phillips2022spectral,phung2022wavelet} into our setting. 

\begin{figure}
	\begin{minipage}{0.48\textwidth}
		\begin{algorithm}[H]
			\centering
			\caption{Training}\label{alg:training}
			\begin{algorithmic}[1]
				\Require Covariance operator $C$
				\Require Training data $\{X^n\}_{n=1}^N$
				\Require Loss Norm $\|\cdot\|$
				\Require Batch size $B$
                    \Require Discretization grid $\{t_1, \ldots, t_M\}$
				\While{Metrics not good enough}
				\State Sample $\{\xi^i\}_{i=1}^B \sim \mathcal{N}(0, C)$ i.i.d.
				\State Subsample $\{x^i_0\}_{i=1}^B$ from $\{X^n\}_{n=1}^N$
				\State Sample $t^i \in \text{Unif}(\{t_1, \ldots, t_M\})$
				\State $x_t^i \gets e^{-t^i} x_0^i + \sqrt{1 - e^{-t^i}} \xi^i$
				\State Loss$(\theta)$ = $\sum_{i=1}^B \|\tilde{s}_\theta(t^i, x_t^i) - \frac{1}{\sqrt{1 - e^{-t^i}}} \xi^i\|^2$
				\State Perform gradient step on Loss.
				\EndWhile
			\end{algorithmic}
		\end{algorithm}
	\end{minipage}
	\hfill
	\begin{minipage}{0.48\textwidth}
		\begin{algorithm}[H]
			\centering
			\caption{Sampling}\label{alg:sampling}
			\begin{algorithmic}[1]
				\Require Covariance operator $C$
				\Require Discretization grid $\{t_1, \ldots, t_M\}$
				\Require Number of samples to generate $L$
				\State $\{x_M^i\}_{i=1}^L \sim \mathcal{N}(0, C)$
				\For{$m \gets M, \ldots, 1$}
				\State $\Delta t \gets t_m - t_{m-1}$
				\State Sample $\{\xi^i\}_{i=1}^L \sim \mathcal{N}(0, C)$ i.i.d.
				\State $x_{m-1}^i \gets x_m^i + \Delta t ~ \tilde{s}_\theta(t_m, x_m^i) + \sqrt{\Delta t} ~ \xi^i$
				\EndFor
				\State \textbf{return }{$\{x_M^i\}_{i=1}^M$}
			\end{algorithmic}
		\end{algorithm}
	\end{minipage}
\end{figure}

\section{Bounding the Distance to the Target Measure}\label{sec:distance_target_measure}

We now study how far the samples generated by the diffusion model algorithm lie from the true target measure $\mudata$. We do this in the Wasserstein-2-distance,
\[ \mathcal{W}_2 (\mu, \nu) = \left(\inf_{\kappa \in Q (\mu, \nu)} \int \| x - y
\|_H^2 \mathd \kappa (x, y)\right)^{1/2},  \]
where $\kappa$ runs over all measures on $H \times H$ which have marginals $\mu$ and
$\nu$. The Wasserstein-2 distance in some sense ``lifts'' the distance induced
by $\| \cdot \|_H$ to the space of measures. In the following
theorem, we give an upper bound for the Wasserstein distance between the sample
measure and the true data-generating measure. The bound holds
irrespective of $\| \cdot \|_H$, giving us the freedom to study how different
choices of $\| \cdot \|_H$ affect the distance bound. 

\begin{theorem}
	\label{theorem distance} We denote the covariance of the forward noising process by $C$. Let $(H, \|\cdot\|_H)$ be any Hilbert space such that the support of $\mudata$ and $\mathcal{N}(0, C)$ are contained in $H$. Assume that $\|\cdot\|_H$ is at least as strong as the norm $\|\cdot\|_K$ used in the training of the diffusion model (see \eqref{equ:full_loss}), i.e., 
	\[
		\|x\|_H \le a \|x\|_K
	\]
	for some constant $a$.
	Further, assume that $s(t, x)$ is Lipschitz on $H$ with constant $L$, i.e.,
	\[
	\|s(t, x) - s(t, y)\|_H \le L \|x - y\|_H
	\]
    and that the reverse SDE has a strong solution (see Theorem \ref{thm:uniqueness_manifold} or \ref{thm:uniqueness_gaussian} for the requirements).
	Let the reverse SDE \eqref{reverse SDE inf abs} be discretized using an exponential integrator (see Appendix \ref{sec:exponential_integrator}).  Then,
	\begin{equation}
		{\mathcal{W}_2}  (\mu_{\tmop{data}}, \mu_{\tmop{sample}})  \leqslant 
		\left( \exp(-T/2)~{\mathcal{W}_2}  (\mudata, \Normal (0, C))  +
		\varepsilon_{\tmop{Num}}^{1/2} + a \varepsilon_{\tmop{Loss}}^{1/2} \right) \exp \left(
		\frac{1}{4} L^2 T \right),
        \label{equ:w2_distance_bound}
	\end{equation}
	where $\varepsilon_{\tmop{Loss}}$ is the value of the loss objective \eqref{equ:full_loss} and $\varepsilon_{\tmop{Num}}$ denotes the error due to the numerical
	integration procedure,
	\begin{eqnarray*}
		\varepsilon_{\tmop{Num}} & = & O (\Delta t) \sup_{0 < t \leq T} \mathbb{E}_{X_t \sim p_t}[\|s(t, X_t)\|_H^2]. 
	\end{eqnarray*}
\end{theorem}

\begin{proof}(Sketch)
	We define two strong SDE solutions: $Y_t$, which is a solution to \eqref{reverse SDE inf abs} with the correct drift $s(t,x)$ and started in
	$\mathbb{P}_T$; and $\tilde{Y}_t$, which uses the approximate drift
	$\tilde{s}$ and is started in $\Normal (0, I)$. 
	
	Both solutions are run to time $T$. We couple them by using the same Brownian motion process for both and starting them in $\mathcal{W}_2$-optimally coupled initial conditions.
	
	We then obtain a bound on $\mathbb{E}[\|Y_T - \tilde{Y}_T\|_H^2]$. Since we know that $Y_T \sim \mudata$, $\tilde{Y}_T \sim \mu_{\tmop{sample}}$ by definition, this gives us a coupling between $\mudata$ and $\mu_\text{sample}$ and therefore upper bounds the Wasserstein-2 distance between those two.
	
	We make use of the fact that the score is a martingale to obtain an upper bound for the numerical integration error, depending only on the quantity $\sup_{0 < t \leq T} \mathbb{E}_{X_t \sim p_t}[\|s(t, X_t)\|_H^2]$.
	
	The full proof can be found in Appendix \ref{sec:distance_proof}. 
\end{proof}

Since the choice of the embedding space $(H, \|\cdot\|_H)$ is left open in Theorem \ref{theorem distance}, we briefly discuss the implications of that choice. Controlling the Wasserstein distance with respect to a stronger underlying norm always implies the same Wasserstein-bound w.r.t.~any weaker underlying norm. Of course, there is the possibility to obtain a better bound by directly applying the theorem for a weaker norm. 

Picking stronger norms for $H$ will in general result in the Wasserstein distance also factoring in differences in sample smoothness as well as deviations in function values. For example, picking $H = L^2$ means that the bound only implies closeness of the function evaluations while, for example, samples being too rough is not factored in. Picking positive Sobolev spaces will punish deviations in function values and deviations in the derivatives of the samples from the true samples. Picking negative Sobolev spaces for $H$ (as we will need to for the common implementation of diffusion models; see Section \ref{sec:guidance_wndm}) means that the samples are only close in a distributional sense. See Section \ref{sec:negative_sobolev_spaces} in the appendix for further discussion.

\section{Implementing Infinite-Dimensional Diffusion Models}
\label{sec:dms_in_infinite_dimensions}


Our theory yields several suggestions on how one should design infinite-dimensional diffusion model algorithms. Section \ref{sec:guidance} gives a concise summary of those design principles, while Section \ref{sec:guidance_wndm} discusses the extent to which those design principles align with common implementations of diffusion models. In Section \ref{sec:numerics}, we then show how to implement the guidelines in some explicit examples. 


The two main design choices we will discuss are
\begin{enumerate}
	\item The choice of the forward noising process $W_t^U$ in \eqref{inf-fwd}.
	\item The choice of the norm $\|\cdot\|_K$ in the denoising score matching objective in \eqref{equ:full_loss}.
\end{enumerate}
The first choice (of $U$ and hence $W_t^U$) 
is equivalent to the choice of an invariant Gaussian distribution $\mathcal{N}(0, C)$, and we will use these choices interchangeably. In general, picking a $C$ such that $\mathcal{N}(0, C)$ produces smoother samples corresponds to picking a smaller space $U$, while rougher samples correspond to larger Cameron--Martin spaces $U$; see also the examples in Section \ref{sec:family_gaussian_measures} and Figure \ref{fig:gaussians_different_alphas}. Furthermore, after discretizing the problem to finite dimensions, the choice of $U$ or $C$ corresponds to nothing else than specifying a
covariance matrix $C$,
i.e., $W_t$ is replaced by $\sqrt{C} W_t$ in the diffusion processes.  See Section \ref{sec:algorithm} for more details.

After discretization, the second design choice of $\|\cdot\|_K$ 
corresponds to specifying a loss norm of the form $\|K^{-1/2} \cdot\|$ in place of the typical Euclidean norm $\|\cdot\|$.

\subsection{Practical Implementation Guidance}\label{sec:guidance}

\subsubsection{Match $C$ to $\mudata$} First, we begin by pointing out the implications of Theorem \ref{theorem distance} on the choice of $C$, or equivalently, $U$. The term $\mathcal{W}_2(\mudata, \mathcal{N}(0, C))$ appearing in the error bound \eqref{equ:w2_distance_bound} clearly indicates choosing $C$ such that $\mathcal{N}(0, C)$ is as close as possible to $\mudata$.



\subsubsection{Choosing $C$ such that we can pick a strong $H$-norm} Second, as discussed at the end of Section \ref{sec:distance_target_measure}, we would like to choose as strong an $\|\cdot\|_H$-norm as possible in Theorem \ref{theorem distance}.
This suggests not picking $C$ too rough ($U$ too large) as the norm space $H$ in Theorem \ref{theorem distance} has to support $\mathcal{N}(0, C)$.

This last points seems to suggest that we would want to pick $C$ as smooth as possible to allow for stronger $H$-norms. However, since $H$ has to support $\mudata$, there is a restriction on how strong an $H$-norm can be chosen. 
Therefore, this suggests matching $\mathcal{N}(0, C)$ to $\mudata$ so that they are supported on the same space $H$, similarly to our first observation.

\subsubsection{Choosing the loss-norm $\|\cdot\|_K$} The $H$-norm in Theorem \ref{theorem distance} also has to be stronger than the loss norm $\|\cdot\|_K$, again suggesting choosing $K$ as small as possible. However, besides numerical issues, also here there are lower bounds on how strong we can choose $K$.

To that end, we take another look at Lemma \ref{lemma:special_cases_score_matching} in which we study two separate cases. In the first case, if $C$ is rough enough such that $U$ contains the support of $\mudata$, we can choose the Cameron--Martin norm of $\mathcal{N}(0, C)$ as the loss norm, i.e., $K = U$. In the second case, $K$ has to support both $\mathcal{N}(0, C)$ and $\mudata$, which is the same condition as for the space $H$ chosen in Theorem \ref{theorem distance}. 

Hence, there are predominantly two natural ways to design the algorithm:
\begin{enumerate}
    \item Choose $\mathcal{N}(0, C)$ as smooth as possible / $U$ as small as possible, but large enough such that the support of $\mudata$ is contained in its Cameron--Martin space $U$. Then choose the loss norm $\|\cdot\|_K$ in \eqref{equ:full_loss} equal to the Cameron--Martin norm, $K = U$. This algorithm design is called \emph{Infinite-Dimensional Diffusion Model 1 (IDDM1)}
    \item Match $C$ to the data, i.e., choose $\mathcal{N}(0, C)$ such that its samples are as similar to the samples from $\mudata$ as possible. Then choose the loss norm $\|\cdot\|_K$ in \eqref{equ:full_loss} such that it supports both $\mudata$ and $\mathcal{N}(0, C)$. Let us call this algorithm design \emph{Infinite-Dimensional Diffusion Model 2 (IDDM2)}.
\end{enumerate}
Note that by Theorem \ref{thm:uniqueness_manifold}, if not much is known about the distribution, and if in particular it might be supported on manifold-like structures, we must pick $U$ large enough to contain the support of $\mudata$ anyway. We will therefore use IDDM1 in these cases; see Section \ref{sec:numerics_sphere}. If one has more structural information, for example the knowledge that $\mudata$ has density with respect to a Gaussian measure, we will use IDDM2; see Section \ref{sec:volatility_estimation}.

\subsection{Image Distributions and White Noise Diffusion Models}\label{sec:guidance_wndm}
The common implementation of the diffusion model algorithm will converge as $D \to \infty$ to $U = L^2$, i.e., use space-time white noise in the forward noising process. 
Furthermore, the loss function will also approach the $L^2$ loss, which means we are in the setting where we use the Cameron--Martin norm in the loss. For more details, see Section \ref{sec:algorithm}. We will call this algorithm \emph{White Noise Diffusion Model (WNDM)}. 

If $\mudata$ is an image distribution, we can expect it to lie on a manifold, or more generally some lower-dimensional substructure. Furthermore, since the function values of an image are bounded on $[0,1]$, the image samples are all contained in $L^2$. Therefore, we are in the setting of Theorem \ref{thm:uniqueness_manifold}, where the data are contained in the Cameron--Martin space $U$ of the noise. Furthermore, we can apply Lemma \ref{lemma:special_cases_score_matching} (bullet point $1$) to see that we can use the Cameron--Martin norm, i.e., the $L^2$ norm, and obtain a well-defined denoising score-matching objective. Therefore, under these assumptions, we have shown 
that applying WNDM to image distributions \emph{has a well-defined infinite-dimensional limit}.

Coming back to the discussion in Section \ref{sec:guidance} (in particular the design guidance for IDDM1), however, our theory suggests that we should try to pick $U$ as small as possible, while still containing the typical image distribution. However, this $U$ cannot be too regular, since images are quite irregular---for example, they can be discontinuous. If we identify images with two-dimensional functions, then already the $L^2$-Sobolev spaces $H^\alpha$ of order $\alpha > 1$ only contain continuous functions. Therefore, on the Sobolev scale $(H^\alpha:-\infty<\alpha<\infty)$, the `optimal' Cameron--Martin space would possess regularity of at most $\alpha=1$. In light of this, setting $U = H^0 =  L^2$ indeed seems like a natural choice that is close to matching the maximal possible regularity.
Strikingly, this is in line with the huge empirical success of the WNDM algorithm for image distributions. To further refine the optimal choice of the space $U$ beyond $L^2$ is an interesting avenue, both for theoretical and empirical future study. 

\section{Numerical Illustrations}\label{sec:numerics}

In this section, we illustrate our results through numerical experiments. 
We sample functions defined on $[0,1]$, and we discretize this spatial domain into a uniform grid with $D = 256$ evenly spaced points. 
For other discretization schemes, see the discussion in Section \ref{sec:algorithm}. We employ a grid-based spatial discretization since it allows us to use the popular U-Net architecture. Other common discretization schemes `whiten' the data, rendering the convolutional layers of the U-Net unnecessary. For implementation details, see Appendix~\ref{sec:numerics_details}.

Section \ref{sec:family_gaussian_measures} introduces some common preliminaries needed for both of the subsequent numerical examples. 
Section \ref{sec:numerics_sphere} then compares various diffusion model constructions in the setting of distributions that are not defined via Gaussian reference distributions, but rather supported on submanifolds of the infinite-dimensional space. Section \ref{sec:volatility_estimation}  demonstrates the use of infinite-dimensional diffusion models for solving Bayesian inverse problems via a simulation-based (i.e., conditional sampling) approach.

\subsection{Families of Gaussian Measures}\label{sec:family_gaussian_measures}

\begin{figure}
	\centering
	\begin{subfigure}[b]{0.32\textwidth}
		\centering
		\includegraphics[width=\textwidth]{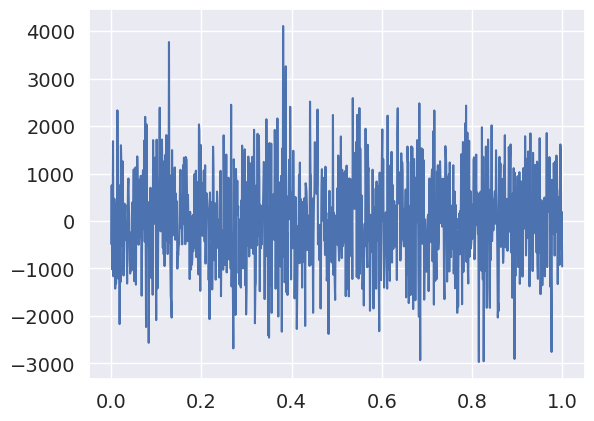}
		\caption{$\alpha = 0$, one sample}
            \label{fig:space_time_white_noise}
	\end{subfigure}
	\hfill
	\begin{subfigure}[b]{0.32\textwidth}
		\centering
		\includegraphics[width=\textwidth]{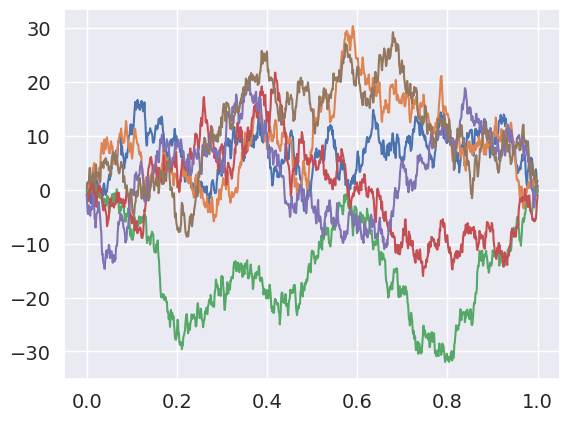}
		\caption{$\alpha = 1$, six samples}
	\end{subfigure}
	\hfill
	\begin{subfigure}[b]{0.32\textwidth}
		\centering
		\includegraphics[width=\textwidth]{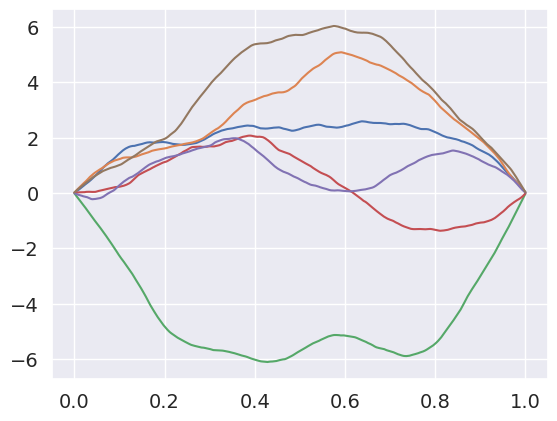}
		\caption{$\alpha = 2$, six samples}
	\end{subfigure}
	
	\caption{In each panel, we plot samples from $\pi^\alpha$ for different values of $\alpha$, where $\pi^\alpha$ is defined in Section \ref{sec:family_gaussian_measures}. We chose $e_k(\cdot) = \sqrt{2}\sin(2 \pi k \, \cdot \, )$ as an orthonormal basis of $L^2$. For $\alpha=0$ we see a sample of space-time white noise, where no function value is correlated to any of its neighboring function values. For $\alpha = 1$ and our specific choice of $e_k$, the sampled measure is the Brownian bridge measure.}
	\label{fig:gaussians_different_alphas}
\end{figure}

We first construct a family $(\pi^\alpha, H^\alpha : -\infty < \alpha < \infty)$ of Gaussian measures $\pi^\alpha$ and their Cameron--Martin spaces $H^\alpha$. This construction allows us to interpolate between measures with different sample smoothness and compare between the algorithms described in Section \ref{sec:guidance} and the canonical implementation of diffusion models described in Section \ref{sec:guidance_wndm}.

To that end, we fix an orthonormal basis $e_k$ of $L^2([0,1])$. Then, we construct a family $(\pi^\alpha :  -\infty < \alpha < \infty)$ of Gaussian measures as the distributions of
\begin{equation}
\sum_{k=1}^\infty k^{-\alpha} Z_k e_k \sim \pi^\alpha,
\label{equ:gaussian_family_distribution}
\end{equation}
where $Z_k \sim \mathcal{N}(0,1)$ i.i.d. The Cameron--Martin space of $\pi^\alpha$ is denoted by $H^\alpha$ and has norm
\begin{equation}
\|x\|^2_{\alpha} = \sum_{k=1}^\infty  k^{2 \alpha} \langle x, e_k \rangle^2_{L^2([0,1])}.
\label{equ:gaussian_family_norm}
\end{equation}
Note that $H^0 = L^2([0,1])$ and therefore $\pi^0$ is space-time white noise. Furthermore, $H^\alpha \subset H^\beta$ for $\alpha > \beta$. 
As we have discussed before, samples of $\pi^\alpha$ will (almost surely) not be elements of the corresponding Cameron--Martin space $H^\alpha$. Nevertheless, the distribution $\pi^\alpha$ is supported on $H^{\alpha - \kappa}$ as long as $\kappa > \frac{1}{2}$; see \citet[Proposition 3.1]{beskos2011hybrid}. 

The exact form of samples of $\pi^\alpha$ depends on the chosen basis $e_k$ in \eqref{equ:gaussian_family_distribution}. In our examples $H^\alpha$ will be $L^2$-Sobolev spaces with either zero or periodic boundary conditions. For the case of zero boundary conditions, we have visualized samples of $\pi^\alpha$ for different values of $\alpha$ in Figure \ref{fig:gaussians_different_alphas}. 

As discussed in Section \ref{sec:guidance}, we want to study two main modeling choices: 
First, we must select a Gaussian measure $\mathcal{N}(0, C)$, or equivalently its Cameron--Martin space $U$, for the noising process. We do that by fixing an $\alpha_\text{noise}$ and setting
\[
    U = H^{\alpha_\text{noise}}.
\]
Second, a loss norm $\|\cdot\|_K$ must be chosen. We do so by fixing an $\alpha_\text{loss}$ and setting
\[
    K = H^{\alpha_\text{loss}}.
\]
As recommended in Section \ref{sec:guidance}, these choices should depend on the structure of $\mudata$. Therefore, we choose an $\alpha_\text{data}$ and define $\mudata$ through a nonlinear transformation of $\pi^{\alpha_\text{data}}$. This way, $\mudata$ will be \emph{non-Gaussian}, but we still have perfect knowledge about where its samples are supported. In particular, we will have 
\[
    \text{support}(\mudata) \approx H^{\alpha_\text{data} - \frac{1}{2}}.
\] 
This gives us the possibility to match $U$ and $K$ to $\mudata$ in different ways. In realistic examples, knowledge about $\mudata$ could come from prior information or by studying the training samples---the empirical covariance matrix is, for example, a natural candidate for specifying $C$ in IDDM2. 

Lastly, we will be able to make explicit statements about the norm of $H$ for which the distance bounds in Theorem \ref{theorem distance} hold, which we will also quantify by choosing an $\alpha_\text{dist}$. The larger $\alpha_\text{dist}$, the better, since the underlying norm for the distance measurement gets stronger (see also the discussion in Section \ref{sec:guidance}).

Note that the limit of the common implementation of diffusion models, which we called WNDM (see Section \ref{sec:guidance_wndm}) will use white noise for the noising process as well as the loss, i.e., $\alpha_\text{noise} = \alpha_\text{loss} = 0$. In that case, $\mathcal{N}(0, C)$ will only be supported on any $H^\alpha$ with $\alpha < -\frac{1}{2}$. Therefore, so that we can apply Theorem \ref{theorem distance} with $H^{\alpha_\text{dist}}$ we have to choose $\alpha_\text{dist} < -\frac{1}{2}$, i.e., use the norm of a negative Sobolev space.

For the two numerical experiments in Sections \ref{sec:numerics_sphere} and \ref{sec:volatility_estimation}, we will proceed as follows: 
\begin{enumerate}
\item We have information about the sample smoothness and support of $\mudata$, in this case in the form of an $\alpha_\text{data}$. 
\item Based on Section \ref{sec:guidance}, we then choose $U$ and $K$, which boils down to the choice of  $\alpha_\text{noise}$ and $\alpha_\text{loss}$.
\item We then know for which norms $\|\cdot\|_H$ our Wasserstein bound in Theorem \ref{theorem distance} holds. In our interpolation family, this boils down to an upper bound for $\alpha_\text{dist}$. Therefore, we can make statements about which properties of $\mudata$ the diffusion model should successfully approximate.
\end{enumerate}

\subsection{Manifold Distribution on a Cameron--Martin Sphere}\label{sec:numerics_sphere}
In this section, we will study a distribution which lies on an infinite-dimensional submanifold of $L^2([0,1])$, namely the unit sphere of some Cameron--Martin space. 
To that end, choose $e_k (\cdot)  = \sqrt{2}\sin(k \pi \, \cdot)$ in the construction of Section \ref{sec:family_gaussian_measures}. For this choice, the Cameron--Martin spaces $H^\alpha$ will be the Sobolev spaces $W^{\alpha, 2}_0$ of functions vanishing at the boundary, and $\pi^1$ is proportional to the distribution of a Brownian bridge. Here we see that we can not only capture smoothness but also structural information, such as boundary conditions, through the choice of an appropriate Gaussian measure. For a more in-depth study of this, see \citet{mathieu2023geometric}.

We draw $N = 50\,000$ samples from $\pi^{\alpha_\text{data}}$, where the data-generating $\alpha_\text{data}$ was set to
\[
	\alpha_\text{data} = 2.
\]
By our discussion in Section \ref{sec:family_gaussian_measures}, these samples are supported on any $H^\alpha$ with $\alpha < \frac{3}{2}$, in particular $H^1$. Now define 
$\alpha_\text{supp} = 1 < \frac{3}{2}$.
The target distribution $\mudata$ is created by projecting $\pi^{\alpha_\text{data}}$ onto the $10$-sphere in $H^{\alpha_{\text{supp}}}$, i.e., applying the map
\[
	H^{\alpha_\text{supp}} \to H^{\alpha_\text{supp}}, \quad x \mapsto 10 \frac{x}{\|x\|_{H^\alpha_\text{supp}}}
\]
to all samples. We depict some of the training samples and a heatmap of their marginal densities in Figure \ref{fig:sphere_256D_train}.
\begin{figure}
	\centering
	\begin{subfigure}[b]{0.6\textwidth}
		\centering
		\includegraphics[width=\textwidth]{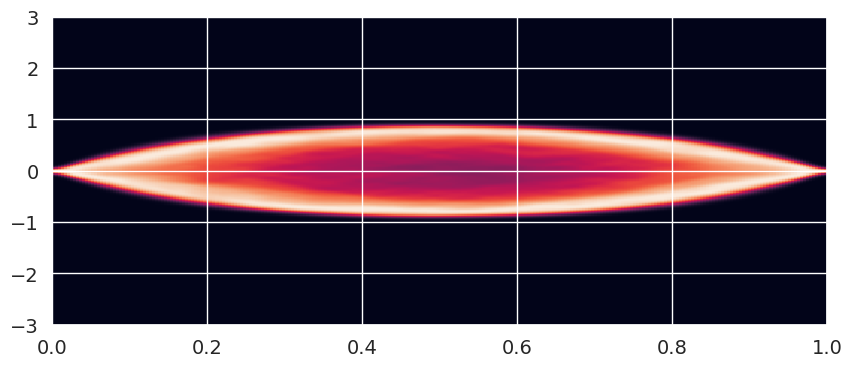}
	\end{subfigure}
	\hfill
	\begin{subfigure}[b]{0.38\textwidth}
		\centering
		\includegraphics[width=\textwidth]{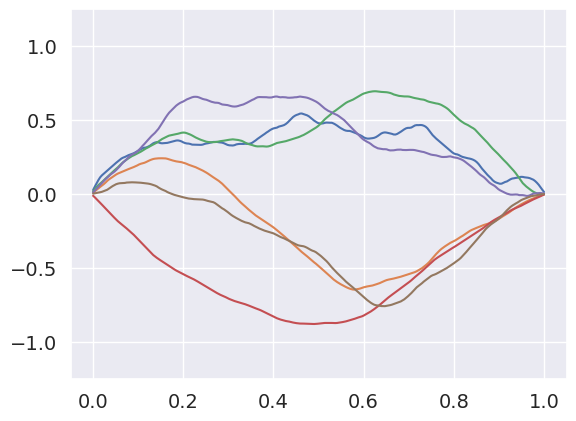}		
	\end{subfigure}	
	\caption{We generated $50~000$ training examples from the distribution described in Section \ref{sec:numerics_sphere}. On the left, we show a heatmap of the resulting marginal densities of function values at each point in the domain $[0,1]$. On the right, we plot a few training samples.}
	\label{fig:sphere_256D_train}
\end{figure} 

As described in Section \ref{sec:guidance}, we use the theory to guide the choices for $\alpha_\text{noise}$ and $\alpha_\text{loss}$. The data distribution was not absolutely continuous with respect to a Gaussian, since we projected it to a submanifold (the sphere). Therefore, we must apply Theorem \ref{thm:uniqueness_manifold} to obtain uniqueness. To satisfy the assumptions of Theorem \ref{thm:uniqueness_manifold}, however, the Cameron--Martin space $U$ has to contain $\mudata$. Hence, we will apply the IDDM1 from Section \ref{sec:guidance}. 

To apply IDDM1, we choose $U$ so that it contains the support of $\mudata$ and $\mathcal{N}(0, C)$. This is accomplished by setting $\alpha_\text{noise} = 1$. Then, following the design principles of IDDM1, we pick the loss norm to be $K = U$, i.e., $\alpha_\text{loss} = \alpha_\text{noise} = 1$, and learn the score by using the Cameron--Martin norm in the loss. 

Note that the bound in Theorem \ref{theorem distance} holds for any $\alpha_\text{dist}$ smaller than
\[
    \alpha_\text{dist} 
    < \min\left \{\alpha_\text{data} - \frac{1}{2}, \alpha_\text{noise} - \frac{1}{2}, \alpha_\text{loss} \right \} 
    = \min \left \{\frac{3}{2}, \frac{1}{2}, 1 \right \} = \frac{1}{2}.
\]
For WNDM, i.e., the canonical implementation of diffusion models described in Section \ref{sec:guidance_wndm}  with $\alpha_\text{loss} = \alpha_\text{noise} = 0$, the upper bound is
$-\frac{1}{2}$. Therefore, while we do not expect the samples of WNDM to match the smoothness class of $\mudata$, we expect the samples of IDDM1 to at least partially retain the smoothness.


\begin{figure}
	\centering
	\begin{subfigure}[b]{0.32\textwidth}
		\centering
		\includegraphics[width=\textwidth]{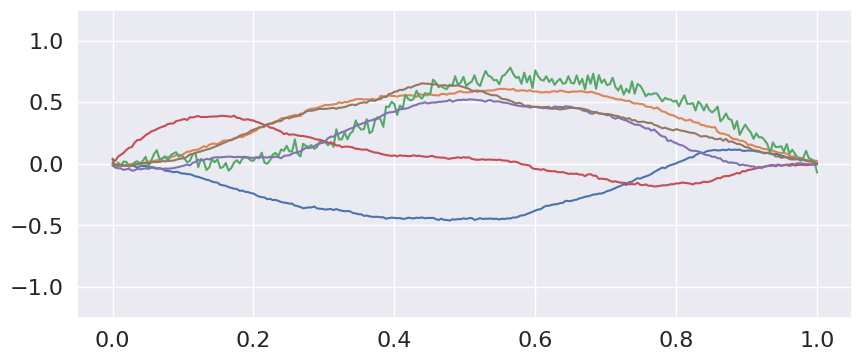}
	\end{subfigure}
	\hfill
	\begin{subfigure}[b]{0.32\textwidth}
		\centering
		\includegraphics[width=\textwidth]{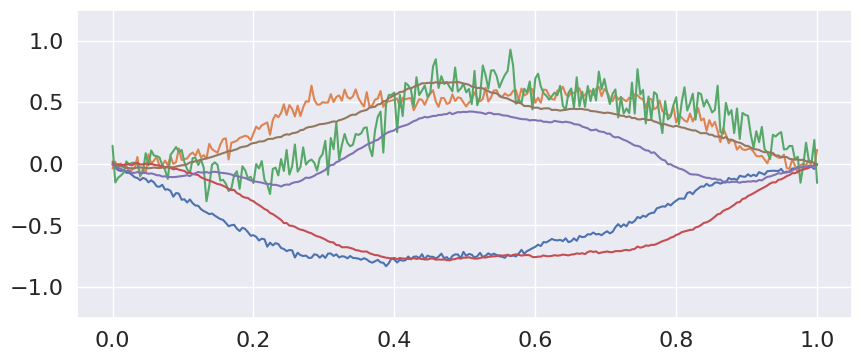}
	\end{subfigure}
	\hfill
	\begin{subfigure}[b]{0.32\textwidth}
		\centering
		\includegraphics[width=\textwidth]{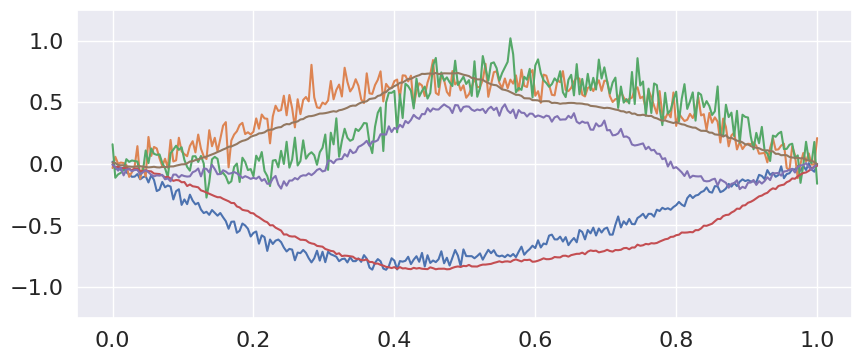}
	\end{subfigure}

	\begin{subfigure}[b]{0.32\textwidth}
	\centering
	\includegraphics[width=\textwidth]{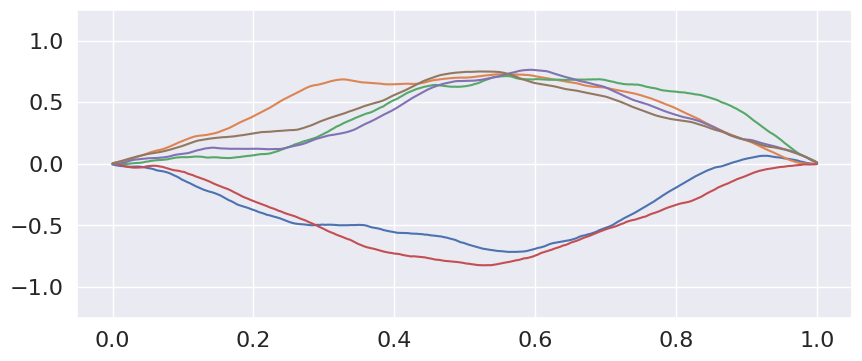}
        \caption{Epoch 10}
	\end{subfigure}
	\hfill
	\begin{subfigure}[b]{0.32\textwidth}
		\centering
		\includegraphics[width=\textwidth]{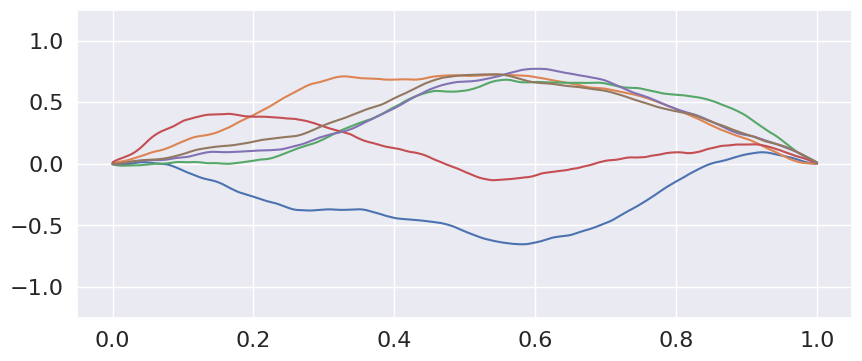}
            \caption{Epoch 30}
	\end{subfigure}
	\hfill
	\begin{subfigure}[b]{0.32\textwidth}
		\centering
		\includegraphics[width=\textwidth]{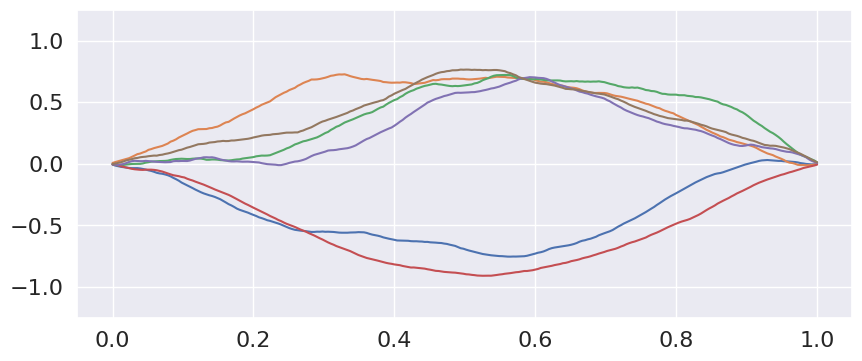}
            \caption{Epoch 60}
	\end{subfigure}


	\caption{Example of Section \ref{sec:numerics_sphere}: samples generated by WNDM (row 1) and IDDM1 (row 2) after increasing numbers of training epochs. Samples from the true measure can be compared in Figure \ref{fig:sphere_256D_train}.}
	\label{fig:samples_gaussian_sphere}
\end{figure}

Figure \ref{fig:samples_gaussian_sphere} shows samples generated by the two models. We see that our theoretical findings are confirmed: WNDM fails to learn the smoothness or correlation structure of the samples. 
Solely at training epoch 10 the WNDM algorithm generated some samples that seemed to have the right smoothness, but even those actually contain jitter if one looks closely. Overall, the training process was very unstable regarding the data smoothness, and minimizing the loss did not seem to correlate with also matching the derivatives of the functions. 
On the other hand, IDDM1 produces samples from the correct smoothness class, from the start of training onwards.

Note that both algorithms matched the marginals quite well, as can be seen in the heatmap plots of Figure \ref{fig:heatmaps_gaussian_sphere}, which is also suggested by the theory: even if Theorem \ref{theorem distance} only holds for an underlying negative Sobolev norm, the overall distribution and in particular its marginals should still match the true marginals (see Section \ref{sec:negative_sobolev_spaces}).

\begin{figure}
	\centering
	\begin{subfigure}[b]{0.48\textwidth}
		\centering
		\includegraphics[width=\textwidth]{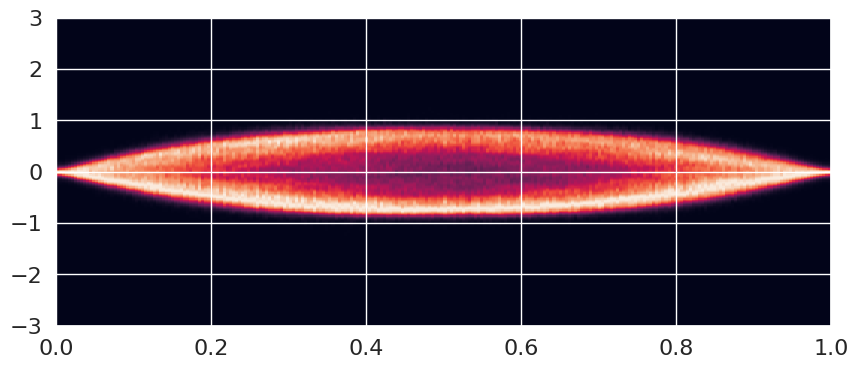}
		\caption{WNDM}
	\end{subfigure}
	\hfill
	\begin{subfigure}[b]{0.48\textwidth}
		\centering
		\includegraphics[width=\textwidth]{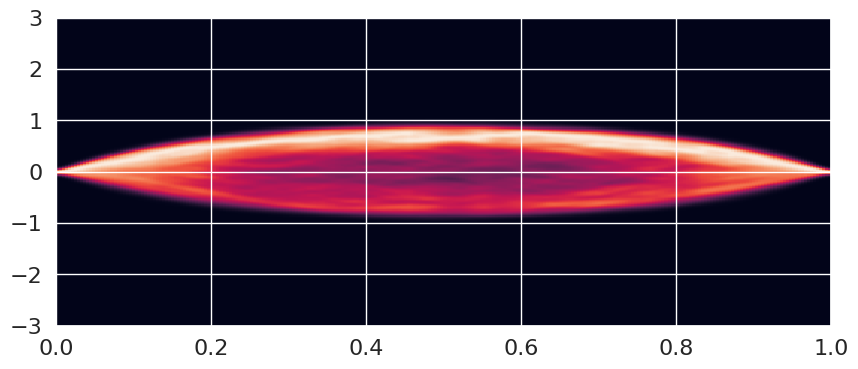}
		\caption{IDDM1}
	\end{subfigure}
	\hfill
	\caption{Example of Section \ref{sec:numerics_sphere}: each vertical slice shows a heatmap of the marginal density estimated from 2048 samples generated by each of the diffusion models, after 60 epochs of training. For comparison, the heatmap of the 50\,000 training examples is plotted in Figure \ref{fig:sphere_256D_train}. The one-dimensional marginals are matched well by both algorithms.}
	\label{fig:heatmaps_gaussian_sphere}
\end{figure}

\subsection{\jp{Conditional Sampling and} Infinite-Dimensional Bayesian Inverse Problems: Volatility Estimation}
\label{sec:volatility_estimation}

The following numerical experiment is inspired by Bayesian inverse problems (BIPs) \citep{stuart2010inverse}, which here we approach via the paradigm of \emph{simulation-based inference}. In this setting, we will use our infinite-dimensional diffusion models for conditional sampling. 

We assume that we have some knowledge about an unknown random variable $X\in\mathcal{H}$ in the form of a  measurement $y \in \mathbb{R}^l$ drawn from
\[
Y \sim q(X, \cdot),
\]
where $q$ is an observation kernel. Furthermore, we have some prior information on $X$, formalized through a prior probability distribution:
\[
X \sim \pi \coloneqq \mathcal{N}(0, C_\mu).
\]
By Bayes' theorem, the posterior distribution $\nu$ of $X$ given some observations $Y=y$ is given by
\begin{equation}
	d \pi( \, \cdot \,  | Y = y) \propto q(\cdot, y) ~\mathd \pi.
	\label{equ:posterior_distribution}
\end{equation}
Gold-standard methods for asymptotically exact sampling of distributions like \eqref{equ:posterior_distribution} involve Markov chain Monte Carlo (MCMC), e.g., Hamiltonian Monte Carlo \citep{duane1987hybrid} or, in the infinite-dimensional case, Hilbert space Hamiltonian Monte Carlo (HSHMC) \citep{beskos2011hybrid}, or other geometry-exploiting infinite-dimensional MCMC methods \citep{cotter2013statsci, cui2016dimension, kim2023hippylib}.

These MCMC methods, however, rely on having an explicit formula for the density of $\nu$ (up to a normalizing constant). In many cases this is not possible---for example, if $q$ or any of its components is given as a black-box model. To train a conditional diffusion model, on the other hand, we only need \textit{samples} from the joint distribution of $(X, Y)$. These can be generated by sampling $X^i$ from the prior measure and sampling $Y \sim q(X^i, \cdot)$. We then train a conditional diffusion model to generate samples from $X | Y = y$ for any $y$. This is done by making the score model $s$ not only depend on $X_t$, but also on $Y$, i.e., we have a model $s(t, X_t, Y)$, which predicts $X_0$ given $Y=y$. \jp{The only modification to Algorithm \ref{alg:training} is that one sub-samples paired states and observations $(x^i, y^i)$ in line 3 from the training data, and then inputs $y^i$ into the diffusion model on line 6.}
During generation, one can then input the observation value $y$ that one wants to condition on during simulation of the reverse SDE \citep{batzolis2021conditional}. \jp{In Algorithm \ref{alg:sampling}, this would correspond to inputting a fixed value of $y$ for all times $t$ in line 5. Hence the entire procedure is sample-driven, and an example of simulation-based inference \citep{cranmer2020frontier}.}

We now proceed to a specific instance of a Bayesian inverse problem. The experiment is inspired by volatility estimation. We assume that we observe a path of a time series, for example a stock price, modeled as
\[
	\mathd S_\tau = \sigma_\tau S_\tau \mathd B_\tau,
\]
with no drift and a time-dependent volatility $\sigma_\tau$. The solution to the above equation is given by a geometric Brownian motion, i.e.,
\[
	S_\tau = S_0 \exp \left (\int_0^\tau \sigma_r \mathd B_r - \frac{1}{2} \int_0^\tau \sigma_r^2 \mathd r \right).
\]
We simulate paths of the above and observe $S_\tau$ at discrete times $\tau_1=\frac{1}{4}, \tau_2 = \frac{2}{4}, \tau_3 = \frac{3}{4}, \tau_4=1$. Then, we apply a log-transformation and define $r_i$ as the log-returns: 
\begin{equation*}
	r_i \coloneqq \log S_{\tau_{i}} - \log S_{\tau_{i-1}} = \int_{\tau_{i-1}}^{\tau_{i}} \sigma_r \mathd B_r - \frac{1}{2} \int_{\tau_{i-1}}^{\tau_{i}}\sigma_r^2 \mathd r \sim \mathcal{N}\left (-\frac{1}{2} v_{i}, v_{i} \right ),~ \text{with } v_i \coloneqq \int_{\tau_{i-1}}^{\tau_i} \sigma_r^2 \mathd r.
\end{equation*} 
Here, we set $\tau_0 = 0$ for notational convenience.
Since $\sigma$ should be positive, we model it as
\[
\sigma_\tau = \exp(a_\tau),
\]
and seek to infer the log-volatility $a: [0, 1] \to \mathbb{R}$.

Again, we define a family of Gaussian measures as in Section \ref{sec:family_gaussian_measures}. This time we use a different orthonormal basis of $L^2([0,1])$, given by
\[
	e_k(\tau) = \begin{cases}
			\sqrt{2} \cos(k \pi \tau), & \text{if } k \text{ even}\\
			\sqrt{2} \sin((k+1) \pi \tau), & \text{otherwise}
		  \end{cases}.
\]
This leads to Gaussian measures whose samples have periodic boundary conditions.
Since $e_k$ and $e_{k+1}$ (for $k$ uneven) should have the same `magnitude,' we slightly modify \eqref{equ:gaussian_family_distribution} and \eqref{equ:gaussian_family_norm}: for $k$ uneven, we replace $(k+1)^{-\alpha}$ by $k^{-\alpha}$. All the discussed properties of the family $\pi^\alpha$ are not affected by this change, since the decay of the eigenvalues is asymptotically the same. We put a prior on $a$. It's covariance is given by $\frac{1}{2} C_\text{prior}$, where $C_\text{prior}$ is the covariance of $\pi^{\alpha_\text{data}}$, with $\alpha_\text{data} = 4$:
\begin{equation*}
	a \sim \mathcal{N}(0, C_{\text{prior}}).
\end{equation*}
The goal of a conditional diffusion model is to generate samples from the posterior
\begin{equation}
d\pi^{\alpha_\text{data}}(a_\tau | r_1, r_2, r_3, r_4) 
\propto \prod_{i=1}^4 \mathcal{N} \left (r_i; -\frac{1}{2}v_i, v_i \right) \mathd \pi^{\alpha_\text{data}},
\label{equ:posterior_volatility}
\end{equation} 
for a fixed observation $r = (r_1, r_2, r_3, r_4)$. Via the model defined above, each $v_i$ is a functional of $\sigma_\tau$ and thus $a_\tau$. 

For training, we generate $N = 50\,000$ samples from the prior $\{a^n\}_{n=1}^{N}$ together with simulated observations $\{r^n\}_{n=1}^N$. The trained diffusion models should, for any input $r \in \mathbb{R}^4$, generate samples from \eqref{equ:posterior_volatility}. 

To assess the performance of the trained models, we drew a random $\tilde{a}_\tau$ and corresponding observations $\tilde{r} = (\tilde{r}_1, \tilde{r}_2, \tilde{r}_3, \tilde{r}_4)$. We used the HSHMC algorithm to generate $50\,000$ ``reference'' posterior samples from \eqref{equ:posterior_volatility} for this fixed observation value $\tilde{r}$. We plot these posterior samples and their heatmap, as well as the data-generating value $\tilde{a}$ of the log-volatility, in Figure \ref{fig:256D_heatmap_truth}. After training, we input $\tilde{r}$ (which the diffusion models have not seen before) to the conditional diffusion models and compare the generated samples to those from HSHMC.

As in Section \ref{sec:numerics_sphere}, we again compare the canonical diffusion model implementation WNDM against an implementation motivated by the infinite-dimensional theory. 
In this case, since we are sampling from a Bayesian inverse problem with a Gaussian prior, we are in the setting of Theorem \ref{thm:uniqueness_gaussian}. Therefore, we will implement the IDDM2 algorithm from Section \ref{sec:guidance}. 
We match the noise structure to the data by setting $\alpha_\text{noise} = \alpha_\text{data} = 4$, which is justified by the form of \eqref{equ:posterior_volatility} of $\mudata$.
Then we set $\alpha_\text{loss} = 2$, such that $K$ supports $\mudata$ and $\pi^{\alpha_\text{noise}}$. Note that any $\alpha_\text{loss} < \frac{7}{2}$ would also have been a valid choice. The choice $\alpha_\text{loss} = 3$ worked comparably well in our numerical experiments.

We compare samples generated by the two diffusion models in Figure \ref{fig:256D_samples_trained}. Again, the WNDM algorithm did not match the smoothness class of $\mudata$ in a stable way. 
While during training, there were times at which the network generated smooth samples, it later unlearned to do so. 
The IDDM2 algorithm outputs samples of the correct class at every point during training. 
Both algorithms are able to match the marginal distributions, although IDDM2 does slightly better, as seen in Figure \ref{fig:256D_heatmap_trained}. Therefore, as in Section \ref{sec:numerics_sphere}, these numerical experiments confirm the theoretical predictions made in Section \ref{sec:dms_in_infinite_dimensions}.

\begin{remark}
    Note that for our choice of $\alpha_\text{noise}$, the reverse SDE now starts with initial condition $\pi^{\alpha_\text{data}}(a_\tau) = \mathcal{N}(0, C_\text{prior})$ and ends in the posterior $\pi^{\alpha_\text{data}}(a_\tau | r)$. Therefore, it has learned to transport the prior to the posterior. Furthermore, in this case, we can interpret the reverse SDE as a smoothed version of the forward process, i.e., an Ornstein--Uhlenbeck process conditioned on its terminal values. This also opens up the way to interpret the training of the reverse SDE as an infinite-dimensional control problem. 
\end{remark}

\begin{figure}
	\centering
	\begin{subfigure}[b]{0.49\textwidth}
		\centering
		\includegraphics[width=\textwidth]{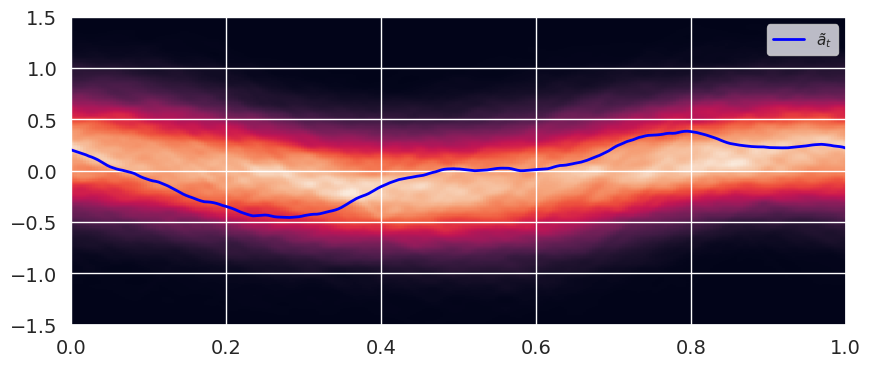}
	\end{subfigure}
	\hfill
	\begin{subfigure}[b]{0.49\textwidth}
		\centering
		\includegraphics[width=\textwidth]{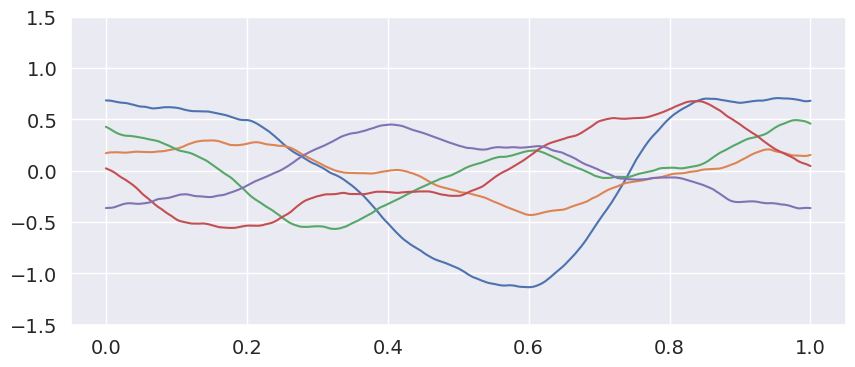}		
	\end{subfigure}	
	\caption{Example of Section~\ref{sec:volatility_estimation}. As a reference/comparison, we generate $50\,000$ high-quality posterior samples from $d \pi^{\alpha_\text{data}}(a_\tau \vert \tilde{r})$ using the Hilbert space Hamiltonian Monte Carlo algorithm. On the left is a heatmap of posterior marginal densities of $a_\tau$, at each point in the domain $\tau \in [0,1]$. On the right, we plot a few example posterior samples.}
	\label{fig:256D_heatmap_truth}
\end{figure}

\begin{figure}
	\centering
	\begin{subfigure}{0.32\textwidth}
		\centering
		\includegraphics[width=\textwidth]{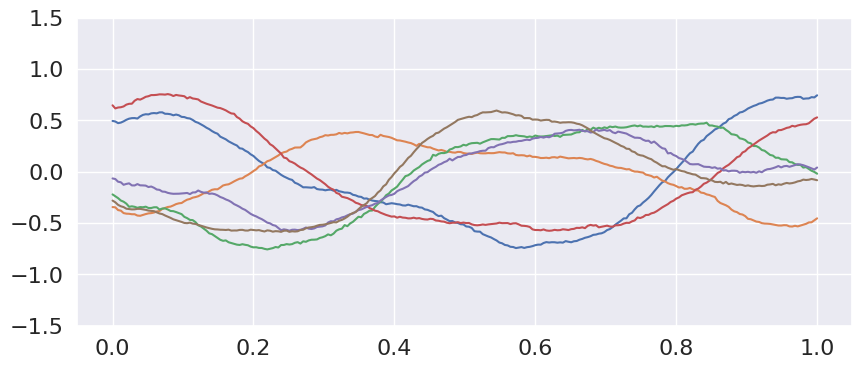}
	\end{subfigure}
	\hfill
	\begin{subfigure}[b]{0.32\textwidth}
		\centering
		\includegraphics[width=\textwidth]{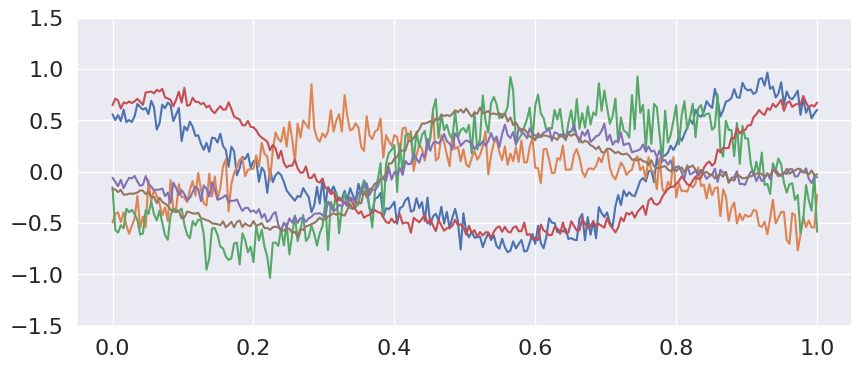}
	\end{subfigure}
	\hfill
	\begin{subfigure}[b]{0.32\textwidth}
		\centering
		\includegraphics[width=\textwidth]{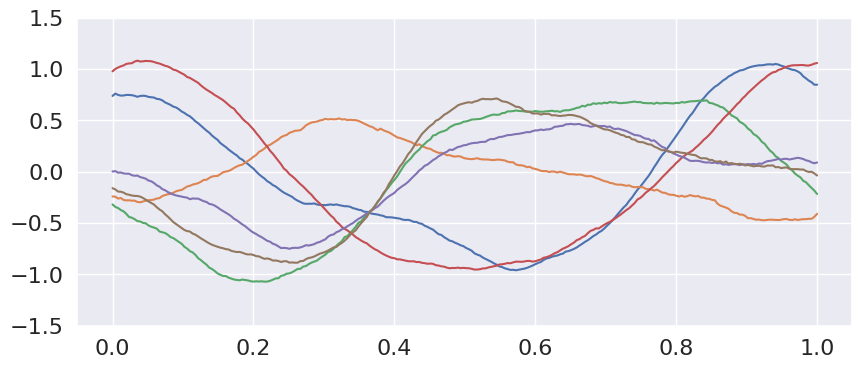}
	\end{subfigure}

	\centering
	\begin{subfigure}[b]{0.32\textwidth}
		\centering
		\includegraphics[width=\textwidth]{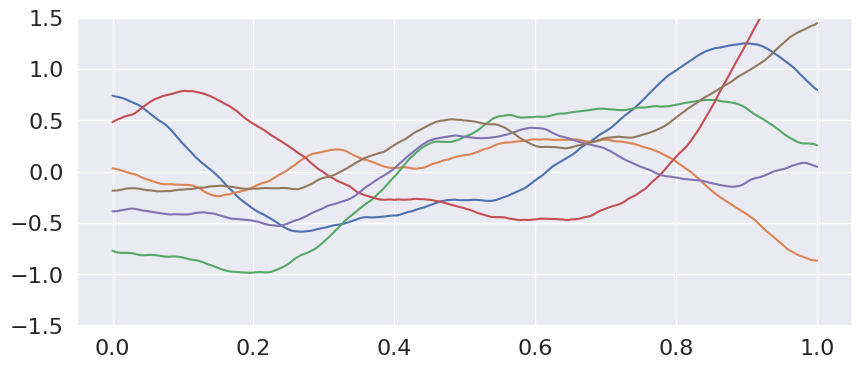}
		\caption{Epoch 10}
	\end{subfigure}
	\hfill
	\begin{subfigure}[b]{0.32\textwidth}
		\centering
		\includegraphics[width=\textwidth]{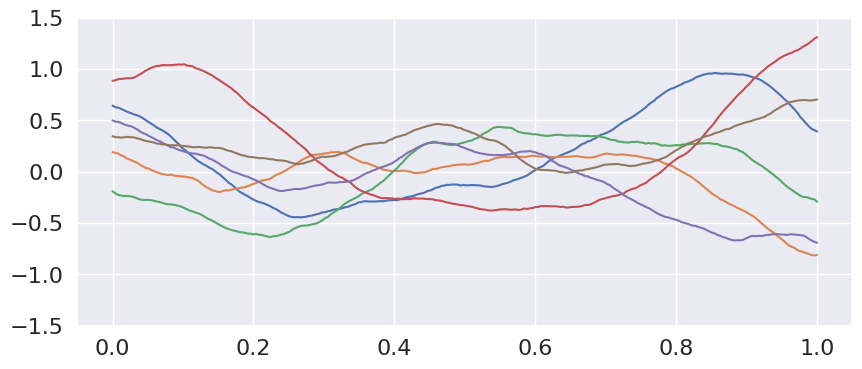}
		\caption{Epoch 30}
	\end{subfigure}
	\hfill
	\begin{subfigure}[b]{0.32\textwidth}
		\centering
		\includegraphics[width=\textwidth]{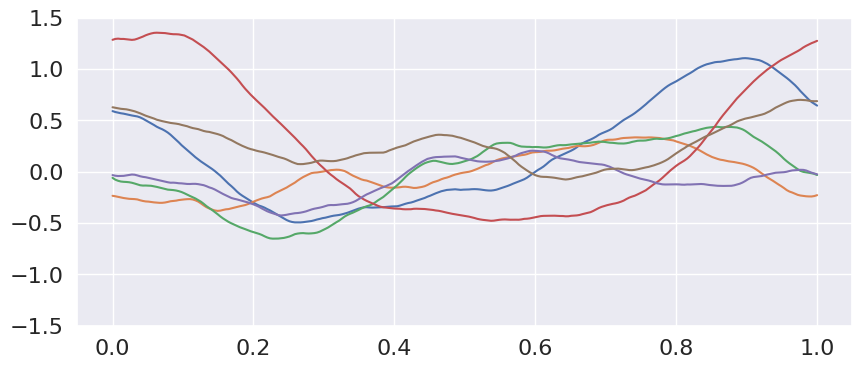}
		\caption{Epoch 60}
	\end{subfigure}
	\caption{Example of Section~\ref{sec:volatility_estimation}. Conditional samples from WNDM (upper row) and IDDM2 (lower row) after varyings number of training epochs. Compare to the high-quality posterior samples generated using Hamiltonian Monte Carlo in Figure \ref{fig:256D_heatmap_truth}.}
	\label{fig:256D_samples_trained}
\end{figure}

\begin{figure}
	\centering
	\begin{subfigure}[b]{0.32\textwidth}
		\centering
		\includegraphics[width=\textwidth]{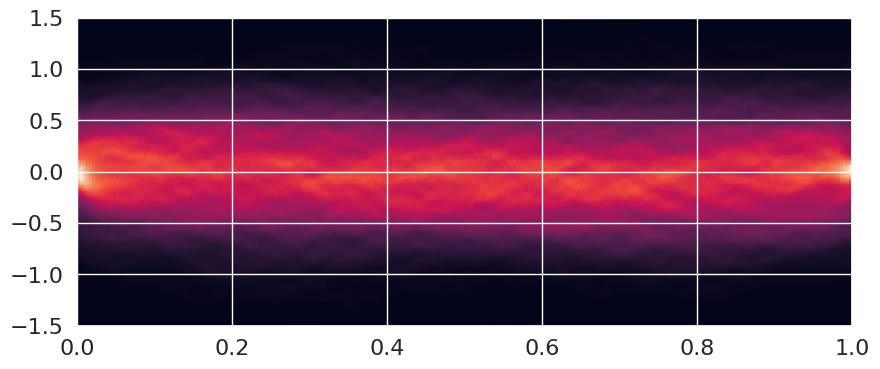}
	\end{subfigure}
	\hfill
	\begin{subfigure}[b]{0.32\textwidth}
		\centering
		\includegraphics[width=\textwidth]{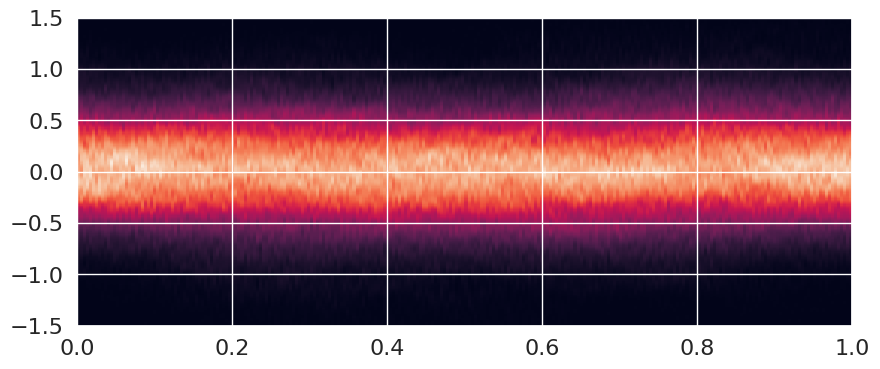}
	\end{subfigure}
	\hfill
	\begin{subfigure}[b]{0.32\textwidth}
		\centering
		\includegraphics[width=\textwidth]{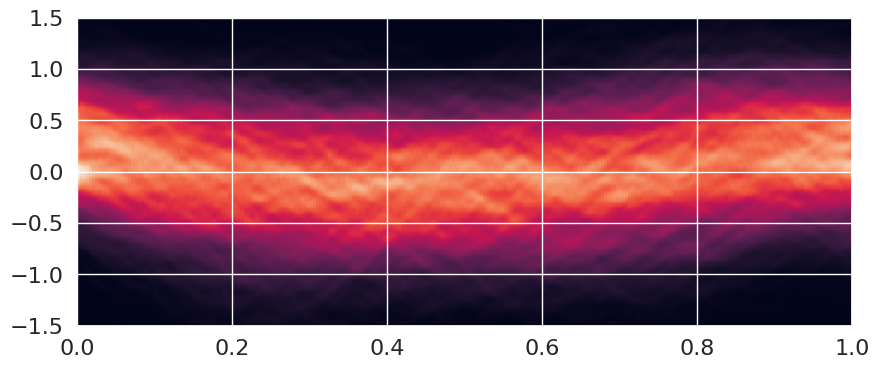}
	\end{subfigure}

	\centering
	\begin{subfigure}[b]{0.32\textwidth}
		\centering
		\includegraphics[width=\textwidth]{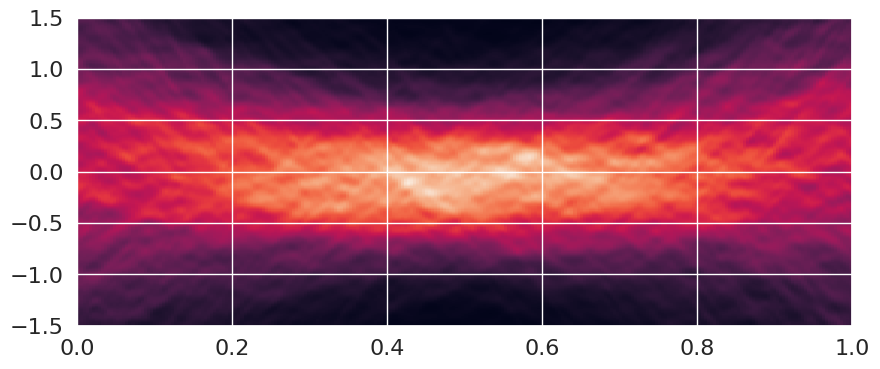}
		\caption{Epoch 10}
	\end{subfigure}
	\hfill
	\begin{subfigure}[b]{0.32\textwidth}
		\centering
		\includegraphics[width=\textwidth]{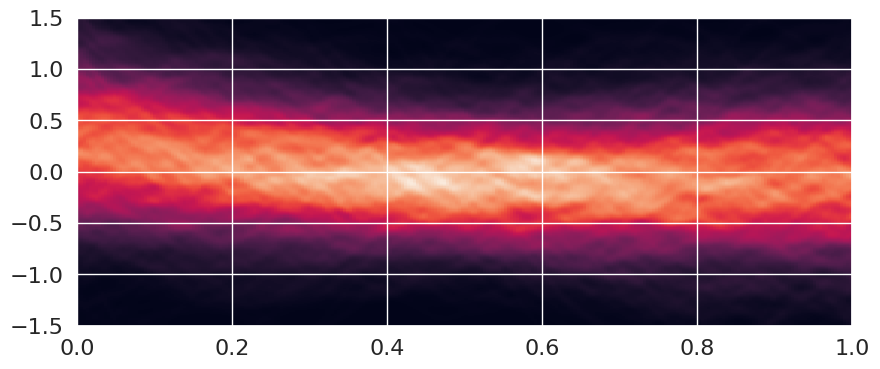}
		\caption{Epoch 30}
	\end{subfigure}
	\hfill
	\begin{subfigure}[b]{0.32\textwidth}
		\centering
		\includegraphics[width=\textwidth]{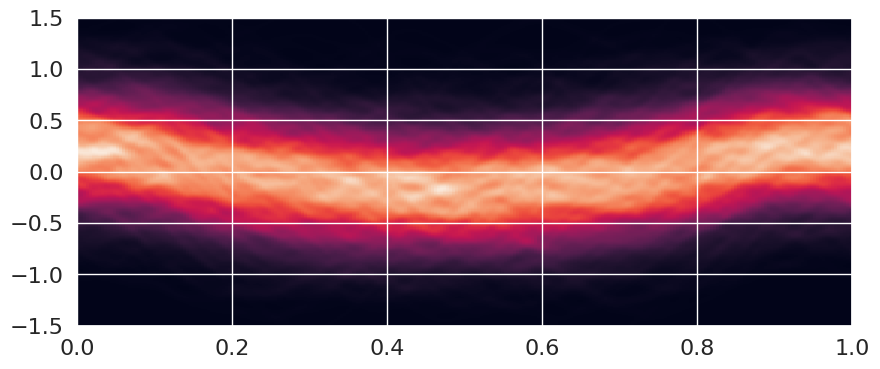}
		\caption{Epoch 60}
	\end{subfigure}
	\caption{Example of Section~\ref{sec:volatility_estimation}. Heatmaps of the $2048$ conditional samples generated by WNDM (upper row) and IDDM2 (lower row), for increasing numbers of training epochs. Reference heatmaps generated via Hilbert space Hamiltonian Monte Carlo are in Figure \ref{fig:256D_heatmap_truth} for comparison.}
	\label{fig:256D_heatmap_trained}
\end{figure}

\jp{
\subsection{Limitations}
It is important to point out that, generally, the white noise diffusion models were able to generate samples from the appropriate smoothness class after several rounds of retraining. However, the sample smoothness was not robust. While the models typically fitted the moments and marginals of the distributions quite well, the smoothness of the samples proved inconsistent---varying with the initial conditions and the duration of the training. Specifically, depending on the network's initial training parameters, the samples could either be smooth, become less smooth over time, or exhibit initial smoothness that diminished as the training progressed and the marginals were better fitted.} 

\jp{
We also note that the numerics here are intended as an illustration of the preceding theory and the resulting guidelines. A more comprehensive numerical study could train multiple models with different random initial conditions on the same training data set, perform ablation studies over individual design choices, and compare numerical measures of the smoothness of paths. It would also be of interest to evaluate the impact of different neural network architectures. Such studies are outside the scope of this article, however.
}

\section{Summary}
We have formulated the diffusion-based generative modeling approach directly on infinite-dimensional Hilbert spaces. 
Our formulation involves specifying infinite-dimensional forward and reverse SDEs and an associated denoising score matching objective. We prove that our formulation is well-posed. To that end, we show that the reverse SDE we wish to approximate has a unique solution; furthermore, we show under which conditions the denoising score matching objective generalizes to an infinite-dimensional setting. Building on these results, we are able to prove dimension-independent convergence bounds for diffusion models, which hold in the infinite-dimensional case.

These theoretical developments reveal an intricate relationship between the properties of the target/data-generating measure $\mudata$ and the choices of the Wiener process $W_t^U$ and the loss norm in the denoising score matching objective $\|\cdot\|_K$. We utilize this knowledge to develop guidelines on how to make such choices for a given $\mudata$. For image distributions, these guidelines are in line with the canonical choices made in practice. For other target distributions $\mudata$, however, the algorithm design should be modified. We apply these modifications to two generative modeling tasks that are discretizations of infinite-dimensional problems, and the numerical results confirm our theoretical findings.


\section*{Acknowledgements}
JP and SR have been partially supported by Deutsche Forschungsgemeinschaft (DFG), Project ID 318763901, SFB-1294. 
SW and YM acknowledge support from Air Force Office of Scientific Research (AFOSR) MURI Analysis and Synthesis of Rare Events, award number FA9550-20-1-0397. JP and SR would also like to thank the Isaac Newton Institute for Mathematical Sciences, Cambridge, for support and hospitality during the programme  {\it The Mathematical and Statistical Foundation of Future Data-Driven Engineering} where work on this paper was also undertaken. This work was supported by EPSRC grant no EP/R014604/1.

\appendix

\section{Numerical Details}\label{sec:numerics_details}
We list some implementation details:
\begin{enumerate}
\item Instead of running the forward SDE with a uniform speed, we instead ran it using a speed function $\alpha(t)$,
\[
    \mathd X_t = -\frac{1}{2}\beta(t) X_t \mathd t + \sqrt{\beta(t) C} \mathd W_t.
\]
The SDE was then run on the interval $[0,1]$. This corresponds to a time-change and is common practice for diffusion models; see for example \citet{song2021scorebased}. We used the time-change function $\beta(t)$ as in \citet{song2021scorebased}, i.e.,
\[
      \beta(t) = 0.001 + t(20 - 0.001).
\]

\item We discretized the unit interval $[0,1]$ into $M = 1000$ evenly spaced points for training and generation. 

\item We added a last denoising step, as is common practice. That means, that in the last step of the Euler-Maruyama integrator, we did not add any extra noise any more, but just evaluated the drift and took a step in that direction. This is even more important in our case for comparison than normally, since the added noise has correlation structure $\mathcal{N}(0, C)$ which is close to $\mudata$, while the added noise in WNDM has structure $\mathcal{N}(0, I)$. Therefore, adding this noise to all samples right before comparing them would have given an unfair disadvantage to WNDM.

\item Furthermore, we added $\varepsilon_\text{reg} \text{Id}$ onto the covariance matrices for numerical stability, where $\varepsilon_\text{reg} = 0.0001$. 

\item Our experiments were implemented in JAX, and we used the U-Net architecture from \citet{song2021scorebased} for the neural network.
\end{enumerate}


\section{Negative Sobolev Wasserstein Distances}\label{sec:negative_sobolev_spaces}
We briefly and heuristically explore what it means if $\mu$ and $\pi$ are close in $\mathcal{W}_2$ when the underlying norm is a negative Sobolev norm. Denote by $H^\alpha$ the spaces defined in Section \ref{sec:family_gaussian_measures}. Now let $f \in H^{\alpha}$. Assume that $X, Y$ form a $\mathcal{W}_2^{-\alpha}$-optimal coupling, i.e., $X \sim \mu$ and $Y \sim \pi$ and that
\[
    \mathbb{E}[\|X - Y\|_{-\alpha}] \le \mathcal{W}_2^{-\alpha}(\pi, \mu).
\]
Now,  $H^{-\alpha}$ can be viewed as the dual of $H^\alpha$ and therefore we can evaluate $X$ or $Y$ on $f$. Then,
\[
    \mathbb{E}[|X(f) - Y(f)|] \le \|f\|_{\alpha} \mathbb{E}[\|X - Y\|_{-\alpha}] \le \|f\|_{\alpha} \mathcal{W}^{-\alpha}_2(\mu, \pi)
\]
Therefore, we can expect the evaluations of $X$ and $Y$ on test-functions from $H^\alpha$ to be close. The larger $\alpha$ gets, the fewer test functions are in $H^\alpha$. Note that for any $\alpha\ge 0$ (and in particular for our typical case $H^0 = L^2$), $H^\alpha$ will not contain point evaluations, and therefore we cannot expect point evaluations of $X$ and $Y$ to be close (in case they are  well-defined).

\section{Exponential Integrator}\label{sec:exponential_integrator}
The exponential integrator \citep{certaine1960solution} is derived by splitting the following SDE
\begin{equation}
	\mathd Y_t = \frac{1}{2} Y_t + s(t, Y_t) \mathd t + \sqrt{C}\mathd W_t = {{\frac{1}{2} Y_t}} + {s(t, Y_t) \mathd t} + \sqrt{C}\mathd W_t
	\label{equ:sde_linear_nonlinear}
\end{equation}
into the linear and {nonlinear} part.
The exact solution is then given by
\begin{align*}
	Y_{t + \Delta t} &= e^{t/2} Y_t 
	= e^{t/2} Y_t + (e^{\Delta t / 2} - 1) s(t, Y_t) + \sqrt{e^{\Delta t} - 1} \xi.
\end{align*}
The exponential integrator was first applied to diffusion models in \citet{zhang2023fast}. 

\section{Proofs for Section \ref{sec:inf_dimensional_forward_backward_sde}}\label{appendix technical lemmas}
\subsection{Proofs for Section \ref{section score}}\label{sec:proofs_score}
We first prove Lemma \ref{lemma score rewrite}

\begin{proof}
    In finite dimensions we can explicitly write $p_t$ as
    \begin{equation}\label{equ:pt_finite_dim}
        p_t(x) =   \int p_{t | 0}(x | x_0) \mathd \mudata (x_0)
    \end{equation}
    where $p_{t|0}$ is the time $t$-transition kernel of the forward SDE, given by
    \[
p_{t | 0}(x | x_0) = \frac{1}{\sqrt{(2\pi v_t)^D\text{det}(C)}} \exp \left( - \frac{1}{2 (1 - e^{- t})} \left\langle \left( x - e^{- \frac{t}{2}} x_0 \right), C^{- 1} \left( x - e^{- \frac{t}{2}} x_0 \right)
    \right\rangle_H^2 \right).
    \]
    We can exchange the derivative with the integral by Leibniz rule since the derivative of the integrand is bounded.
    Therefore, we have that
  \begin{eqnarray*}
    \nabla \log p_t(x)
    & = & \frac{1}{p_t (x)} \nabla_x \int p_{t | 0}(x | x_0) \mathd \mudata (x_0)\\
    & = & -\frac{1}{(1 - e^{- t})}  \int C^{- 1} \left( x - e^{- \frac{t}{2}}
    x_0 \right)  \frac{p_{t| 0} (x|x_0)}{p_t (x)} \mathd \mudata
    (x_0)\\
    & = & -\frac{1}{(1 - e^{- t})} C^{- 1} \left( \mathbb{E} \left[ X_t - e^{-
    \frac{t}{2}} X_0 |X_t = x \right] \right).
  \end{eqnarray*}
  In the last equation we
  used the formula for the conditional density; see \citet[Section 4.1.c]{Durrett2005probability}.
\end{proof}

We now prove Lemma \ref{lemma:reverse_martingale}.

\begin{proof}
	We first treat continuity. Since $p_t$ can be written as the convolution of $\mudata$ with a Gaussian kernel, we know that it is smooth in space and time on $(0, \infty]$. Furthermore, it holds that $p_t > 0$ everywhere. Due to that, we can deduce that $\nabla \log p_t = \frac{\nabla p_t}{p_t}$ is continuous in $t$. Since $X_t$ is also continuous in time, we get that $\nabla \log p_t(X_t)$ is time-continuous. 
	
	Now we prove the reverse-time martingale property. Since we can write
	\begin{eqnarray*}
		p_t (x_t) & = & \int p_s (x_s) p_{t|s} (x_t |x_s) \mathd x_s
	\end{eqnarray*}
	and by using since the transition kernel $p_{t|s}(x_t|x_s)$ is given by $\mathcal{N}(e^{-(t-s)/2}x_s, \frac{1}{1 - e^{-(t-s)}}C)$,
	\begin{eqnarray*}
    & & \nabla p_t (x_t)\\
		 & = & \nabla_{x_t} \int \exp \left( - \frac{1}{2 (1 -
			e^{- (t - s)})} \left\langle x_t - e^{- \frac{(t - s)}{2}} x_s, x_t - e^{-
			\frac{(t - s)}{2}} x_s \right\rangle \right) p_s (x_s) \mathd x_s\\
		& = & \int e^{\frac{(t - s)}{2}} \nabla_{x_s} \exp \left( - \frac{1}{2 (1
			- e^{- (t - s)})} \left\langle x_t - e^{- \frac{(t - s)}{2}} x_s, x_t -
		e^{- \frac{(t - s)}{2}} x_s \right\rangle \right) p_s (x_s) \mathd x_s\\
		& = & \int e^{\frac{(t - s)}{2}} \exp \left( - \frac{1}{2 (1 - e^{- (t -
				s)})} \left\langle x_t - e^{- \frac{(t - s)}{2}} x_s, x_t - e^{- \frac{(t
				- s)}{2}} x_s \right\rangle \right) \nabla_{x_s} p_s (x_s) \mathd x_s\\
		& = & \int e^{\frac{(t - s)}{2}} p_{t|s} (x_t |x_s) p_s (x_s)
		\nabla_{x_s} \log p_s (x_s) \mathd x_s \\
        & = & e^{\frac{(t - s)}{2}} \int p_{s,
			t} (x_s, x_t) \nabla_{x_s} \log p_s (x_s) \mathd x_s .
	\end{eqnarray*}
	Since $\nabla \log p_t (x_t) = \frac{\nabla p_t (x_t)}{p_t (x_t)}$ and
	$p_{s|t} (x_s |x_t) = \frac{p_{s, t} (x_s, x_t)}{p_t (x_t)}$, we get that
	\begin{eqnarray*}
    & & \nabla \log p_t (x_t) \\
		 & = & e^{\frac{(t - s)}{2}} \int \frac{p_{s, t}
			(x_s, x_t)}{p_t (x_t)} \nabla_{x_s} \log p_s (x_s) \mathd x_s =
		e^{\frac{(t - s)}{2}} \int p_{s|t} (x_s |x_t) \nabla_{x_s} \log p_s (x_s)
		\mathd x\\
		& = & e^{\frac{(t - s)}{2}} \mathbb{E} [\nabla \log p_s (X_s) |X_t = x_t] .
	\end{eqnarray*}
	
\end{proof}

The above calculations have already been done in \citet{chen2022improved} to
bound the difference $\mathbb{E} [\| \nabla \log p_t (x_t) - \nabla \log p_s
(x_s) \|^2]$.

\subsection{Proofs for Section \ref{section loss}}\label{sec:proofs_loss}
We start by proving Lemma \ref{lemma score matching}. 

\begin{proof}
	Let $e_i$ be a basis of $K$ and $K^D = \tmop{span} \langle e_1, \ldots, e_D
	\rangle$. We denote by $P^D$ the projection onto $K^D$ and by $X_t^D = P^D
	X_t$ the projection of $X_t$ onto $K^D$. We will denote by
	\[ \| \cdot \| = \| \cdot \|_K \]
	throughout this proof. Let
	\[ s^D = P^D \mathbb{E} [s (t, X_t) |X_t^D] =\mathbb{E} [\sigma_t^{- 1}
	(X_t^D - e^{- t / 2} X_t^D) |X_t^D], \]
	where $\sigma_t^{- 1} = \frac{1}{\sqrt{1 - e^{- t}}}$. We have that
	\begin{eqnarray*}
		\mathbb{E} [\| s^D \|^2 ] & \leqslant & \mathbb{E} [\| \sigma_t^{- 1}
		(X_t^D - e^{- t / 2} X_0^D) \|^2 ] = \sigma_t^{- 2} \mathbb{E} [\|
		\mathcal{N} (0, P^D CP^D) \|^2 ] < \infty
	\end{eqnarray*}
	since the right hand side is the expectation of the norm of a finite
	dimensional Gaussian, which is finite. Let $\tilde{s}^D (t, X_t) = P^D
	\mathbb{E} [\tilde{s} (t, X_t ) |X_t^D]$.
	
	Now,
	\begin{eqnarray*}
		\mathbb{E} [\| s^D - \tilde{s}^D \|_K^2] & = & \mathbb{E} [\| P^D
		(\mathbb{E} [\tilde{s} (t, X_t ) |X_t^D] -\mathbb{E} [s (t, X_t) |X_t^D])
		\|^2 ]\\
		& \leqslant & \mathbb{E} [\| \mathbb{E} [\tilde{s} (t, X_t ) - s (t, X_t)
		|X_t^D] \|^2 ]
		  \leqslant  \mathbb{E} [\| s (t, X_t) - \tilde{s} (t, X_t) \|^2 ] <
		\infty,
	\end{eqnarray*}
	and
	\begin{eqnarray*}
		\mathbb{E} [\| \tilde{s}^D \|^2] & \leqslant & 2 (\mathbb{E} [\| s^D -
		\tilde{s}^D \|^2] +\mathbb{E} [\| s^D \|^2]) < \infty .
	\end{eqnarray*}
	Therefore, we get that
	\begin{eqnarray*}
		\mathbb{E} [\| s^D - \tilde{s}^D \|^2] & = & \mathbb{E} [\| s^D \|^2 + \|
		\tilde{s}^D \| - \langle s^D, \tilde{s}^D \rangle] =\mathbb{E} [\| s^D
		\|^2] +\mathbb{E} [\| \tilde{s}^D \|] - 2\mathbb{E} [\langle s^D,
		\tilde{s}^D \rangle],
	\end{eqnarray*}
	where we used that all the terms are finite in the last equality, so that we
	are no adding and subtracting infinities. Now, for
	\begin{eqnarray*}
		\mathbb{E} [\langle s^D, \tilde{s}^D \rangle]  =  \mathbb{E} [\langle
		\mathbb{E} [\sigma_t^{- 1} (X_t^D - e^{- t / 2} X_0^D) |X_t^D],
		\tilde{s}^D \rangle]
		  =  \mathbb{E} [\langle \sigma_t^{- 1} (X_t^D - e^{- t / 2} X_0^D),
		\tilde{s}^D \rangle],
	\end{eqnarray*}
	and therefore, since $\mathbb{E} [\| X_t^D - e^{- t / 2} X_0^D \|^2]$ is
	finite we can do a zero-addition of $\mathbb{E} [\| X_t^D - e^{- t / 2}
	X_0^D \|^2]$ and get that
	\begin{equation*}
		\mathbb{E} [\| s^D - \tilde{s}^D \|^2] = \mathbb{E} [\| \tilde{s}^D -
		\sigma_t^{- 1} (X_t^D - e^{- t / 2} X_0^D) \|^2] +\mathbb{E} [\| s^D \|^2]
		-\mathbb{E} [\| \sigma_t^{- 1} (X_t^D - e^{- t / 2} X_0^D) \|^2].
	\end{equation*}
	By Lemma \ref{lemma:approximations converge} we see that the left hand
	side converges to $\mathbb{E} [\| s - \tilde{s} \|^2]$. Therefore we get
	that $\mathbb{E} [\| \tilde{s}^D - \sigma_t^{- 1} (X_t^D - e^{- t / 2}
	X_0^D) \|^2]$ converges to something finite, if and only if
	\[ \mathbb{E} [\| s^D \|^2] -\mathbb{E} [\| \sigma_t^{- 1} (X_t^D - e^{- t
		/ 2} X_0^D) \|^2] \]
	does. For this term we get that since
	\begin{eqnarray*}
		\mathbb{E} [\langle s^D, \sigma_t^{- 1} (X_t^D - e^{- t / 2} X_0^D)
		\rangle] & = & \mathbb{E} [\langle \mathbb{E} [\sigma_t^{- 1} (X_t^D -
		e^{- t / 2} X_0^D) |X_t^D], \sigma_t^{- 1} (X_t^D - e^{- t / 2} X_0^D)
		\rangle]\\
		& = & \mathbb{E} [\| \mathbb{E} [\sigma_t^{- 1} (X_t^D - e^{- t / 2}
		X_0^D) |X_t^D] \|^2]\\
		& = & \mathbb{E} [\| s^D \|^2],
	\end{eqnarray*}
	we can deduce
	\begin{eqnarray*}
		&  & \mathbb{E} [\| s^D \|^2] -\mathbb{E} [\| \sigma_t^{- 1} (X_t^D -
		e^{- t / 2} X_0^D) \|^2]\\
		& = & -\mathbb{E} [\| s^D \|^2] + 2\mathbb{E} [\langle s^D, \sigma_t^{-
			1} (X_t^D - e^{- t / 2} X_0^D) \rangle] -\mathbb{E} [\| \sigma_t^{- 1}
		(X_t^D - e^{- t / 2} X_0^D) \|^2]\\
		& = & -\mathbb{E} [\| s^D - \sigma_t^{- 1} (X_t^D - e^{- t / 2} X_0^D)
		\|^2] \rightarrow_{L^2} -\mathbb{E} [\| s - \sigma_t^{- 1} (X_t - e^{- t /
			2} X_0) \|^2]
	\end{eqnarray*}
	where the last convergence is implied by Proposition
	\ref{lemma:approximations converge}. The last result now follows by rewriting
	\[ V_t =\mathbb{E} [\| \mathbb{E} [\sigma_t^{- 1} (X_t - e^{- t / 2} X_0)
	|X_t] - \sigma_t^{- 1} (X_t - e^{- t / 2} X_0) \|^2] \]
	and using that $X_t$ can be pulled out of the conditional expectation.
	
	\ 
\end{proof}

Now we prove Lemma \ref{lemma:special_cases_score_matching}:

\begin{proof}
	\tmtextbf{Item 1:} We know from Lemma \ref{lemma score matching} that
	$\tmop{DSM}$ is finite if and only if $V$ is finite. Therefore, we will
	prove that $V$ is finite:
	\[ V_t = \frac{e^{- t}}{1 - e^{- t}} \mathbb{E} [\| X_0 -\mathbb{E} [X_0
	|X_t] \|^2_U] \]
	Now,
	\begin{eqnarray*}
		&  & \mathbb{E} [\| X_0 -\mathbb{E} [X_0] \|^2_U]\\
		& = & \mathbb{E} [\| X_0 -\mathbb{E} [X_0 |X_t] +\mathbb{E} [X_0 |X_t]
		-\mathbb{E} [X_0] \|^2_U]\\
		& = & \mathbb{E} [\| X_0 -\mathbb{E} [X_0 |X_t] \|^2_U] +\mathbb{E} [\|
		\mathbb{E} [X_0 |X_t] -\mathbb{E} [X_0] \|^2_U] \\
        & &+ \mathbb{E} [\langle X_0
		-\mathbb{E} [X_0 |X_t], \mathbb{E} [X_0 |X_t] -\mathbb{E} [X_0]
		\rangle_U]\\
		& = & \mathbb{E} [\| X_0 -\mathbb{E} [X_0 |X_t] \|^2_U] 
        +\mathbb{E} [\|\mathbb{E} [X_0 |X_t] -\mathbb{E} [X_0] \|^2_U],
	\end{eqnarray*}
	where the last term drops by taking the conditional expectation with respect
	to $X_t$. Therefore,
	\[ \mathbb{E} [\| X_0 -\mathbb{E} [X_0 |X_t] \|_U^2] \leqslant \mathbb{E}
	[\| X_0 -\mathbb{E} [X_0] \|_U^2] < \infty \]
	and so $V_t$ is finite.
	
	\tmtextbf{Item 2:} In this case we use that we can also write $V_t$ as
	\begin{eqnarray*}
		V_t & = & \frac{1}{1 - e^{- t}} \mathbb{E} [\| X_t - e^{- t / 2} X_0
		-\mathbb{E} [X_t - e^{- t / 2} X_0 |X_t] \|_H^2]
	\end{eqnarray*}
	Similarly as above,
	\begin{eqnarray*}
		\mathbb{E} [\| X_t - e^{- t / 2} X_0 -\mathbb{E} [X_t - e^{- t / 2} X_0
		|X_t] \|_H^2] & \leqslant & \mathbb{E} [\| X_t - e^{- t / 2} X_0 \|_H^2]
	\end{eqnarray*}
	where we used that $\mathbb{E} [X_t - e^{- t / 2} X_0] = 0$. Now, since $X_t
	- e^{- t / 2} X_0 \sim \mathcal{N} (0, (1 - e^{- t}) C)$, which is supported
	on $H$, the above expectation is finite.
\end{proof}

We used the following lemma in the proofs of the two above lemmas:

\begin{lemma}
	\label{lemma:approximations converge}Let $(K, \langle \cdot, \cdot \rangle)$
	be a separable Hilbert space, and $Z, \tilde{Z}$ random variables taking values in         $K$.  
	Let $e_i$ be an orthonormal basis of $K$. Denote by $K^D = \tmop{span}
	\langle e_1, \ldots, e_D \rangle$ and by $P^D$ the projection onto $K^D$.
	Furthermore, let $Z^D$ be given by $Z^D = P^D \mathbb{E} [Z|P^D \tilde{Z}]$.
	Then, if $\mathbb{E} [\| \mathbb{E} [Z| \tilde{Z}] \|^2_K] < \infty$, $Z^D
	\rightarrow \mathbb{E} [Z| \tilde{Z}]$ in $L^2$ and almost surely.
\end{lemma}

\begin{proof}
	We have that
	\begin{eqnarray}
		\mathbb{E} [\| Z^D -\mathbb{E} [Z| \tilde{Z}] \|^2_K] & = & \mathbb{E} [\|
		P^D (\mathbb{E} [Z| \tilde{Z}^D] -\mathbb{E} [Z| \tilde{Z}]) \|^2_K]
		+\mathbb{E} [\| (I - P^D) \mathbb{E} [Z| \tilde{Z}] \|^2_K] \nonumber\\
		& \leqslant & \mathbb{E} [\| \mathbb{E} [Z| \tilde{Z}^D] -\mathbb{E} [Z|
		\tilde{Z}] \|^2_K] +\mathbb{E} [\| (I - P^D) \mathbb{E} [Z| \tilde{Z}]
		\|^2_K]  \label{equ:K converges upper bound}
	\end{eqnarray}
	The cross term in the first equality is $0$ since $P^D$ is the orthogonal
	projection. The first term in the \eqref{equ:K converges upper bound}
	converges to $0$, since $\mathbb{E} [Z| \tilde{Z}^D] =\mathbb{E} [\mathbb{E}
	[Z| \tilde{Z}] | \tilde{Z}^D]$ is a family of conditional expectations of
	the $L^2$-random variable $\mathbb{E} [Z| \tilde{Z}]$. The result follows by
	the $L^2$-martingale convergence theorem. The second term converges to
	$0$ since
	\[ \mathbb{E} [\| \mathbb{E} [Z| \tilde{Z}] \|^2_K] = \sum_{d = 1}^{\infty}
	\mathbb{E} [\langle e_i, \mathbb{E} [Z| \tilde{Z}] \rangle_K^2] < \infty
	. \]
	But $\mathbb{E} [\| (I - P^D) \mathbb{E} [Z| \tilde{Z}] \|^2_K]$ is equal to
	$\sum_{d = D}^{\infty} \mathbb{E} [\langle e_i, \mathbb{E} [Z| \tilde{Z}]
	\rangle^2_K]$, which converges to $0$ since the full sum is finite. 
 
        Furthermore, we can write
        \begin{align*}
            \| Z^D -\mathbb{E} [Z| \tilde{Z}] \|_K
            &\leqslant \|P^D (\mathbb{E} [Z| \tilde{Z}^D] -\mathbb{E} [Z| \tilde{Z}]) \|^2_K
		+ \| (I - P^D) \mathbb{E} [Z| \tilde{Z}] \|^2_K \\
            &\leqslant \|\mathbb{E} [Z| \tilde{Z}^D] -\mathbb{E} [Z| \tilde{Z}]\|^2_K
		+ \| (I - P^D) \mathbb{E} [Z| \tilde{Z}] \|^2_K
        \end{align*}
        The second term on the right-hand vanishes as $D \to \infty$ since $\mathbb{E}[Z | \tilde{Z}] \in K$. The first term almost surely converges to $0$ due to the almost sure martingale convergence theorem.
\end{proof}

\section{Existence Proof}\label{section:proof_theorem_well_defined}
\subsection{Spectral Approximation of $C$}\label{sec:spectral_approx}
Let $\nu =\Normal (0, C)$ be a Gaussian measure with values in $(H,
\langle \cdot, \cdot \rangle_H)$. $C$ has an orthonormal basis $e_i$ of
eigenvectors and corresponding non-negative eigenvalues $c_i \geqslant 0$,
i.e.,
\begin{eqnarray*}
	Ce_i & = & c_i e_i .
\end{eqnarray*}
We define the linear span of the first $D$ eigenvectors as
\[ H^D = \left\{ \sum_{i = 1}^D f_i e_i |f_1, \ldots, f_D \in \mathbb{R}
\right\} \subset H \]
Let $P^D : H \rightarrow H^D$ be the orthogonal projection onto $H^D$. If we write an
element $f$ of $H$ as
\[ f = \sum_{i = 1}^{\infty} \langle f, \varphi_i \rangle_H \varphi_i, \]
$P^D$ is equivalent to restricting $f$ to its first $D$ coefficients:
\begin{eqnarray*}
	P^D : H \rightarrow H^D, 
	\quad f \mapsto \sum_{i = 1}^D \langle f, \varphi_i \rangle_H \varphi_i .
\end{eqnarray*}
The push-forwards $(P^D)_{\ast} \nu$ of $\nu$ under $P^D$ are denoted by
\begin{eqnarray*}
	\nu^D & : = & (P^D)_{\ast} \nu,
    \quad \text{ where } \quad (P^D)_{\ast} \nu (A) = \nu ((P^D)^{- 1} (A)) . 
\end{eqnarray*}
It is a Gaussian measure with covariance operator $P^D CP^D$.

By sending $v \in \mathbb{R}^D$ to $\hat{v} = v_1 e_1 + \cdots + v_D e_D$ we
can identify $H^D$ with $\mathbb{R}^D$. Under these identifications, $\nu^D$
would have distribution $\Normal (0, C^D)$ on $\mathbb{R}^D$, where
C\tmrsup{D} is a diagonal matrix with entries $c_1, \ldots, c_D$.

\subsection{Spectral Approximation of the SDEs}\label{sec:spectral_approximation_sdes}

We define the finite-dimensional approximations of $\mudata$ by
$\mudata^D = (P^D)_{\ast} \mudata$.
We discretize the forward SDE \eqref{forward SDE inf} by
\begin{eqnarray}
	\mathd X_t^D = - \frac{1}{2} X_t^D \mathd t + \sqrt{P^D CP^D} \mathd W_t
	\label{forward SDE approx},
	\quad X_{0 }^D \sim \mudata^D
\end{eqnarray}
Since $\mudata$ is supported on $H^D, P^D CP^D$ projects the noise
down to $H^D$, and the operation $X_t \rightarrow - \frac{1}{2} X_t$ keeps
$H^D$ invariant, $X_t^D$ will stay in $H^D$ for all times. Therefore we can
view $X_t^D$ as process on $\mathbb{R}^D$ and define the Lebesgue densities $p_t^D$ of $X_t^D$ there.

\subsection{Proof of Theorem \ref{theorem SDEs well defined}}
We can now prove Theorem \ref{theorem SDEs well defined}:

\begin{proof}
	The forward SDE is just a standard Ornstein--Uhlenbeck process and existence
	and uniqueness of that is standard; see, for example, \citet[Theorem 7.4]{da2014stochastic}. We will now
	show that the time reversal
	\[
	Y_t \coloneqq X_{T-t}
	\]
	is a solution to \eqref{reverse SDE inf abs}.
	
	The solution to the forward SDE is given as the stochastic convolution,
	\begin{eqnarray*}
		X_t  & = & e^{- t} X_0  + \int_0^t e^{- (t - s)} \sqrt{C} \mathd W_s.
	\end{eqnarray*}
	The processes $X_t^D \coloneqq P^D (X_t)$ (see Section \ref{sec:spectral_approx}) are solutions to \eqref{forward SDE approx},
	since the SDE coefficients are decoupled. We now show that they
	converge to $X_t$ almost surely in the supremum norm. We define
	\[ X_t^{D : \infty} = X_t - X_t^D . \]
	Then
	\begin{eqnarray*}
		X_t^{D : \infty} & = & e^{- t} X_0^{D : \infty} + \int_0^t e^{- (t - s)} \sqrt{C}
		\mathd W_s^{D : \infty},
	\end{eqnarray*}
	where $W_s^{D : \infty}$ is the projection of $W_s$ onto $\tmop{span} \{ e_D, e_{D+1}
	\ldots  \}$. It holds that
	\begin{eqnarray*}
		\mathbb{E} [\sup_{t \leqslant T} \| X_t^{D : \infty} \|_H^2] & \leqslant & 4
		e^{- 2 t} \mathbb{E}[\| X_0^{D : \infty} \|^2_H] + 4 (1 - e^{- t}) \sum_{i = D}^\infty c_i \to 0
	\end{eqnarray*}
	for $D \rightarrow \infty$, where we used Doob's $L^2$ inequality to bound
	the stochastic integral. The first term will converge to $0$ almost surely, since $X_0$ is $H$-valued and therefore the sum $\|X_0\|_H^2 = \sum_{i=1}^\infty \langle X_0, \varphi_i \rangle^2$ is almost surely finite, where the $\varphi_i$ are defined in Section \ref{sec:spectral_approx}. Therefore, $\| X_0^{D : \infty} \|^2_H = \sum_{i=D}^\infty \langle X_0, \varphi_i \rangle^2$ will almost surely converge to zero. An analogous argumentation holds for the second term since $\sum_{i = 1}^{\infty} c_i$ is finite because $C$ is trace-class on $H$. 
	
	Denote by $E$ the Banach space of continuous, $H$-valued
	paths with the supremum norm, $E = C([0, T], H)$. Then we can view $X_\cdot$ as an
	$E$-valued random variable; see \citet[Theorem 4.12]{da2014stochastic}. Then we have
	just proven that $X^N$ converges to $X$ in $L^2(\Omega, E)$.
	
	We denote the time-reversals of $X_t^N$ by $Y_t^N \coloneqq X_{T-t}^N$. These 
	converge to their infinite-dimensional 
	counterpart $Y_t = X_{T-t}$ in the same way as $X_t^N$ converge to $X_t$. The main difficulty is
	to show $Y_t$ solves the SDE \eqref{reverse SDE inf abs}. We do this using an approximation argument.
	
	We define $p_t^D$ as in Section \ref{sec:spectral_approximation_sdes}. Furthermore, we define the finite-dimensional time reversals of $X_t^D$ as $Y_t^D = X_{T-t}^D$. \jp{By Proposition \ref{prop:finite_dim_sde_satisfies_reverse}, we know that they satisfy 
	\[ Y_t^D - Y_0^D  - \frac{1}{2} \int_0^t Y_r^D \mathd r - \int_0^t s_{T-r}^D \mathd r = \sqrt{P^D C P^D} B_t^D, \]
    for a $H^D$-Brownian motion $B_{D, t}$. 
    It is important to note, that $B_D$ will not be equal to the projection of $B_{D+L}$ onto $H^D$ in general, and the same hold for the $\nabla \log p_t^D$. However, as we will see now, we can prove some martingale-like properties for them to obtain convergence to their infinite-dimensional counterpart.}
        
        By, Lemma \ref{lemma score rewrite}, we know that we can replace the $C\nabla \log p_t$ term by a conditional expectation $s^D$,
	\[
	s^D(t, x^D) = \frac{1}{1 - e^{-t}} \mathbb{E}[X_t^D - e^{-\frac{t}{2}}X_0^D | X_t^D = x^D].
	\]
	However, since the forward SDE decouples, we can also write the above conditional expectation in terms of the infinite-dimensional process $X_t$:
	\[ s^D(t, x^D)
	= \frac{1}{1 - e^{- t}}  P^D \mathbb{E} \left[ X_t - e^{-
		\frac{t}{2}} X_0 | P^D X_t = x^D \right], \]
	where we use that the projections $P^D X_t$ are solutions to \eqref{forward SDE approx}.
	In particular, due to the tower property of conditional expectations,
	\[ s^D_t = s^D(t, X_t^D) = P^D \mathbb{E} [s (t,
	X_t) |X_t^{D}]. \]
        By Lemma \ref{lemma:approximations converge}, which makes use of the fact that $\mathbb{E}[s(t, X_t)| P^D X_t]$ is a martingale in $D$, the $s^D_t$ converge to $s(t, X_t)$ in $L^2$. Furthermore, by Lemma \ref{lemma:reverse_martingale}, we know that the $e^{-t/2} s_t^D$ form a reverse-time martingale in $t$. However, due to Doob's $L^2$-inequality, we get that for any $\epsilon > 0$,
        \begin{equation*}
            \mathbb{E}[\sup_{\varepsilon \leqslant t \leqslant T} \|s_t^D - s_t^L\|^2] 
            \leqslant e^T \mathbb{E}[\|s_\varepsilon^D - s_\varepsilon^L\|^2].
        \end{equation*}
        The right hand side is Cauchy and therefore is the left-hand side is too. Therefore, the convergence of $s_t^D$ to $s_t$ in $L^2$ is uniform on $[\varepsilon, T]$, i.e., continuous martingales $s_t^D$ form a Cauchy sequence in the norm 
	\[
	\|N\|_{I, \epsilon} = \mathbb{E}[\sup_{t \ge \epsilon} |N_t|^2].
	\]
	The continuous martingales are closed with respect to that norm (see \citet[Section 1.3]{karatzas1991brownian}), and $s_t$ is also a continuous martingale on $[\varepsilon, T]$. Since $\epsilon$ was arbitrary, we have shown that $s(t, X_t)$ is a continuous local martingale (in reverse time) up to $t = 0$. 
	
	Furthermore, since all the terms on the left-hand side converge in $L^2$, uniformly in $t$,
	so does the right-hand side. The right-hand side is $P^D CP^D$ Brownian
	motion for each $D$. Using again the that the spaces of martingales is closed
	and furthermore the Levy characterization of Brownian motion, we find that
	the $B_t^D$ have to converge to a $C$-Brownian motion $B_t$. Therefore,
	\begin{equation}
		Y_t  =  Y_0 + \frac{1}{2} \int_0^t Y_r \mathd r + \int_0^t s_{T-r} \mathd r + B_t 
	\end{equation}
    is indeed a weak solution to \eqref{reverse SDE inf abs}.
    \jpp{It is a \emph{weak} solution since $Y_s$ is not necessarily measurable with respect to the filtration generated by $B_s$. In general, it can even be the other way around, see Proposition \ref{prop:finite_dim_sde_satisfies_reverse}.}
\end{proof}

\begin{proposition}
    Let $X_t$ be a solution to \eqref{inf-fwd}. Assume $H = \mathbb{R}^D$. Then, the time-reversal $Y_t = X_{T-t}$ of the SDE  satisfies the SDE
	\begin{equation}
		\label{eq:time-reversal-fd}\mathd Y_t = \frac{1}{2} Y_t \mathd t + C \nabla \log p_{T-t}(Y_t) \mathd t + \sqrt{C} \mathd B_t.
	\end{equation}
 Here, $B_t$ is a different Brownian motion $B_t$ to $W_t$. \jp{If $C$ has full rank, $B_t$ can be defined on the same probability space as $X_t$ itself.}
	\label{prop:finite_dim_sde_satisfies_reverse}
\end{proposition}
\begin{proof}
	The above is the usual time-reversal formula. All we need to show is that for the special case of the forward SDE \eqref{inf-fwd} the conditions from \citet{haussmann1986time} are always satisfied. 
	Assumption $(A)(i)$ in \citet{haussmann1986time} is satisfied since $b$ and $\sigma$ are linear. Assumption $(A)(ii)$ is that for each $t_0 > 0$, it holds that
	\begin{enumerate}
		\item $\int_{t_0}^T \int_{B_R} | p (t, x) |^2 \mathd x < \infty$, and
		
		\item $\int_{t_0}^T \int_{B_R} | \partial_{x_i} p (t, x) |^2 \mathd x <
		\infty$ for all $i$m
	\end{enumerate}
	where $B_R$ is the ball of Radius $R$ on $\mathbb{R}^D$ and $p(t,x)$ is the Lebesgue-density of $\mathbb{P}_t$. We now prove that both of these conditions hold.  
	We have the explicit formula
	\[ p (t, x) = \frac{1}{\sqrt{2 \pi (v_t \det C)^D}} \int e^{- \frac{\| x -
			x_0 \|_U^2}{2}} \mathd \mu_{\tmop{data}} (x_0) \]
	and therefore, in particular
	$| p (t, x) | \leqslant \frac{}{} \frac{1}{\sqrt{2 \pi (v_t \det C)^D}}$.
	Since $\frac{1}{v_t} = \frac{1}{1 - e^{- t}}$ is integrable on $[t_0, T]$ for
	$t_0 > 0$, this implies $1.$	
	Furthermore, we get that
	\[ \nabla p (t, x) = \frac{1}{\sqrt{2 \pi (v_t \det C)^D}} \int C^{- 1} x 
	e^{- \frac{\| x - x_0 \|_U^2}{2}} \mathd \mu_{\tmop{data}} (x_0), \]
	where we used the Leibniz rule to exchange differentiation and integration,
	since the integrand is bounded in $x_0$.	
	On $B_R$ this can be upper bounded by $\| \nabla p (t, x) \| \leqslant  \frac{\| C^{- 1} \| R}{\sqrt{2 \pi (v_t \det C)^D}}$
	which is again integrable on $[t_{0,} T]$. 

    \jp{By \citet{haussmann1986time} we know that the time reversal $Y_t$ will have the same generator as the SDE \eqref{eq:time-reversal-fd}. In particular 
    $M_t = Y_t - Y_0 - \int_0^t \frac{1}{2} Y_r + C\nabla \log p_{T-r}(Y_r) \mathd r$ is a continuous martingale with quadratic variation $C$ with respect to the canonical filtration of $Y_s$}. 
    
    \jp{We want to apply the Martingale representation theorem to express this martingale in terms of a Brownian motion. In general, one might need to extend the probability space to do so. However, since $C$ has full rank, we can express the Brownian motion as $B_t = C^{-1/2} M_t$, which is defined on the same probability space as $Y_t$.}
\end{proof}

\section{Uniqueness Proofs}\label{sec:uniqueness_proofs}

\subsection{Proof of Theorem \ref{thm:uniqueness_manifold}}\label{sec:uniqueness_manifold}
First we prove Theorem \ref{thm:uniqueness_manifold}:

\begin{proof}
	\paragraph{Step 1: Prove that $s$ is locally Lipschitz with respect to the Cameron-Martin Norm}
	Recall that
	\begin{equation}
		s(t, x) = -\frac{1}{1 - e^{- t}} x + \frac{e^{-\frac{t}{2}}}{1 - e^{-t}} \mathbb{E} [X_0 |X_t = x].
		\label{equ:s_as_sum_with_relative_densitites}
	\end{equation}
	The $-\frac{1}{1 - e^{- t}} x$ term is Lipschitz for any $t \in [\epsilon, T]$. Therefore, our goal is to show that $\mathbb{E}[X_0 | X_t = x]$ is Lipschitz in $x$ too. We will frequently use that for $u \in U$, we can write the Radon-Nikodym derivative of $\mathcal{N}(u, v_t C)$ with respect to $\mathcal{N}(0, v_t C)$ as
	\[
	\frac{\mathd \mathcal{N}(u, v_t C)}{\mathd \mathcal{N}(0, v_t C)}(x_t) = \exp\left(\frac{\langle u, x_t\rangle_U - \|u\|^2_U}{v_t}\right),
	\]
	by the Cameron-Martin theorem (see, for example, \citet{hairer2009spde}).
	Here $v_t$ is a shorthand notation for
	\[
	v_t = 1 - e^{-t}.
	\]
	To simplify notation, we will define 
	\[
	n(x_0, x_t) \coloneqq \frac{\mathd \mathcal{N}(e^{-t} x_0, v_t C)}{\mathd \mathcal{N}(0, v_t C)}(x_t).
	\]
	Then, the joint distribution of $X_0$ and $X_t$ is given by
	\[
	\mathd n(x_0, x_t) = \mathd (\mathcal{N}(0, v_t C)(x_t) \otimes \mu_\text{data}(x_0)).
	\]
	This can be seen by the following calculation:
	\begin{align*}
		&\int_A \int _B n(x_0, x_t) ~ \mathd \mathcal{N}(0, v_tC)(x_t) \mathd \mu_\text{data}(x_0) \\
		=& \int_A \int _B \frac{\mathd \mathcal{N}(e^{-t} x_0, v_t C)}{\mathd \mathcal{N}(0, v_t C)}(x_t) \mathd \mathcal{N}(0, v_tC)(x_t) \mathd \mu_\text{data}(x_0) \\
		=& \int_A \int _B \mathd \mathcal{N}(e^{-t} x_0, v_tC)(x_t) \mathd \mu_\text{data}(x_0) \\
		=& \mathbb{P}[X_0 \in A, X_t \in B],
	\end{align*}
	where we used that $\mathcal{N}(e^{-t}x_0, v_t C)$ is the transition kernel of the forward SDE \eqref{forward SDE inf}. We show that
	\[
	f(x_t) = \frac{\int x_0 n(x_0, x_t) \mathrm{d}\mu_\text{data}(x_0)}
	{\int n(x_0, x_t) \mathrm{d}\mu_\text{data}(x_0)}
	\]
	is a version of the conditional expectation $\mathbb{E}[X_0 | X_t = x]$.
	The function $f$ is $\sigma(X_t)$ measurable by Fubini's theorem. Furthermore, for $A \in \sigma(X_t)$,
	\begin{align*}
		\mathbb{E}_{X_t}[1_A f(X_t)]
		=& \int_{A} \frac{\int_H x_0 n(x_0, x_t) \mathrm{d}\mu_\text{data}(x_0)}
		{\int_H n(x_0, x_t) \mathrm{d}\mu_\text{data}(x_0)} ~\mathrm{d} \mathbb{P}_t(x_t) \\
		=& \int_{H} \int_{A} \frac{\int_H x_0 n(x_0, x_t) \mathrm{d}\mu_\text{data}(x_0)}
		{\int_H n(x_0, x_t) \mathrm{d}\mu_\text{data}(x_0)} n(\tilde{x}_0, x_t) ~ \mathrm{d}\mathcal{N}(0, v_t C)(x_t)~ \mathrm{d} \mu_\text{data}(\tilde{x}_0) \\
		=& \int_{A} \int_H x_0 n(x_0, x_t) \mathrm{d}\mu_\text{data}(x_0)
		\frac{\int_H n(\tilde{x}_0, x_t) \mathrm{d}\mu_\text{data}(\tilde{x}_0)}{\int_H n(x_0, x_t) \mathrm{d}\mu_\text{data}(x_0)} 
		~ \mathrm{d}\mathcal{N}(0, v_t C)(x_t) \\
		=& \int_{A} \int_H x_0 n(x_0, x_t) \mathrm{d}\mu_\text{data}(x_0)
		~ \mathrm{d}\mathcal{N}(0, v_t C)(x_t) 
		= \mathbb{E}[1_A X_0].
	\end{align*}
	Since these two properties define the conditional expectation, we have shown that
	\[
	\mathbb{E}[X_0 | X_t = x] = f(x)
	\]
	almost surely. We will now proceed to show that $f$ is Lipschitz with respect to the Cameron-Martin norm $\|\cdot\|_U$. For notational convenience, we will define
	\begin{equation}
		\pi_t(x_t)
		= \int n(x_0, x_t) \mathd \mu_\text{data}(x_0) 
		= \int \exp\left(\frac{2\langle e^{-t/2} x_0, x_t \rangle_U - e^{-t}\|x_0\|^2_U}{2v_t}\right) \mathd \mu_\text{data}(x_0).
		\label{equ:pt_N0C_density}
	\end{equation}
	We see that
	\begin{equation}
		\begin{aligned}
			\pi_t(x_t + z) 
			=& \int \exp\left(\frac{2\langle e^{-t/2} x_0, x_t + z \rangle_U - e^{-t}\|x_0\|^2_U}{2v_t}\right) \mathd \mu_\text{data}(x_0) \\
			=& \int \exp\left(\frac{\langle e^{-t/2} x_0, z\rangle_U}{v_t} \right )
			n(x_0, x_t) ~\mathd \mu_\text{data}(x_0),
		\end{aligned}\label{equ:pt_N0C_density_shifted}
	\end{equation}
	which differs from \eqref{equ:pt_N0C_density} only by $\exp\left(\frac{\langle e^{-t/2} x_0, z\rangle_U}{v_t} \right)$. By our assumption that the support of $\mu_\text{data}$ is contained in a Cameron-Martin ball of size $R$. Therefore, 
	\begin{equation}
		\frac{\langle z, e^{-t/2} x_0 \rangle_U}{v_t} \le \frac{e^{-t/2}}{v_t}\|z\|_U R,
		\label{equ:upper_bound_potential}
	\end{equation}
	and
	\begin{equation}
		\exp\left(-R\|z\|_U \frac{e^{-t/2}}{v_t}\right) 
		\le \frac{\pi_t(x_t + z)}{\pi_t(x_t)} 
		\le \exp\left(R\|z\|_U \frac{e^{-t/2}}{v_t}\right).
		\label{equ:pt_N0C_ratio_bound}
	\end{equation}
	
	With these estimates out of the way, let us show local Lipschitz continuity:
	\begin{align*}
		\left \|f(x_t + z) - f(x_t) \right \|_{U}
		~=&~ \left \|\frac{\int x_0 ~ n(x_0, x_t + z) \mathd \mu_\text{data}(x_0)}{\pi_t(x_t + z)}
		- \frac{\int x_0 ~ n(x_0, x_t) \mathd \mu_\text{data}(x_0)}{\pi_t(x_t)} \right \|_U \\
		\eqqcolon &~ \left \|\frac{A'}{\pi_t(x_t+z)} - \frac{A}{\pi_t(x_t)} \right \|_U.
	\end{align*}
	We rewrite the above as
	\begin{align*}
		 \left \|\frac{A'}{\pi_t(x_t + z)} - \frac{A}{\pi_t(x_t)} \right \|_U 
		\le \left |1 - \frac{\pi_t(x_t+z)}{\pi_t(x_t)} \right | \left \|\frac{A'}{\pi_t(x_t+z)}\right \|_U + \frac{1}{\pi_t(x_t)}\|A' - A\|_U.
	\end{align*}
	We see that
	\begin{align*}
    	\left |1 - \frac{\pi_t(x_t)}{\pi_t(x_t)} \right | \left \|\frac{A'}{\pi_t(x_t + z)} \right \|_U \le \left(\exp\left(\frac{e^{-t/2}}{v_t}\|z\|_U R\right) - 1\right)  \left \|\mathbb{E}[X_0 | X_t = x_t + z] \right \|_U,
	\end{align*}
	where we used \eqref{equ:pt_N0C_ratio_bound} for the first term.
	We also get that
	\begin{align*}
		\frac{\|A - A'\|_U}{\pi_t(x_t)} 
		&\le \left(\exp\left(\frac{e^{-t/2}}{v_t}\|z\|_U R\right) - 1\right) \frac{1}{p_t(x_t)}\int \|x_0\|_U \frac{\mathd \mathcal{N}(e^{-t/2} x_0, v_t C)}{\mathd \mathcal{N}(0, v_t C)}(x_t) \mathd \mu_\text{data}(x_0) \\
		&\le \left(\exp\left(\frac{e^{-t/2}}{v_t}\|z\|_U R\right) - 1\right) R
	\end{align*}
	where we used \eqref{equ:upper_bound_potential} and \eqref{equ:pt_N0C_density_shifted} and our assumption that $\|x_0\|_U \le R$.
	Putting it all together, we get that
	\begin{align*}
		\|f(x_t + z) - f(x_t)\| 
		\le 2\left(\exp\left(\frac{e^{-t/2}}{v_t}\|z\|_U R\right) - 1\right) ~ R,
	\end{align*}
	where we again used that $\|x_0\| \le R$ to bound $f(x_t + z)$. 
	However, we can strengthen this bound. For any $N$, it holds that
	\begin{align*}
		\|f(x_t + z) - f(x_t)\|_U 
        &\le \sum_{i=1}^N \|f(x_t + z\frac{i}{N}) - f(x_t + z\frac{i-1}{N})\|_U\\ 
        &\le N~2\left(\exp\left(\frac{e^{-t/2}}{v_t}\frac{\|z\|_U}{N} R\right) - 1\right) ~ R.
	\end{align*}
	In particular, we can take the limit $N \to \infty$ and get that
	\begin{align*}
		\|f(x_t + z) - f(x_t)\|_U \le \frac{\mathrm{d}}{\mathrm{d}h}|_{h=0} 2\left(\exp\left(\frac{e^{-t/2}}{v_t}h\|z\|_U R\right) - 1\right) ~ R = 2R^2 \frac{e^{-t/2}}{v_t} \|z\|_U.
	\end{align*}
	From this we can conclude that $f$ has the global Lipschitz constant $2R^2 \frac{e^{-t/2}}{v_t}$.
	From \eqref{equ:s_as_sum_with_relative_densitites} we see that there is a version of $s(t, \cdot)$, such that for any $x_t, y_t \in H$
	\[
	\|s(t, x_t) - s(t, y_t)\|_U \le L_t \|x_t - y_t\|_U, \quad 
    L_t = \frac{1}{(1 - e^{-t})^2} \max\{1, 2R^2 e^{-t}\}.
	\]

    \paragraph{Step 2: Existence of solutions}
    Fix a $C$-Wiener process $W_t$. Denote by  $M: \mathcal{C}([0, T-u], H) \to \mathcal{C}([0, T-u])$ the map
    \[
        (M y)(t) = \int_0^t s(T-r, y_r) \mathd r + W_t.
    \]
    Then we have if $u < T - \varepsilon$,
    \begin{equation}
        \sup_{t \leqslant u} \| My(t) - M\tilde{y}(t)\|_U
        \leqslant \int_0^{u} \|s(T-r, y_r) - s(T-r, \tilde{y}_r)\|_U \mathd r
        \leqslant u L_{T-\varepsilon} \sup_{t \leqslant u} \|y_t - \tilde{y}_t\|_U.
        \label{equ:fixpoint_map_U}
    \end{equation}
    Here we used that we can apply Jensen's inequality because $\|\cdot\|_U: H \to [0, \infty]$ is lower-semicontinuous on $H$; see Proposition \ref{prop:U_norm_lower_semicontinuous}. We now choose $u$ smaller than $\frac{1}{L_{T - \varepsilon}}$. Starting with any $y^0$, we can now define the sequence $y^{n+1} = My^n$. Applying \eqref{equ:fixpoint_map_U} and noting that $\frac{u}{L_{T-\varepsilon}} < 1$, we see that $y_n$ is Cauchy with respect to $\sup \|\cdot\|_U$ and therefore also with respect to $\sup \|\cdot\|_H$ and has a limit. We then have a strong solution w.r.t. to the fixed Wiener process $W_t$ on $[0,u]$. We can extend the solution to $[0, T - \varepsilon]$ by repeating this process and gluing the solutions together. However, we cannot apply Banach's fix point theorem to get uniqueness, since $\|y - \tilde{y}\|_U$ might be infinite for two solutions $y$ and $\tilde{y}$. 
    
    \paragraph{Step 3: Strong uniqueness of solutions} We now assume we have two solutions to the reverse SDE, $Y_t$ and $\tilde{Y}_t$ solving the reverse SDE \eqref{reverse SDE inf abs} with respect to the same Wiener process $B_t$ and $Y_0 = \tilde{Y_0}$. Since $Y_t$ and $\tilde{Y}_t$ are not necessarily in $U$, we have to be a bit careful before directly applying Grönwall. Again, in Proposition \ref{prop:U_norm_lower_semicontinuous} we have proven that $\|\cdot\|_U: H \to [0, \infty]$ is lower-semicontinuous on $H$. We define the seminorms \[
        \|x\|_{U^D} = \|P^D x \|_{U}
    \]
    where $P^D$ is the projection operator defined in Section \label{section:proof_theorem_well_defined}. Those are smooth functions on $H$ and therefore the map $t \mapsto \|Y_t - \tilde{Y}_t\|_{U^D}$ is continuous and we can apply Grönwall:
\begin{align*}
    \frac{\mathd }{\mathd t}\|Y_t - \tilde{Y}_t\|_{U^D}
    \le \frac{e^{-\frac{t}{2}}}{1 - e^{-t}} \|\mathbb{E} [X_0 |X_t = Y_t] - \mathbb{E} [X_0 |X_t = \tilde{Y}_t]\|_{U^D} \le \frac{e^{-\frac{T-t}{2}}}{1 - e^{-(T-t)}} 2R.
\end{align*}
which in particular shows that 
    \[
        \|Y_t - \tilde{Y}_t - (Y_s - \tilde{Y}_s)\|_{U^D} \le \frac{e^{-\frac{{T-t}}{2}}}{1 - e^{-{T-t}}} 2R (t - s)
    \]
    for $t \ge s$. By taking the limit, we see that $t \mapsto \|Y_t - \tilde{Y}_t\|_{U}$ is continuous. Therefore, we can apply Grönwall to that quantity (continuity is a requirement for Grönwall).
    
    Furthermore, by using the above calculation for $\tau = 0$, we find that $\|Y_t - \tilde{Y}_t\|_U$ will be finite for all $t$, if $\|Y_0 - \tilde{Y}_0\|_U$ is finite (even if $Y_t$ and $\tilde{Y}_t$ are almost surely not in $U$). Now, since $Y_0 = \tilde{Y}_0$ and therefore $\|Y_0 - \tilde{Y}_0\|_U = 0$,
    \[
        \|Y_t - \tilde{Y}_t\|_{U}
        \leqslant \int_0^t \|s(T-r, Y_r) - s(T-r, \tilde{Y}_r)\|_{U^d} \mathd r
        \leqslant \int_0^t L_{T-r}^2 \|Y_t - \tilde{Y}_t\|_U \mathd r.
    \]
    If $t < T$, the Lipschitz constant $L_{T - t}$ is finite. Therefore, we can apply Grönwall to see that $\|Y_t - \tilde{Y}_t\|_U = 0$ to prove uniqueness on $[0, t]$ for any $t < T$. Hence, we have shown that there is a unique strong solution on $[0, T-\varepsilon]$. Since $\varepsilon$ was arbitrary, we have shown strong uniqueness on $[0, T)$. 
\end{proof}

\subsection{Proof of Theorem \ref{thm:uniqueness_gaussian}}\label{sec:uniqueness_gaussian}
Now we prove Theorem \ref{thm:uniqueness_gaussian}:

\begin{proof}
	\paragraph{Step 0: A priori bounds}
	Let $H$ be any Hilbert space on which $\mathcal{N}(0, C_\mu)$ is supported. Let $e_i$ be an eigenbasis von $C_\mu$ and $C$, which exists by our assumptions. Let $c_i$ and $\mu_i$ be the eigenvalues associated with $C$ and $C_\mu$ respectively, i.e.,
	\[
	C e_i = c_i e_i, \quad C_\mu e_i = \mu_i e_i.
	\]
	Furthermore, we define $C_t$ by
	\begin{equation}
		C_t = (e^{-t} C_\mu + (1 - e^{-t}) C),\label{equ:def_ct}
	\end{equation}
	which would be the covariance of $X_t$ at time $t$ in case $\Phi = 0$. The following operators are all bounded: $C_\mu C_t^{-1}$, $C C_t^{-1}$ and $C C_t C_\mu^{-1}$.
	We will show it for the first operator, and the others follow by similar arguments. The first operator will have eigenvalues
	\[
	\lambda_i = \frac{\mu_i}{e^{-t} c_i + (1 - e^{-t}) \mu_i}.
	\]
	We know that $\mu_i \to 0$ since $C_\mu$ is trace class. For $c_i \to 0$, the eigenvalues converge to $\lambda_i \to \frac{1}{(1 - e^{-t})}$,	while as $c_i \to \infty$, we get that $\lambda_i \to 0$. Furthermore, the derivative with respect to $c_i$ is negative, which shows that the $\lambda_i$ are bounded by $\frac{1}{1 - e^{-t}}$. A similar calculation can be done for the other operators listed.
	
	\paragraph{Step 1: Rewrite the reverse SDE} 
	The goal of this section is to show that we can write the drift in infinite dimensions as
	\[
	s(t, x_t) = -e^{\frac{t}{2}} \mathbb{E}[C (C_{\mu} C_t^{- 1})^{-1} \nabla\Phi(X_0) | X_t = x_t] + C C_t^{-1} x_t
	\]
	without assuming any more conditions on $\Phi$ or $\nabla \Phi$. To that end, we will argue in finite dimensions and take the limit in the end. Henceforth, unless we say otherwise, everything will be in finite dimensions for this step. The projection of $\mu_\text{data}$ to $H^D$ will be defined by
	\[
	\mu_\text{data}^D = \exp(-\Phi^D) \mathcal{N}(0, C_\mu^D),
	\]
	where $C_\mu^D$ is defined as in Section \ref{sec:spectral_approx} and
	\begin{align*}
	\exp(-\Phi^D(x^D)) 
    &= \mathbb{E}_{\mathcal{N}(0, C)}[\exp(-\Phi(X)) | X^D = x^D] \\
	&= \int \exp(-\Phi(x^D, x^{D+1:\infty})) ~ \mathd \mathcal{N}(0, C^{D+1:\infty})(x^{D+1:\infty})
	\end{align*}
	For the gradient it will then hold that
	\begin{align*}
		\nabla \Phi^D(x^D)
		&= \nabla \log \exp (\Phi^D(x^D)) \\
		&= \frac
		{\int \nabla_{x^D} \exp(-\Phi(x^D, x^{D+1:\infty})) ~ \mathd \mathcal{N}(0, C^{D+1:\infty})(x^{D+1:\infty})}
		{\int \exp(-\Phi(x^D, x^{D+1:\infty})) ~ \mathd \mathcal{N}(0, C^{D+1:\infty})(x^{D+1:\infty})} \\
		&= \frac
		{-\int \nabla_{x^D} \Phi(x^D, x^{D+1:\infty}) \exp(-\Phi(x^D, x^{D+1:\infty})) ~ \mathd \mathcal{N}(0, C^{D+1:\infty})(x^{D+1:\infty})}
		{\int \exp(-\Phi(x^D, x^{D+1:\infty})) ~ \mathd \mathcal{N}(0, C^{D+1:\infty})(x^{D+1:\infty})} \\
		&= \mathbb{E}_{\mu_\text{data}}[\nabla \Phi(x) | X^D = x^D].
	\end{align*}
	We denote by $(\tilde{X}_t: 0 \le t \le T)$ the Gaussian solution, started in $\tilde{X}_0 \sim \mathcal{N}(0, C)$ and as always by $(X_t: 0 \le t \le T)$ the solution to the forward process started in $X_0 \sim \mu_\text{data}$. To be precise,
	\[ \left(\begin{array}{c}
		\tilde{X}_0\\
		\tilde{X}_t
	\end{array}\right) \sim \mathcal{N} \left( 0, \left(\begin{array}{cc}
		C_{\mu} & e^{- \frac{t}{2}} C_{\mu}\\
		e^{- \frac{t}{2}} C_{\mu} & C_t
	\end{array}\right) \right) , \]
	where $C_t$ is defined in \eqref{equ:def_ct}.
	Therefore,
	\[ \tilde{X}_0 | \tilde{X}_t \sim \mathcal{N} \left( e^{- \frac{t}{2}} C_{\mu} C_t^{- 1} \tilde{X}_t,
	C_{\mu} - e^{- t} C_{\mu} C_t^{- 1} C_{\mu} \right). \]
	Then we have that
	\begin{equation}
		\frac{\mathd p_t}{\mathd \mathcal{N} (0, C_t)} (x) =\mathbb{E} [\exp (-
		\Phi (\tilde{X}_0)) |\tilde{X}_t = x] =\mathbb{E}_{\mathcal{N} \left( e^{- \frac{t}{2}}
			C_{\mu} C_t^{- 1} x, C_{\mu} - e^{- t} C_{\mu} C_t^{- 1} C_{\mu}
			\right)} [\exp (- \Phi (\tilde{X}_0))].
		\label{equ:density_pt_N0C}
	\end{equation}
	We denote by
	\begin{equation*}
		A_t = e^{- \frac{t}{2}} C_{\mu} C_t^{- 1}, \quad 
		Q_t = C_{\mu} - e^{- t} C_{\mu} C_t^{- 1} C_{\mu}.
	\end{equation*}
	Note that for $C = C_\mu$ all of the definitions simplify to easier terms.
	Taking $\nabla \log$ of the above leads to
	\begin{align*}
		\nabla_{x_t} \frac{\mathd p_t}{\mathd \mathcal{N} (0, C_t)} (x_t) 
		&= \frac{1}{Z} \int \nabla_{x_t} \exp(-\|A_t x_0 - x_t\|^2_{Q_t}) \exp(-\Phi(x_0)) \mathd x_0 \\
		&= \frac{1}{Z} \int -A_t^{-1} \nabla_{x_0} \exp(-\|A_t x_0 - x_t\|^2_{Q_t}) \exp(-\Phi(x_0)) \mathd x_0 \\
		&= \frac{1}{Z} \int A_t^{-1} \exp(-\|A_t x_0 - x_t\|^2_{Q_t}) \nabla_{x_0} \exp(-\Phi(x_0)) \mathd x_0  \\
		&= -\frac{1}{Z} \int A_t^{-1} \exp(-\|A_t x_0 - x_t\|^2_{Q_t}) \nabla_{x_0}\Phi(x_0) \exp(-\Phi(x_0)) \mathd x_0,
	\end{align*}
	where $B$ is the normalizing constant.
	Therefore,
	\begin{align*}
		C\nabla_{x_t} \log \frac{\mathd p_t}{\mathd \mathcal{N} (0, C_t)} (x_t)
		&= \frac{- \int C A_t^{-1} \exp(-\|A_t x_0 - x_t\|^2_{Q_t}) \nabla_{x_0}\Phi(x_0) \exp(-\Phi(x_0)) \mathd x_0 }
		{\int \exp(-\|A_t x_0 - x_t\|^2_{Q_t}) \exp(-\Phi(x_0)) \mathd x_0} \\
		&= \frac{\int C A_t^{-1}\nabla \Phi (x) \exp (- \Phi (x)) \mathd
			\mathcal{N} (A_t x_t, Q_t)}{\int \exp (\Phi (x)) \mathd \mathcal{N} (A_t
			x_t, Q_t)} \\
		&= \mathbb{E}[-CA_t^{-1}\nabla\Phi(X_0) | X_t = x_t].
	\end{align*}
	We make the dimension dependence explicit and get that
	\begin{align*}
		C\nabla_{x_t^D} \log \frac{\mathd p_t^D}{\mathd \mathcal{N} (0, C_t^D)} (x_t^D)
		&= \mathbb{E}[-C^D (A_t^{-1})^D \nabla\Phi^D(X_0^D) | X_t^D = x_t^D] \\
		&= \mathbb{E}[-C^D (A_t^{-1})^D \mathbb{E}[P^D \nabla\Phi(X_0)| X_0^D] | X_t^D = x_t^D] \\
		&= \mathbb{E}[-C^D (A_t^{-1})^D P^D \nabla\Phi(X_0) | X_t^D = x_t^D] \\
		&= P^D \mathbb{E}[-C A_t^{-1} \nabla\Phi(X_0) | X_t^D = x_t^D].
	\end{align*}
	Furthermore, 
	\begin{align}
		s^D(t, X_t^D) = C\nabla \log p^D_t(x_t^D) 
		=& C\nabla \log \frac{p^D_t(x_t^D)}{\mathcal{N}(0, C_t)} + C \nabla \log \mathcal{N}(0, C_t)(x^D_t) \nonumber \\
		=& P^D \mathbb{E}[-C A_t^{-1} \nabla\Phi(X_0) | X_t^D = x_t^D] + C^D (C_t^{-1})^D x_t^D. \label{equ:sD_gaussian_case_rewrite}
	\end{align}
	By Lemma \ref{lemma:approximations converge} $s^D$ converges almost surely to $s$. Furthermore, since $C A_t^{-1}$ is bounded and $\nabla \Phi(X_0)$ is Lipschitz, we get that
    \[
        \mathbb{E}[\|\mathbb{E}[C A_t^{-1} \nabla \log \Phi(X_0) | X_t]\|^2] 
        \lesssim \mathbb{E}[\|\nabla \log \Phi(X_0)\|^2] 
        \lesssim \mathbb{E}[\|X_0\|^2] < \infty.
    \]
    Therefore, also the first term in \eqref{equ:sD_gaussian_case_rewrite} converges to its infinite dimensional counterpart by Lemma \ref{lemma:approximations converge}. The second term in \eqref{equ:sD_gaussian_case_rewrite} also converges. Therefore, we can take the limit on both sides and obtain
	\[
	s(t, x_t) = \mathbb{E}[-C A_t^{-1} \nabla\Phi(X_0) | X_t = x_t] + C C_t^{-1} x_t.
	\]
	The formula \eqref{equ:density_pt_N0C} for $p_t$ holds in infinite dimensions too, and therefore we can write the conditional expectation as 
	\begin{equation*}
		f(t, x_t) = \mathbb{E}[-C A_t^{-1} \nabla\Phi(\tilde{X}_0) | \tilde{X}_t = x_t]
		= \frac{\int C A_t^{-1}\nabla \Phi (x) \exp (- \Phi (x)) \mathd
			\mathcal{N} (A_t x_t, Q_t)}{\int \exp (-\Phi (x)) \mathd \mathcal{N} (A_t
			x_t, Q_t)}.
	\end{equation*}
	\paragraph{Step 2: Local Lipschitzness in with respect to $\|\cdot\|$}
	Since $C C_t^{-1}$ is bounded by step $0$, it suffices to show that $f$ is locally Lipschitz. 
	We will now bound the difference
	\begin{eqnarray*}
		& & f(t, y_t) - f(t, x_t) \\
		& = & \frac{\int C A_t^{-1}\nabla \Phi (x) \exp (- \Phi (x)) \mathd
			\mathcal{N} (A_t x_t, Q_t)}{\int \exp (-\Phi (x)) \mathd \mathcal{N} (A_t
			x_t, Q_t)} 
		- \frac{\int C A_t^{-1}\nabla \Phi (x) \exp (- \Phi (x)) \mathd
			\mathcal{N} (A_t y_t, Q_t)}{\int \exp (-\Phi (x)) \mathd \mathcal{N} (A_t
			y_t, Q_t)}\\
		& \eqqcolon  & \frac{B_1}{Z_1} - \frac{B_2}{Z_2} = \left( \frac{1}{Z_1} -
		\frac{1}{Z_2} \right) B_2 - \frac{1}{Z_1} (B_1 - B_2) = \left( \frac{Z_1 -
			Z_2}{Z_1 Z_2} \right) B_2 - \frac{1}{Z_1} (B_1 - B_2) .
	\end{eqnarray*}
	We will fix an $R \geqslant 0$ and assume that $\| x_t \|, \| y_t \|
	\leqslant R$. Then, since $A_t$ is bounded, there exists an $\tilde{R}$ such that $\| A_t x_t \|, \| A_t y_t \| \leqslant \tilde{R}$.
	
	Then
	\begin{eqnarray*}
		Z_1 & = & \int \exp (- \Phi (x)) \mathd \mathcal{N} (A_t x_t, Q_t) (x ) =
		\int \exp (- \Phi (x + A_t x_t)) \mathd \mathcal{N} (0, Q_t) (x )\\
		& \geqslant & \int \exp (- (E_1 + E_2 \| x + A_t x_t \|^2)) \mathd
		\mathcal{N} (0, Q_t) (x )\\
		& = & \exp (- E_1 + 2 \| A_t x_t \|^2) \int \exp (- E_2 \| x \|^2) \mathd
		\mathcal{N} (0, Q_t) (x )\\
		& \gtrsim & E \exp (- 2 \| A_t x \|^2) \geqslant E \exp (- 2 \tilde{R}^2)
	\end{eqnarray*}
	where $E$ is a finite constant that only depends on $C_{\mu}, C, L$ and the
	$E_i$. A similar bound holds for $Z_2$. For
	\begin{eqnarray*}
		Z_1 - Z_2 & = & \int \exp (- \Phi (x + A_t x_t)) - \exp (- \Phi (x + A_t
		y_t)) \mathd \mathcal{N} (0, Q_t) (x )\\
		& \leqslant & \int \exp (-E_0) L \| A_t x_t - A_t y_t \| \mathd
		\mathcal{N} (0, Q_t) (x )\\
		& = & \exp (- E_0) L \| A_t x_t - A_t y_t \| \leqslant E L \| x_t - y_t,
		\|
	\end{eqnarray*}
	where we used that if $\Phi$ is $C^1$ and its derivative is bounded by $L$, then $\exp(-\Phi)$ has a derivative bounded by $\exp(-\inf \Phi)L$.
	Furthermore, we get that
	\begin{eqnarray*}
		& & \int C A_t^{-1} \nabla \Phi (x) \exp (- \Phi (x)) \mathd \mathcal{N} (A_t x_t, Q_t) \\
		& \leqslant & \exp (- E_0) \int C A_t^{-1} \nabla \Phi (x + A_t x_t) \mathd
		\mathcal{N} (0, Q_t)\\
		& \leqslant & \exp (- E_0)  \int \| C A_t^{-1}\| (\| \nabla \Phi (0) \| + L \| x
		+ A_t x_t \|) \mathd \mathcal{N} (0, Q_t)\\
		& \leqslant & \exp (- E_0) \| C A_t^{-1}\| \left( \| \nabla \Phi (0) \| + L \|
		A_t x_t \| + L \int \| x \| \mathd \mathcal{N} (0, Q_t) \right)\\
		& \leqslant & E (1 + \| x_t \|) \leqslant E (1 + R),
	\end{eqnarray*}
	where we used that $C A_t^{-1}$ and $A_t$ are bounded. 
	We also get that
	\begin{eqnarray*}
		&  & \| B_1 - B_2 \|\\
		& = & \int \| C A_t^{-1} \nabla \Phi (x + A_t x_t) \| | \exp (- \Phi (x + A_t
		x_t)) - \exp (- \Phi (x + A_t y_t)) | \mathd \mathcal{N} (0, Q_t) (x )\\
		&  & + \int \| C A_t^{-1}\nabla \Phi (x + A_t y_t) - C A_t^{-1}\nabla \Phi (x + A_t x_t)
		\| \exp (- \Phi (x + A_t y_t)) \mathd \mathcal{N} (0, Q_t) (x )\\
		& \leqslant & \exp (-E_0) L \| A_t x_t - A_t y_t \| \| C A_t^{-1}\| \left( \|
		\nabla \Phi (0) \| + L \left(\| A_t x_t \| + \int \| x \| \mathd \mathcal{N}
		(0, Q_t) (x ) \right)\right)\\
		&  & + L  \| C A_t^{-1} \| \| A_t x_t - A_t y_t \| \exp (- E_0)\\
		& \leqslant & E \| x_t - y_t \| (1 + R),
	\end{eqnarray*}
	where we again used the boundedness of $C A_t^{-1}$ and $A_t$. Putting it all together, we get that
	\begin{eqnarray*}
		&  & \left\| C \nabla \log \frac{\mathd p_t}{\mathd \mathcal{N} (0, C_t)}
		(x_t) - C \nabla \log \frac{\mathd p_t}{\mathd \mathcal{N} (0, C_t)} (y_t)
		\right\|\\
		& \leqslant & E \exp (4 \tilde{R}^2) L \| x_t - y_t \| + E \exp (2
		\tilde{R}) \| x_t - y_t \| (1 + R)
		\leqslant E \exp (4 \tilde{R}^2)  \| x_t - y_t \| .
	\end{eqnarray*}

    \paragraph{Step 3: Strong uniqueness and existence} Using the local Lipschitzness, we apply Grönwall to obtain strong uniqueness of solutions. This is a standard argument and similar to what we did in Step 3 of the proof of Theorem \ref{thm:uniqueness_gaussian}. Alternatively, see, for example, \citet[Theorem 2.5 in Section 5.2.B]{karatzas1991brownian}.

    We can now prove weak existence of the reverse SDE. By Theorem \ref{theorem SDEs well defined}, the time reversal will be a weak solution with initial condition $\mathbb{P}_T$. Denote the path measure of $Y$ by $\mathbb{Q}$. Under the assumptions of the Theorem, $\mathcal{N}(0, C)$ will be absolutely continuous with respect to $p_T$. We define $\tilde{Q}$ by
    \[
	\frac{\mathd \mathbb{\tilde{Q}}}{\mathd \mathbb{Q}}(y_{[0, T]}) = \frac{\mathd \mathcal N(0,C)}{\mathd \mathbb{P}_T}(y_0).
    \] 
    $\tilde{\mathbb{Q}}$ is then the path measure of a solution $\tilde{Y}$ to \ref{reverse SDE inf abs} which has initial condition $\mathcal N(0,C)$, therefore we have constructed a weak solution.    
    With that we conclude the proof since, weak existence together with strong uniqueness imply strong existence, see \citet[Section 5.3]{karatzas1991brownian}.

\end{proof}

\section{Wasserstein-Bound Proof}\label{sec:distance_proof}

We now prove Theorem \ref{theorem distance}:

\begin{proof}
	We prove the theorem with the finite-dimensional notation, but one can 
	replace $\nabla \log p_t$ by $s(t, \cdot)$ and nothing changes.
	We will partition $[0, T]$ into $\tau = \{ 0 = t_0, \ldots, t_N = T \}$. 
    For the given partition we denote
	\begin{eqnarray*}
		\lfloor t \rfloor =  \max_{t_i \in \tau} \{ t_i \leqslant t \},
        \quad \lceil t \rceil  =  \min_{t_i \in \tau} \{ t_i \geqslant t \}, 
        \quad \Delta = \max_{k = 0, \ldots, N - 1} t_{k + 1} - t_k.
	\end{eqnarray*}
	We couple two strong solutions of $Y_t$ and $\tilde{Y}_t$ for the same Brownian motion $B_t$. These strong solutions exist because of the assumptions of the theorem and Theorem \ref{thm:uniqueness_manifold} or Theorem \ref{thm:uniqueness_gaussian}.
    The difference between $Y_t$ and $\tilde{Y}_t$ can
	be bounded as follows:
	\begin{eqnarray*}
		\mathd \| Y_t - \tilde{Y}_t \|  & = & \frac{1}{2} \| Y_t - \tilde{Y}_t \| 
		+ \frac{1}{\| Y_t - \tilde{Y}_t \|} \langle Y_t - \tilde{Y}_t, C \nabla
		\log p_{T - t} (Y_t) - \tilde{s} (T - \lfloor t \rfloor,
		\tilde{Y}_{\lfloor t \rfloor}) \rangle \\
		& \leqslant & \frac{1}{2} \| Y_t - \tilde{Y}_t \| + \| C \nabla \log p_{T
			- t} (Y_t) - s_{\theta} (T - t, \tilde{Y}_{\lfloor t \rfloor}) \| .
	\end{eqnarray*}
	We bound
	\begin{eqnarray*}
		&  & \| C \nabla \log p_{T - t} (Y_t) - \tilde{s} (T - t, \tilde{Y}_t)
		\|\\
		& \leqslant & \| C \nabla \log p_{T - t} (Y_t) - C \nabla \log p_{T -
			\lfloor t \rfloor} (Y_{\lfloor t \rfloor}) \|\\
		&  & + \| C \nabla \log p_{T - \lfloor t \rfloor} (Y_{\lfloor t \rfloor})
		- \tilde{s} (\lfloor t \rfloor, Y_{\lfloor t \rfloor}) \| + \| \tilde{s}
		(\lfloor t \rfloor, Y_{\lfloor t \rfloor}) - \tilde{s} (\lfloor t \rfloor,
		\tilde{Y}_{\lfloor t \rfloor}) \|\\
		& \leqslant & \| \nabla_U \log p_{T - t} (Y_t) - \nabla_U \log p_{T -
			\lfloor t \rfloor} (Y_{\lfloor t \rfloor}) \|\\
		&  & + \| C \nabla \log p_{T - \lfloor t \rfloor} (Y_{\lfloor t \rfloor})
		- \tilde{s} (\lfloor t \rfloor, Y_{\lfloor t \rfloor}) \| + L_s \|
		Y_{\lfloor t \rfloor} - \tilde{Y}_{\lfloor t \rfloor} \| .
	\end{eqnarray*}
	We take the supremum to get rid of the delay term $\| Y_{\lfloor t \rfloor}
	- \tilde{Y}_{\lfloor t \rfloor} \|$ and obtain
	\begin{eqnarray*}
		 \sup_{\tau \leqslant s} \| Y_r - \tilde{Y}_r \|
		& \leqslant & L_s' \int_0^s \sup_{r \leqslant t} \| Y_r - \tilde{Y}_r \|
		\mathd t + \int_0^s \| \nabla_U \log p_{T - t} (Y_t) - \nabla_U \log p_{T
			- \lfloor t \rfloor} (Y_{\lfloor t \rfloor}) \| \mathd t\\
		&  & + \int \left\| \nabla_U \log \frac{p_{T - \lfloor t \rfloor}}{\nu}
		(Y_{\lfloor t \rfloor}) - \tilde{s} (\lfloor t \rfloor, Y_{\lfloor t
			\rfloor}) \right\| \mathd t,
	\end{eqnarray*}
	where $L_s' = L + \frac{1}{2}$.
	Squaring the above expression and taking expectations, we arrive at
	\begin{eqnarray*}
		&  & \mathbb{E} [\sup_{\tau \leqslant s} \| Y_r - \tilde{Y}_r \|^2]\\
		& \lesssim & {L_s'}^2 \int_0^s \mathbb{E} [\sup_{r \leqslant t} \| Y_r -
		\tilde{Y}_r \|^2] \mathd t + \int_0^s \mathbb{E} [\| \nabla_U \log p_{T -
			t} (Y_t) - \nabla_U \log p_{T - \lfloor t \rfloor} (Y_{\lfloor t \rfloor})
		\|^2] \mathd t\\
		&  & +\mathbb{E} \left[ \left\| \nabla_U \log \frac{p_{T - \lfloor t
				\rfloor}}{\nu} (Y_{\lfloor t \rfloor}) - \tilde{s} (\lfloor t \rfloor,
		Y_{\lfloor t \rfloor}) \right\|^2 \right] \mathd t\\
		& = & {L_s'}^2 \int_0^s \mathbb{E} [\sup_{r \leqslant t} \| Y_r -
		\tilde{Y}_r \|^2] \mathd t + \int_0^s B_1 + B_2 \mathd t
	\end{eqnarray*}
	We start by bounding $B_1$:
	\begin{eqnarray*}
		B_1 & \leqslant & \mathbb{E} \left[ \left\| \nabla_U \log p_{T - t} (Y_t)
		- e^{\frac{(t - \lfloor t \rfloor)}{2}} \nabla_U \log p_{T - \lfloor t
			\rfloor} (Y_{\lfloor t \rfloor}) \right\|^2 \right]\\
		&  & + \left( 1 - e^{\frac{(t - \lfloor t \rfloor)}{2}} \right)^2
		\mathbb{E} [\| \nabla_U \log p_{T - \lfloor t \rfloor} (Y_{\lfloor t
			\rfloor}) \|^2]\\
		& = & \mathbb{E} [\| \nabla_U \log p_{T - t} (Y_t) \|^2] -\mathbb{E}
		\left[ \left\| e^{\frac{(t - \lfloor t \rfloor)}{2}} \nabla_U \log p_{T -
			\lfloor t \rfloor} (Y_{\lfloor t \rfloor}) \right\|^2 \right]\\
		&  & + \left( 1 - e^{\frac{(t - \lfloor t \rfloor)}{2}} \right)^2
		\mathbb{E} [\| \nabla_U \log p_{T - \lfloor t \rfloor} (Y_{\lfloor t
			\rfloor}) \|^2]
	\end{eqnarray*}
	where we used that the $L^2$ norm of the a martingale $M_t$ difference is
	the difference of the $L^2$ norms, i.e., $\mathbb{E} [\| M_t - M_s \|^2]
	=\mathbb{E} [\| M_t \|^2] -\mathbb{E} [\| M_s \|^2]$ for $t \geqslant s$.
	Then
	\begin{align*}
		\int_0^T B_1 \mathd t \leqslant & \sum_{i = 1}^N (\mathbb{E} [\|
		\nabla_U \log p_{T - t_{k + 1}} (Y_{t_{k + 1}}) \|^2] -\mathbb{E} [\|
		e^{(t_{k + 1} - t_k)} \nabla_U \log p_{T - t_k} (Y_{t_k}) \|^2]) (t_{k +
			1} - t_k)\\
		&  + \left( 1 - e^{\frac{\Delta t}{2}} \right)^2 \mathbb{E} [\|
		\nabla_U \log p_T (Y_T) \|^2]\\
		& \leqslant \Delta t \sum_{i = 1}^N (\mathbb{E} [\| \nabla_U \log p_{T
			- t_{k + 1}} (Y_{t_{k + 1}}) \|^2] -\mathbb{E} [\| e^{(t_{k + 1} - t_k)}
		\nabla_U \log p_{T - t_k} (Y_{t_k}) \|^2])\\
		&  + \left( 1 - e^{\frac{\Delta t}{2}} \right)^2 \mathbb{E} [\|
		\nabla_U \log p_T (Y_T) \|^2]\\
		& \leqslant \Delta t\mathbb{E} [\| \nabla_U \log p_0 (Y_T) \|^2] +
		\left( 1 - e^{\frac{\Delta t}{2}} \right)^2 \mathbb{E} [\| \nabla_U \log
		p_T (Y_T) \|^2]\\
		  = & O (\Delta t) \mathbb{E} [\| \nabla_U \log p_0 (Y_T) \|^2]
	\end{align*}
	where we used that the $L^2$ norm of a martingale is increasing. The term
	$B_2$ is nothing more than the loss.
	
	Putting it all together, we arrive at
	\begin{eqnarray*}
		\mathbb{E} [\sup_{r \leqslant s} \| Y_r - \tilde{Y}_r \|^2] & \leqslant
		& L^2 \int_0^s \mathbb{E} [\sup_{r \leqslant t} \| Y_r - \tilde{Y}_r \|^2]
		\mathd t + O (\Delta t) \mathbb{E} [\| \nabla_U \log p_0 (Y_T) \|^2] +
		\tmop{Loss}\\
		& = & L^2 \int_0^s \mathbb{E} [\sup_{r \leqslant t} \| Y_r - \tilde{Y}_r
		\|^2] + \tmop{Error}
	\end{eqnarray*}
	and can apply Gr\"onwall to get that
	\begin{eqnarray*}
		\mathbb{E} [\sup_{r \leqslant s} \| Y_r - \tilde{Y}_r
		\|^2] & \leqslant & (\mathbb{E} [\| Y_0 - \tilde{Y}_0 \|^2]
		+ \tmop{Error}) \exp (L^2 s) .
	\end{eqnarray*}
	Since $Y_T \sim \mudata$ and $\tilde{Y}_T \sim
	\musample$ we found a coupling of $\mudata$ and
	$\musample$ and bounded its $L^2$ distance. We have not picked the
	coupling of $Y_0 \sim p_T$ and $\tilde{Y}_0 \sim \Normal (0, C)$ yet.
	Therefore we just pick a $\varepsilon$-optimal coupling in the squared Wasserstein
	distance, i.e., $\mathbb{E} [\| Y_0 - \tilde{Y}_0 \|^2]
	\leqslant \mathcal{W}_2^2 (p_T, \Normal (0, C)) + \varepsilon$ and obtain
	\begin{eqnarray*}
		\mathcal{W}_2^2 (\musample, \mudata) \leqslant
		\mathbb{E} [\sup_{r \leqslant T} \| Y_r - \tilde{Y}_r \|^2]  &
		\leqslant & (\mathcal{W}_2 (p_T, \Normal (0, C)) + \varepsilon +
		\tmop{Error} ) \exp (L^2 T) .
	\end{eqnarray*}
	Since $\varepsilon$ was arbitrary, the statement of the theorem follows, we actually get
	\begin{eqnarray*}
		\mathcal{W}_2^2 (\musample, \mudata) \leqslant
		\mathbb{E} [\sup_{r \leqslant T} \| Y_r - \tilde{Y}_r \|^2]  &
		\leqslant & (\mathcal{W}_2 (p_T, \Normal (0, C)) +
		\tmop{Error} ) \exp (L^2 T) .
	\end{eqnarray*}
	Finally, $\mathcal{W}_2^2(p_T, \Normal(0, C))$ can be upper bounded by 
	\[
	\mathcal{W}_2^2(p_T, \Normal(0, C)) \le \exp(-T) \mathcal{W}_2^2(\mudata, \Normal(0, C))
	\]
	since the Ornstein-Uhlenbeck forward process is contracting with rate $\exp(-t)$ in the squared $\mathcal{W}_2^2$-distance. From this, the statement of the theorem follows.
\end{proof}

\begin{proposition}
    Let $U$ be the Cameron-Martin space associated to a measure $\mathcal{N}(0, C)$ taking values in $H$. Then, $\|\cdot\|_U: H \to [0, \infty]$ is lower-semicontinuous and convex on $H$.
    \label{prop:U_norm_lower_semicontinuous}
\end{proposition}
\begin{proof}
    Let $(e_i, c_i)$ be the eigenvectors and eigenvalues of $C$. Let $f_k \to f$ in $H$. We will prove lower semicontinuity for $\|\cdot\|_U^2$, the result for $\|\cdot\|_U$ then follows. Then,
    \[
        \|f\|_U^2 
        = \sum_{d=1}^\infty \lim_{k \to \infty} \langle f_k, e_i \rangle c_{i}^{-1}
        \leqslant \liminf_{k \to \infty} \sum_{d=1}^\infty  \langle f_k, e_i \rangle c_{i}^{-1}
        = \liminf_{k \to \infty} \|f_k\|_U^2,
    \]
    which proves lower semi-continuity. Convexity follows since $\|\cdot\|_U$ is convex when restricted to $U$, and infinite otherwise. 
\end{proof}

\bibliography{lib}

\begin{thebibliography}{48}
\providecommand{\natexlab}[1]{#1}
\providecommand{\url}[1]{\texttt{#1}}
\expandafter\ifx\csname urlstyle\endcsname\relax
  \providecommand{\doi}[1]{doi: #1}\else
  \providecommand{\doi}{doi: \begingroup \urlstyle{rm}\Url}\fi

\bibitem[Batzolis et~al.(2021)Batzolis, Stanczuk, Sch{\"o}nlieb, and
  Etmann]{batzolis2021conditional}
Georgios Batzolis, Jan Stanczuk, Carola-Bibiane Sch{\"o}nlieb, and Christian
  Etmann.
\newblock Conditional image generation with score-based diffusion models.
\newblock \emph{arXiv preprint arXiv:2111.13606}, 2021.

\bibitem[Batzolis et~al.(2022)Batzolis, Stanczuk, and
  Sch{\"o}nlieb]{batzolis2022your}
Georgios Batzolis, Jan Stanczuk, and Carola-Bibiane Sch{\"o}nlieb.
\newblock Your diffusion model secretly knows the dimension of the data
  manifold.
\newblock \emph{arXiv preprint arXiv:2212.12611}, 2022.

\bibitem[Beskos et~al.(2011)Beskos, Pinski, Sanz-Serna, and
  Stuart]{beskos2011hybrid}
Alexandros Beskos, Frank~J Pinski, Jes{\'u}s~Mar{\i}a Sanz-Serna, and Andrew~M
  Stuart.
\newblock Hybrid {M}onte {C}arlo on {H}ilbert spaces.
\newblock \emph{Stochastic Processes and their Applications}, 121\penalty0
  (10):\penalty0 2201--2230, 2011.

\bibitem[Bogachev(1997)]{bogachev1997differentiable}
VI~Bogachev.
\newblock Differentiable measures and the {M}alliavin calculus.
\newblock \emph{Journal of Mathematical Sciences}, 87\penalty0 (4):\penalty0
  3577--3731, 1997.

\bibitem[Bortoli(2022)]{debortoli2022convergence}
Valentin~De Bortoli.
\newblock Convergence of denoising diffusion models under the manifold
  hypothesis.
\newblock \emph{Transactions on Machine Learning Research}, 2022.
\newblock ISSN 2835-8856.
\newblock URL \url{https://openreview.net/forum?id=MhK5aXo3gB}.

\bibitem[Bou-Rabee and Eberle(2021)]{bou2021two}
Nawaf Bou-Rabee and Andreas Eberle.
\newblock Two-scale coupling for preconditioned hamiltonian monte carlo in
  infinite dimensions.
\newblock \emph{Stochastics and Partial Differential Equations: Analysis and
  Computations}, 9:\penalty0 207--242, 2021.

\bibitem[Certaine(1960)]{certaine1960solution}
John Certaine.
\newblock The solution of ordinary differential equations with large time
  constants.
\newblock \emph{Mathematical methods for digital computers}, 1:\penalty0
  128--132, 1960.

\bibitem[Chen et~al.(2022)Chen, Lee, and Lu]{chen2022improved}
Hongrui Chen, Holden Lee, and Jianfeng Lu.
\newblock Improved analysis of score-based generative modeling: User-friendly
  bounds under minimal smoothness assumptions.
\newblock \emph{arXiv preprint arXiv:2211.01916}, 2022.

\bibitem[Chen et~al.(2023)Chen, Chewi, Li, Li, Salim, and
  Zhang]{chen2023sampling}
Sitan Chen, Sinho Chewi, Jerry Li, Yuanzhi Li, Adil Salim, and Anru Zhang.
\newblock Sampling is as easy as learning the score: theory for diffusion
  models with minimal data assumptions.
\newblock In \emph{The Eleventh International Conference on Learning
  Representations}, 2023.
\newblock URL \url{https://openreview.net/forum?id=zyLVMgsZ0U_}.

\bibitem[Cotter et~al.(2013)Cotter, Roberts, Stuart, and
  White]{cotter2013statsci}
S.~L. Cotter, G.~O. Roberts, A.~M. Stuart, and D.~White.
\newblock {MCMC} methods for functions: Modifying old algorithms to make them
  faster.
\newblock \emph{Statistical Science}, 28\penalty0 (3):\penalty0 424--446, 2013.
\newblock \doi{10.1214/13-STS421}.

\bibitem[Cranmer et~al.(2020)Cranmer, Brehmer, and Louppe]{cranmer2020frontier}
Kyle Cranmer, Johann Brehmer, and Gilles Louppe.
\newblock The frontier of simulation-based inference.
\newblock \emph{Proceedings of the National Academy of Sciences}, 117\penalty0
  (48):\penalty0 30055--30062, 2020.

\bibitem[Cui et~al.(2016)Cui, Law, and Marzouk]{cui2016dimension}
Tiangang Cui, Kody~JH Law, and Youssef~M Marzouk.
\newblock Dimension-independent likelihood-informed mcmc.
\newblock \emph{Journal of Computational Physics}, 304:\penalty0 109--137,
  2016.

\bibitem[Da~Prato and Zabczyk(2014)]{da2014stochastic}
Giuseppe Da~Prato and Jerzy Zabczyk.
\newblock \emph{Stochastic equations in infinite dimensions}.
\newblock Cambridge university press, 2014.

\bibitem[De~Bortoli(2022)]{de2022convergence}
Valentin De~Bortoli.
\newblock Convergence of denoising diffusion models under the manifold
  hypothesis.
\newblock \emph{arXiv preprint arXiv:2208.05314}, 2022.

\bibitem[Dhariwal and Nichol(2021)]{dhariwal2021diffusion}
Prafulla Dhariwal and Alexander Nichol.
\newblock Diffusion models beat {GANs} on image synthesis.
\newblock \emph{Advances in Neural Information Processing Systems}, 34, 2021.

\bibitem[Duane et~al.(1987)Duane, Kennedy, Pendleton, and
  Roweth]{duane1987hybrid}
Simon Duane, Anthony~D Kennedy, Brian~J Pendleton, and Duncan Roweth.
\newblock Hybrid monte carlo.
\newblock \emph{Physics letters B}, 195\penalty0 (2):\penalty0 216--222, 1987.

\bibitem[Durrett(2005)]{Durrett2005probability}
Richard Durrett.
\newblock \emph{Probability: Theory and Examples}.
\newblock Probability: Theory \& Examples. Duxbury Press, 3 edition, 2005.
\newblock ISBN 0534424414; 9780534424411.

\bibitem[F{\"o}llmer and Wakolbinger(1986)]{follmer1986time}
H~F{\"o}llmer and A~Wakolbinger.
\newblock Time reversal of infinite-dimensional diffusions.
\newblock \emph{Stochastic processes and their applications}, 22\penalty0
  (1):\penalty0 59--77, 1986.

\bibitem[Giné and Nickl(2015)]{gine_nickl_2015}
Evarist Giné and Richard Nickl.
\newblock \emph{Mathematical Foundations of Infinite-Dimensional Statistical
  Models}.
\newblock Cambridge Series in Statistical and Probabilistic Mathematics.
  Cambridge University Press, 2015.
\newblock \doi{10.1017/CBO9781107337862}.

\bibitem[Guth et~al.(2022)Guth, Coste, De~Bortoli, and Mallat]{guth2022wavelet}
Florentin Guth, Simon Coste, Valentin De~Bortoli, and Stephane Mallat.
\newblock Wavelet score-based generative modeling.
\newblock \emph{arXiv preprint arXiv:2208.05003}, 2022.

\bibitem[Hagemann et~al.(2023)Hagemann, Ruthotto, Steidl, and
  Yang]{hagemann2023multilevel}
Paul Hagemann, Lars Ruthotto, Gabriele Steidl, and Nicole~Tianjiao Yang.
\newblock Multilevel diffusion: Infinite dimensional score-based diffusion
  models for image generation.
\newblock \emph{arXiv preprint arXiv:2303.04772}, 2023.

\bibitem[Hairer(2009)]{hairer2009spde}
Martin Hairer.
\newblock An introduction to stochastic {PDE}s.
\newblock \emph{arXiv preprint arXiv:0907.4178}, 2009.

\bibitem[Hairer et~al.(2014)Hairer, Stuart, and Vollmer]{hairer2014spectral}
Martin Hairer, Andrew~M Stuart, and Sebastian~J Vollmer.
\newblock Spectral gaps for a {M}etropolis--{H}astings algorithm in infinite
  dimensions.
\newblock \emph{Annals of Applied Probability}, 2014.

\bibitem[Haussmann and Pardoux(1986)]{haussmann1986time}
Ulrich~G Haussmann and Etienne Pardoux.
\newblock Time reversal of diffusions.
\newblock \emph{The Annals of Probability}, pages 1188--1205, 1986.

\bibitem[Ho et~al.(2020)Ho, Jain, and Abbeel]{ho2020denoising}
Jonathan Ho, Ajay Jain, and Pieter Abbeel.
\newblock Denoising diffusion probabilistic models.
\newblock \emph{Advances in Neural Information Processing Systems},
  33:\penalty0 6840--6851, 2020.

\bibitem[Kadkhodaie and Simoncelli(2021)]{kadkhodaie2021stochastic}
Zahra Kadkhodaie and Eero~P Simoncelli.
\newblock Stochastic solutions for linear inverse problems using the prior
  implicit in a denoiser.
\newblock In A.~Beygelzimer, Y.~Dauphin, P.~Liang, and J.~Wortman Vaughan,
  editors, \emph{Advances in Neural Information Processing Systems}, 2021.
\newblock URL \url{https://openreview.net/forum?id=x5hh6N9bUUb}.

\bibitem[Karatzas et~al.(1991)Karatzas, Karatzas, Shreve, and
  Shreve]{karatzas1991brownian}
Ioannis Karatzas, Ioannis Karatzas, Steven Shreve, and Steven~E Shreve.
\newblock \emph{Brownian motion and stochastic calculus}, volume 113.
\newblock Springer Science \& Business Media, 1991.

\bibitem[Kerrigan et~al.(2022)Kerrigan, Ley, and Smyth]{kerrigan2022diffusion}
Gavin Kerrigan, Justin Ley, and Padhraic Smyth.
\newblock Diffusion generative models in infinite dimensions.
\newblock \emph{arXiv preprint arXiv:2212.00886}, 2022.

\bibitem[Kim et~al.(2023)Kim, Villa, Parno, Marzouk, Ghattas, and
  Petra]{kim2023hippylib}
Ki-Tae Kim, Umberto Villa, Matthew Parno, Youssef Marzouk, Omar Ghattas, and
  Noemi Petra.
\newblock hippylib-muq: A bayesian inference software framework for integration
  of data with complex predictive models under uncertainty.
\newblock \emph{ACM Transactions on Mathematical Software}, 2023.

\bibitem[Kong et~al.(2021)Kong, Ping, Huang, Zhao, and
  Catanzaro]{DBLP:conf/iclr/KongPHZC21}
Zhifeng Kong, Wei Ping, Jiaji Huang, Kexin Zhao, and Bryan Catanzaro.
\newblock {DiffWave}: {A} versatile diffusion model for audio synthesis.
\newblock In \emph{9th International Conference on Learning Representations,
  {ICLR} 2021, Virtual Event, Austria, May 3-7, 2021}. OpenReview.net, 2021.
\newblock URL \url{https://openreview.net/forum?id=a-xFK8Ymz5J}.

\bibitem[Lee et~al.(2022)Lee, Lu, and Tan]{lee2022convergence}
Holden Lee, Jianfeng Lu, and Yixin Tan.
\newblock Convergence for score-based generative modeling with polynomial
  complexity.
\newblock In Alice~H. Oh, Alekh Agarwal, Danielle Belgrave, and Kyunghyun Cho,
  editors, \emph{Advances in Neural Information Processing Systems}, 2022.
\newblock URL \url{https://openreview.net/forum?id=dUSI4vFyMK}.

\bibitem[Li et~al.(2020)Li, Kovachki, Azizzadenesheli, Liu, Bhattacharya,
  Stuart, and Anandkumar]{li2020fourier}
Zongyi Li, Nikola Kovachki, Kamyar Azizzadenesheli, Burigede Liu, Kaushik
  Bhattacharya, Andrew Stuart, and Anima Anandkumar.
\newblock Fourier neural operator for parametric partial differential
  equations.
\newblock \emph{arXiv preprint arXiv:2010.08895}, 2020.

\bibitem[Lim et~al.(2023)Lim, Kovachki, Baptista, Beckham, Azizzadenesheli,
  Kossaifi, Voleti, Song, Kreis, Kautz, et~al.]{lim2023score}
Jae~Hyun Lim, Nikola~B Kovachki, Ricardo Baptista, Christopher Beckham, Kamyar
  Azizzadenesheli, Jean Kossaifi, Vikram Voleti, Jiaming Song, Karsten Kreis,
  Jan Kautz, et~al.
\newblock Score-based diffusion models in function space.
\newblock \emph{arXiv preprint arXiv:2302.07400}, 2023.

\bibitem[Mathieu et~al.(2023)Mathieu, Dutordoir, Hutchinson, De~Bortoli, Teh,
  and Turner]{mathieu2023geometric}
Emile Mathieu, Vincent Dutordoir, Michael~J Hutchinson, Valentin De~Bortoli,
  Yee~Whye Teh, and Richard~E Turner.
\newblock Geometric neural diffusion processes.
\newblock \emph{arXiv preprint arXiv:2307.05431}, 2023.

\bibitem[Nichol and Dhariwal(2021)]{nichol2021improved}
Alexander~Quinn Nichol and Prafulla Dhariwal.
\newblock Improved denoising diffusion probabilistic models.
\newblock In \emph{International Conference on Machine Learning}, pages
  8162--8171. PMLR, 2021.

\bibitem[Phillips et~al.(2022)Phillips, Seror, Hutchinson, De~Bortoli, Doucet,
  and Mathieu]{phillips2022spectral}
Angus Phillips, Thomas Seror, Michael Hutchinson, Valentin De~Bortoli, Arnaud
  Doucet, and Emile Mathieu.
\newblock Spectral diffusion processes.
\newblock \emph{arXiv preprint arXiv:2209.14125}, 2022.

\bibitem[Phung et~al.(2022)Phung, Dao, and Tran]{phung2022wavelet}
Hao Phung, Quan Dao, and Anh Tran.
\newblock Wavelet diffusion models are fast and scalable image generators.
\newblock \emph{arXiv preprint arXiv:2211.16152}, 2022.

\bibitem[Pidstrigach(2022{\natexlab{a}})]{10.1093/imanum/drac052}
Jakiw Pidstrigach.
\newblock {Convergence of preconditioned Hamiltonian Monte Carlo on Hilbert
  spaces}.
\newblock \emph{IMA Journal of Numerical Analysis}, page drac052, 10
  2022{\natexlab{a}}.
\newblock ISSN 0272-4979.
\newblock \doi{10.1093/imanum/drac052}.
\newblock URL \url{https://doi.org/10.1093/imanum/drac052}.

\bibitem[Pidstrigach(2022{\natexlab{b}})]{pidstrigach2022score}
Jakiw Pidstrigach.
\newblock Score-based generative models detect manifolds.
\newblock \emph{arXiv preprint arXiv:2206.01018}, 2022{\natexlab{b}}.

\bibitem[Sohl-Dickstein et~al.(2015)Sohl-Dickstein, Weiss, Maheswaranathan, and
  Ganguli]{sohl2015deep}
Jascha Sohl-Dickstein, Eric Weiss, Niru Maheswaranathan, and Surya Ganguli.
\newblock Deep unsupervised learning using nonequilibrium thermodynamics.
\newblock In \emph{International Conference on Machine Learning}, pages
  2256--2265. PMLR, 2015.

\bibitem[Song and Ermon(2020)]{song2020improved}
Yang Song and Stefano Ermon.
\newblock Improved techniques for training score-based generative models.
\newblock \emph{Advances in neural information processing systems},
  33:\penalty0 12438--12448, 2020.

\bibitem[Song et~al.(2021)Song, Sohl-Dickstein, Kingma, Kumar, Ermon, and
  Poole]{song2021scorebased}
Yang Song, Jascha Sohl-Dickstein, Diederik~P Kingma, Abhishek Kumar, Stefano
  Ermon, and Ben Poole.
\newblock Score-based generative modeling through stochastic differential
  equations.
\newblock In \emph{International Conference on Learning Representations}, 2021.
\newblock URL \url{https://openreview.net/forum?id=PxTIG12RRHS}.

\bibitem[Stuart(2010)]{stuart2010inverse}
Andrew~M Stuart.
\newblock Inverse problems: a {B}ayesian perspective.
\newblock \emph{Acta numerica}, 19:\penalty0 451--559, 2010.

\bibitem[Tashiro et~al.(2021)Tashiro, Song, Song, and Ermon]{tashiro2021csdi}
Yusuke Tashiro, Jiaming Song, Yang Song, and Stefano Ermon.
\newblock {CSDI}: Conditional score-based diffusion models for probabilistic
  time series imputation.
\newblock In A.~Beygelzimer, Y.~Dauphin, P.~Liang, and J.~Wortman Vaughan,
  editors, \emph{Advances in Neural Information Processing Systems}, 2021.
\newblock URL \url{https://openreview.net/forum?id=VzuIzbRDrum}.

\bibitem[Tsybakov(2009)]{T09}
Alexandre~B Tsybakov.
\newblock \emph{{Introduction to Nonparametric Estimation}}.
\newblock Springer series in statistics. Springer, Dordrecht, 2009.
\newblock \doi{10.1007/b13794}.
\newblock URL \url{https://cds.cern.ch/record/1315296}.

\bibitem[Vincent(2011)]{vincent2011connection}
Pascal Vincent.
\newblock A connection between score matching and denoising autoencoders.
\newblock \emph{Neural computation}, 23\penalty0 (7):\penalty0 1661--1674,
  2011.

\bibitem[Yang and Wibisono(2022)]{yang2022convergence}
Kaylee~Yingxi Yang and Andre Wibisono.
\newblock Convergence in {KL} and {R}\'enyi divergence of the unadjusted
  {L}angevin algorithm using estimated score.
\newblock In \emph{NeurIPS 2022 Workshop on Score-Based Methods}, 2022.
\newblock URL \url{https://openreview.net/forum?id=RSNMAMiPFTM}.

\bibitem[Zhang and Chen(2023)]{zhang2023fast}
Qinsheng Zhang and Yongxin Chen.
\newblock Fast sampling of diffusion models with exponential integrator.
\newblock In \emph{International Conference on Learning Representations}, 2023.
\newblock URL \url{https://openreview.net/forum?id=Loek7hfb46P}.

\end{thebibliography}

\end{document}